\documentclass{article}

\usepackage[left=1in, right=1in, top=1in, bottom=1in]{geometry}

\usepackage[utf8]{inputenc}                     
\usepackage{beton}

\usepackage{amsmath}
\usepackage{amsfonts}
\usepackage{amssymb}
\usepackage{amsfonts}       
\usepackage{nicefrac}       
\usepackage{microtype}      
\usepackage[T1]{fontenc}
\usepackage[dvipsnames]{xcolor}
\usepackage[
  pagebackref=true,
  colorlinks=true,
  citecolor=MidnightBlue,
  linkcolor=PineGreen,
  urlcolor=Sepia,
]{hyperref}
\hypersetup{
  pdftitle={Transformer Circuits Can Realize Clustering Algorithms},
  pdfauthor={Kenneth L. Clarkson, Lior Horesh, Takuya Ito, Charlotte Park, Parikshit Ram},
  pdfsubject={We show that transformer circuit mechanisms can instantiate exact algorithmic routines for clustering, while simultaneously providing a learnable model that surpasses established baselines.},
  pdfkeywords={transformer circuits, clustering, in-context learning}
}

\usepackage{mathtools}
\usepackage[ruled,vlined,linesnumbered]{algorithm2e}
\usepackage{xspace}
\usepackage{booktabs}
\usepackage[capitalize,noabbrev]{cleveref}
\usepackage{natbib}
\usepackage{amsthm}
\usepackage{mdframed}


\theoremstyle{plain}
\newtheorem{theorem}{Theorem}[section]
\newtheorem{lemma}{Lemma}[section]
\newtheorem{corollary}{Corollary}[section]
\theoremstyle{definition}
\newtheorem{definition}{Definition}[section]
\theoremstyle{remark}
\newtheorem{remark}{Remark}[section]

\usepackage{enumitem}
\setlist[itemize]{noitemsep, topsep=0pt, leftmargin=*}
\setlist[enumerate]{noitemsep, topsep=0pt, leftmargin=*}

\usepackage{graphicx}
\usepackage{subcaption}
\usepackage{wrapfig}
\usepackage[normalem]{ulem}

\def\R{{\mathbb{R}}}
\def\tilC{\tilde{C}}
\def\E{{\mathbb{E}}}
\def\gT{{\mathcal{T}}}

\newcommand{\cmark}{\textcolor{ForestGreen}{$\boldsymbol \checkmark$}}

\newcommand{\ed}{$\ell_2$\xspace}
\newcommand{\ip}{$\langle\!,\!\rangle$\xspace}
\newcommand{\inpr}[2]{\left \langle #1, #2 \right\rangle}
\newcommand{\eudi}[2]{\left\| #1 - #2 \right\|^2}

\def\attned#1{{{\mathcal{A}_2^{#1}}}}
\def\attnip#1{{{\mathcal{A}_{\langle,\rangle}^{#1}}}}
\def\lin{\mathtt{lin}}
\def\diag{{{\textsf{diag}}}}
\def\nmax{{\mathtt{NM}}}
\newcommand{\norm}[1]{\left\Vert#1\right\Vert}
\def\demb{{d_\textsf{emb}}}

\newcommand{\smallmat}[4]{\left[\begin{smallmatrix} #1 & #2 \\ #3 & #4\end{smallmatrix}\right]}
\newcommand{\flatmat}[2]{\left[\begin{smallmatrix} #1 & #2\end{smallmatrix}\right]}
\newcommand{\threemat}[3]{\left[\begin{smallmatrix} #1 \\ #2 \\ #3\end{smallmatrix}\right]}
\newcommand{\fourmat}[4]{\left[\begin{smallmatrix} #1 \\ #2 \\ #3\\ #4\end{smallmatrix}\right]}
\newcommand{\qa}[2]{\left[ \begin{matrix} #1 \\ #2 \end{matrix} \right]}
\def\DistPairs{{\mathtt{DP}}}
\newcommand{\perm}[2]{\pi_{#2}^{#1}}
\def\lnorm{{\text{\tt LN}}}
\DeclareMathOperator{\var}{Var}

\DeclareMathOperator*{\argmax}{arg\,max}
\DeclareMathOperator*{\argmin}{arg\,min}

\usepackage[toc,page,header]{appendix}
\usepackage{minitoc}


\title{Transformer Circuits Can Realize Clustering Algorithms}
\author{%
Kenneth L. Clarkson${}^\dag$,
Lior Horesh${}^\dag$,
Takuya Ito${}^\dag$,
Charlotte Park${}^\ddag$,
Parikshit Ram${}^\dag$.
\\
{\normalsize
${}^\dag${\sf IBM Research},
${}^\ddag${\sf MIT}.
}
\qquad \qquad
{\small
Email: \texttt{parikshit.ram@ibm.com}
}}
\date{}

\begin{document}
\begin{mdframed}[
  linecolor=ForestGreen!30!black,
  backgroundcolor=ForestGreen!10,
  topline=false,
  rightline=false,
  bottomline=false,
  leftline=false,
  innertopmargin=-20pt,
  splittopskip=0pt,
  innerbottommargin=20pt,
]
\maketitle

\begin{abstract}
Although transformers are most commonly optimized as statistical sequence models, it is unclear to what extent they can implement and learn exact algorithmic computations.
Here, we specify a transformer implementation from first principles that executes a fundamental and widely used method for $k$-means clustering: Lloyd's algorithm.
We theoretically prove and empirically demonstrate that this implementation of a transformer architecture, which we term the {\em $k$-means transformer}, exactly implements Lloyd's algorithm for $k$-means clustering using the standard circuit mechanisms of modern transformers: attention block, residual connections, and feed-forward block.
In learning experiments, we find that training this base architecture on $k$-means clustering yields a generalizable clustering algorithm that surpasses Lloyd's algorithm in terms of clustering quality.
Finally, we demonstrate that interpretable alterations (e.g., inclusion of layer normalizations) to this architecture yields diverse and novel variants of clustering algorithms, including soft $k$-means, spherical $k$-means, trimmed $k$-means.
Overall, our results show that transformer circuit mechanisms can instantiate exact algorithmic routines for clustering, while simultaneously providing an effective learnable model.
\end{abstract}
\end{mdframed}
\doparttoc 
\faketableofcontents 


\section{Introduction} \label{sec:intro}
Clustering is one of the most fundamental and widely studied problems in computer science, dating back to the mid-20th century \citep{lloyd1982least, macqueen1967some}.
Classic problems (such as, $k$-means clustering), which aim to group data points into meaningful clusters, have impacted diverse fields ranging from biology and ecology to economics, social studies, and machine learning~\citep{hartigan1975clustering, aggarwal2013data}.
Despite their widespread utility, clustering problems are computationally challenging, shown to be NP-hard, even when the number of clusters is two~\citep{dasgupta2008hardness}.
The combination of their widespread impact and inherent hardness has driven decades of research into their underlying theory and methodology.

\begin{figure*}[t]
\centering
\includegraphics[clip, trim=0in 0.1in 0in 0in, width=\textwidth]{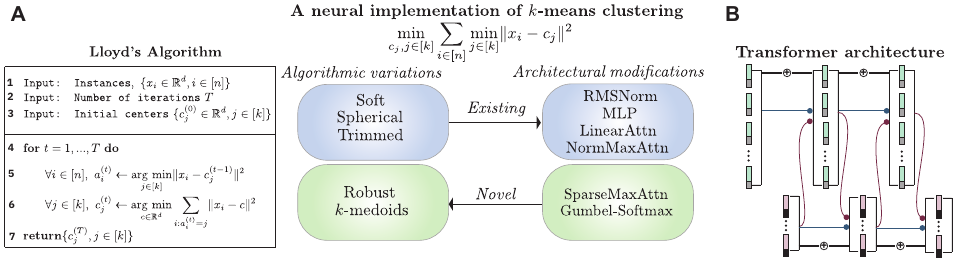}
\caption{{\bf Overview.}
For the $k$-means clustering problem ({\bf top}; \cref{eq:kmeans-dc}) -- a discrete optimization task commonly solved via Lloyd’s algorithm {\bf (A)} -- we introduce a precise transformer architecture {\bf (B)} that exactly implements Lloyd’s iteration using standard components, including self-attention, cross-attention, and residual connections.
Specifying precise algorithmic subroutines with transformer circuit mechanisms allows us to (i)~implement variations of $k$-means clustering, such as spherical and soft $k$-means ({\bf center left blue box)}, by modifying transformer mechanisms ({\bf center right blue box)}, and further (ii)~devise novel (and interpretable) $k$-means variations ({\bf center left green box}) by incorporating existing transformer mechanisms such as sparse and Gumbel attention ({\bf center right green box}).
}
\label{fig:overview}
\end{figure*}

Despite the historical prevalence of clustering algorithms, the rise of deep learning has significantly altered the recent landscape of machine learning research.
Deep learning architectures, and the transformer in particular~\citep{vaswani2017attention}, excel at learning hierarchical features from raw data, reducing the need to explicitly identify clusters from data features.
However, exactly how transformer mechanisms, such as attention, normalization, feed-forward layers, residual connections, contribute to its precise algorithmic expressivity remains largely unclear.

In this study, we bridge classically studied discrete clustering algorithms with modern neural architectures (\Cref{fig:overview}), offering a clear and interpretable approach to investigating how transformer circuit mechanisms map onto discrete operations of a widely used classical scheme: Lloyd's algorithm for $k$-means clustering~\citep{lloyd1982least}.

\paragraph{Contributions.}
Specifically, we present the following:
\begin{itemize}
\item Given a set of data points and random initial centers, we show that one layer of an encoder-decoder {\em attention-only} transformer with {\em Euclidean distance-based attention scores} exactly performs a single iteration of Lloyd's algorithm in its forward pass (\Cref{thm:l2-clus-eu}).
\item We show that this base architecture, when trained on a distribution of clustering tasks, outperforms Lloyd's algorithm (by a significant margin) for new clustering tasks.
\item We present an alternative transformer architecture that uses both the standard dot-product attention and feed-forward blocks of conventional transformers to exactly execute a single iteration of Lloyd’s algorithm in a single forward pass with one layer (\Cref{thm:l2-clus-ip-informal}).
\item We show that expressing a clustering algorithm through transformer circuit mechanisms facilitates the design of additional -- and in some cases novel -- clustering algorithms using transformer components such as token-wise normalization~\citep{ba2016layer, zhang2019root} or alternate attention activations, like linear or sparse attention~\citep{martins2016softmax} (\Cref{sec:results:general}).
\end{itemize}

\section{Results} \label{sec:results}
We begin by describing Lloyd's algorithm for $k$-means clustering. Then we introduce the $k$-means transformer, and show that it exactly realizes Lloyd's algorithm both theoretically and empirically, and can be trained to execute a better clustering algorithm than Lloyd's.
Finally, we show that variations to the architecture are mathematically equivalent to specific existing and novel clustering algorithms.

\paragraph{Notation.}
We denote the index set as $[m] \triangleq \{1, \ldots, m\}$.
Scalars and vectors are denoted by lowercase italics (e.g., $x$, $y$). We will specify when $x$ is a scalar or vector.
We assume vectors are column vectors, and their transpose $x^\top$ a row vector.
The $i$-th entry in the vector $a$ is denoted by~$a^i$.
Matrices are denoted as uppercase italics, e.g., $A \in \mathbb{R}^{m \times n}$, with $A^{ij}$ denoting its $(i,j)$-th entry.
We use $I_d$ to denote the $(d \times d)$ identity matrix.
We specify our attention projection matrices compactly using the following notation:
For integers $1 \le i\le j\le d$, $I_d^{i:j}$ denotes a linear operator on vectors $x\in\R^{d}$ such that $(I_d^{i:j}x)^k = x^k$ for $i\le k\le j$, and zero otherwise: $I_d^{i:j}x$ zeroes out the $m$-th entries of $x$ with indices $m \in [1,i) \cup (j, d]$, leaving the rest unchanged.~\footnote{We can implement $I^{i:j}_d$ as a $(d \times d)$ diagonal matrix with entries $(I_d^{i:j})^{kk}=1$ for $k \in [i,j]$, and all other entries as zeros.}
When $j=d$, we use $I_d^{i:}$, and when $i=1$, we use $I_d^{:j}$.

\subsection{$k$-means clustering and Lloyd's algorithm} \label{sec:lloyd}

Given a set of points $X = \{x_i \in \R^d, i \in [n] \}$  in a $d$-dimensional Euclidean space, $k$-means clustering involves solving the following discrete optimization problem to find a set of possible centers $C = \{ c_1, \ldots, c_k \} \subset \R^d, |C|=k$ by minimizing the following objective:
\begin{equation}\label{eq:kmeans-dc}
\min_{c_1, c_2, \ldots, c_k \in \R^d}
\sum_{i = 1}^n \min_{j = 1, \ldots, k} \eudi{x_i}{c_j}.
\end{equation}
This is a well-studied combinatorial optimization problem. Lloyd's algorithm for $k$-means clustering (\Cref{fig:overview}A), often referred to as the ``$k$-means clustering algorithm'', solves the discrete optimization in \cref{eq:kmeans-dc} (given some initial centers) in an iterative manner (\cref{fig:overview}A, lines 4-6) by using the conditions mentioned: it alternately (i)~assigns the points to clusters for fixed centers (\cref{fig:overview}A, line 5), and (ii)~updates cluster centers given cluster assignments (\cref{fig:overview}A, line 6) in each iteration. Both these alternating steps minimize the $k$-means objective with respect to one set of optimization variables (assignments or centers) while fixing the other set (centers or assignments respectively).
In the following subsection, we present {\em a transformer that exactly implements Lloyd's algorithm in-context}.

\begin{figure*}[!!t]
\centering
\includegraphics[clip, trim=0.03in 0.05in 0.14in 0.125in, width=\textwidth]{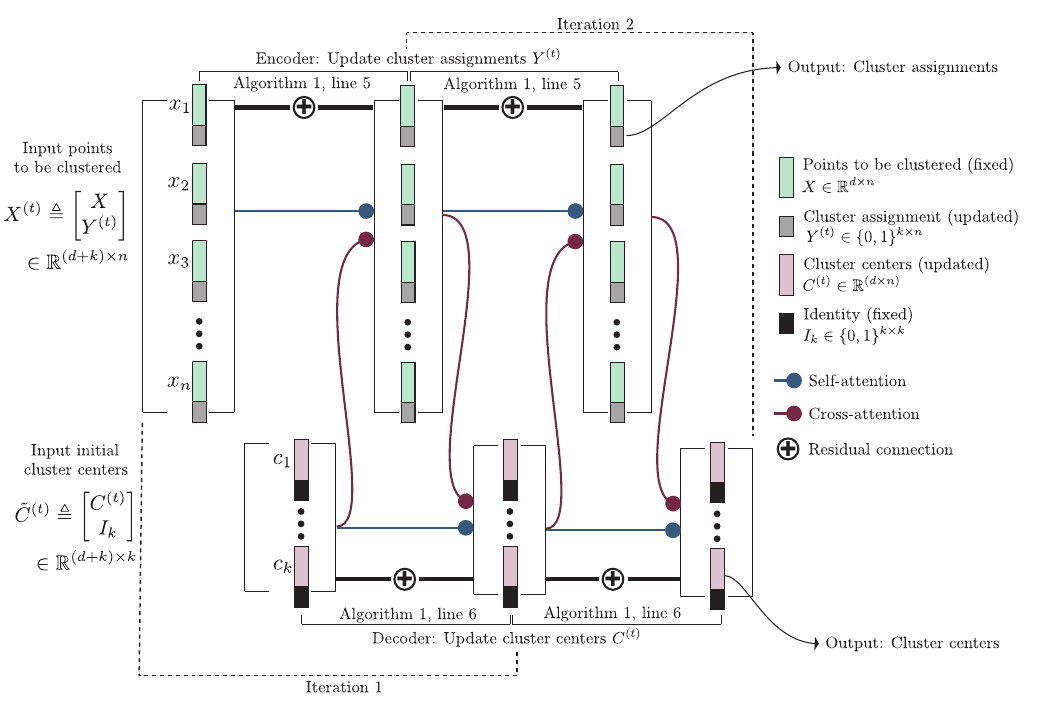}
\caption{{\bf The $k$-means transformer.}
We depict the architectural diagram that uses only attention and residual connections.
The more general architecture is in \Cref{fig:gen-arch}.
The inputs to the transformer are the set of $n$ points in $X\in\R^{d \times n}$ (\textcolor{SeaGreen!60!white}{$\blacksquare$}), and the initial cluster centers $C^{(0)} \in \R^{d \times k}$ (\textcolor{Thistle!50!gray}{$\blacksquare$}) tokenized as $X^{(0)}$ {\scriptsize $\left(\begin{array}{@{}c@{}}\textcolor{SeaGreen!60!white}{\blacksquare}\\[-2.7pt]\textcolor{gray}{\blacksquare}\end{array}\right)$} and $\tilC^{(0)}$ {\scriptsize $\Big(\begin{array}{@{}c@{}}\textcolor{Thistle!50!gray}{\blacksquare}\\[-2.7pt]\blacksquare\end{array}\Big)$} respectively; initial cluster assignments $Y^{(0)} \in \{0, 1\}^{k \times n}$ are set to zero. Each layer consists of (i)~residual connections ($\boldsymbol{\oplus}$), (ii)~encoder-self (\textcolor{MidnightBlue}{$-\!\!\bullet$}) and cross-attention (\textcolor{RawSienna}{$-\!\!\bullet$}), followed by (iii)~a decoder self- and cross-attention, and realizes a single iteration of Lloyd's algorithm (see Fig. \ref{fig:overview}A); we depict two such iterations/layers here. The encoder update step (\cref{eq:l2-trf-updates}, top) replaces the old cluster assignments $Y^{(t)}$  (\textcolor{gray}{$\blacksquare$}) in the encoder tokens $X^{(t)}$ with the new cluster assignments $Y^{(t+1)}$ to get the next layer encoder tokens $X^{(t+1)} \gets$~{\tiny $\qa{X}{Y^{(t+1)}}$}. The decoder update step (\cref{eq:l2-trf-updates}, bottom) replaces the old cluster centers $C^{(t)}$ in the token embeddings $\tilC^{(t)}$ with the new cluster centers $C^{(t+1)}$ to get the next layer decoder tokens $\tilC^{(t+1)}\gets$~{\tiny $\qa{C^{(t+1)}}{I_k}$}.}
\label{fig:architecture}
\end{figure*}

\subsection{A transformer implementation of Lloyd's algorithm} \label{sec:results:lloyds}

The original encoder-decoder transformer~\citep{vaswani2017attention} takes a sequence of tokens as input and decodes an output sequence of tokens.
The main components of a single {\em transformer block} are (i)~the self-attention within  the encoder and decoder, (ii)~the cross-attention between the two, (iii)~residual connections, (iv)~per-token normalization (such as LayerNorm~\citep{ba2016layer}), and (v)~per-token nonlinear transformation via a single hidden-layer feed-forward network (FFN).
Here we simplify the original transformer architecture to implement Lloyd's algorithm: the primary ingredients only require residual connections, self-attention, and cross-attention.~\footnote{Token-wise normalization and FFNs are necessary for specific variations of $k$-means clustering, which we detail later.}
Given a set of points and initial centers, each layer of our proposed transformer exactly performs a single iteration of Lloyd's algorithm (\Cref{fig:overview}A, lines 5 and 6) in its forward-pass, and $T$ such layers exactly execute Lloyd's algorithm for $T$ iterations; see the schematic of our proposed $k$-means transformer in \Cref{fig:architecture}.

In the $k$-means transformer, the set $X$ of points $x_i$ are processed in the encoder, while the cluster centers $c_1, \ldots, c_k$ are iteratively updated in the decoder.
While the standard transformer makes use of dot-product based attention scores (that is, $\inpr{Q x}{K x'}$ for query vector $x$ and key vector $x'$ with corresponding projection matrices $Q, K$)~\citep{vaswani2017attention}, below we consider negative squared Euclidean distance based attention scores (that is, $-\eudi{Qx}{Kx'}$)~\citep{tsai2019transformer}.
One critical step in the proposed methodology is the use of a limiting version of the standard soft-max attention, where the soft-max temperature approaches zero (see \Cref{sec:methods:selective}).
This limiting soft-max attention is different from the arg-max or hard-max attention, and has been studied as {\em averaging hard attention}~\citep{strobl2024formal}.

Below, we show how to implement Lloyd's algorithm with a precise choice of parameters.
For this, we define specific embeddings of points and cluster centers, and make use of the attention mechanism and residual connections.
To this end, we present three critical components: (i)~the process of embedding the input (points to be clustered and the initial cluster centers), (ii)~the form of the selective attention mechanism necessary for clustering, and (iii)~the specific encoder-decoder architecture and model parameters with self/cross-attention and residual connections.

\paragraph{Input embeddings.}
We embed each point $x_i \in \R^d$ into a $(d+k)$-dimensional vector $[x_i^\top, y_i^\top]^\top$ for a $y_i \in \{0, 1\}^k$, where the last~$k$ dimensions are placeholders for the (evolving) one-hot cluster assignment of each point.
Initially, we set $y_i = 0$ for all $i \in [n]$.
Arranging these $n$ embeddings as $(d+k)$-dimensional columns gives us the initial encoder embeddings $X^{(0)} \in \R^{(d+k) \times n}$, which we can also write as {\footnotesize $\qa{X}{Y^{(0)}}$},
where $X\in\R^{d\times n}$ is the input set of points arranged as a matrix with $x_i \in X$ as the $i$-th column in $X$, and $Y^{(0)}\in \{0,1\}^{k\times n}$ is the matrix of cluster assignments (as one-hot encodings).
We also embed the initial centers $c_j \in \R^d$ into a $(d+k)$-dimensional vector $[c_j^\top, e_j^\top]^\top$, where $e_j \in \{0, 1\}^k$ is the $j$-th characteristic vector (a one-hot vector with the $j$-th entry equal to 1), and the last $k$ dimensions denote the index of the cluster.
The first $d$ dimensions are placeholders for the (evolving) cluster centers.
Stacking the per-center embeddings gives us the initial decoder embeddings $\tilC^{(0)} \in \R^{(d+k) \times k}$, which we can write as $\tilC^{(0)} \triangleq$~{\footnotesize$\qa{C^{(0)}}{I_k}$}.
These embeddings evolve through the $k$-means transformer such that the embedding matrices input to the $(t+1)$-th layer are (also see \cref{fig:architecture} left):
\begin{align}\label{eq:ed-iterates}
& \tilC^{(t)} \triangleq \qa{C^{(t)}}{I_k}, \quad
  X^{(t)}  \triangleq \qa{X}{Y^{(t)}},
\end{align}
where $Y^{(t)}{}^{ji}=1$ if $x_i$ is assigned to the $j^{\text{th}}$ cluster and $0$ otherwise, and $C^{(t)}$ are the cluster centers computed after $t$ Lloyd's iterations (\cref{fig:overview}A, lines 5-6). Thus the embedding size $\demb$ for this transformer is $\demb = d + k$.

\paragraph{Attention mechanism for clustering.}
The attention operation is applied with $n$ queries $X\in\R^{\demb \times n}$ to $m$ keys $Z\in\R^{\demb \times m}$, using (learnable) attention projection matrices $Q, K, V \in \R^{\demb \times \demb}$, by computing
\begin{equation}\label{eq:attn-def}
\attnip\gamma(X, Z; Q, K, V) \triangleq VZ\sigma_\gamma(A) \in \R^{\demb \times n},
\end{equation}
where $A = (KZ)^\top (QX) \in \R^{m \times n}$ is the usual pre-softmax attention score matrix, and $\sigma_\gamma$ is the softmax function applied column-wise on $A$.
Note that in standard transformers, the $(j,i)$-th entry of the attention score matrix has $A^{ji} = \inpr{K z_j }{ Q x_i}$.
However, for Euclidean $k$-means, we define the attention score as $A^{ji} = -\eudi{K z_j }{ Q x_i}$, which we denote as $\attned\gamma(\cdots)$.
We denote standard dot product attention as $\attnip\gamma(\cdots)$.

As $\gamma\to\infty$, the vector $\sigma_\gamma(a)$ becomes concentrated on the indices of the largest coordinates of~$a$.
We define $\sigma_\infty(a) \triangleq \lim_{\gamma \to \infty} \sigma_\gamma(a)$, the limiting soft-max.
When $\gamma=\infty$, we omit it, and have $\attnip{}(\cdots)$ and $\attned{}(\cdots)$.
Further details on selective attention for clustering can be found in \Cref{sec:methods:selective}.
While we are explicitly discussing the inverse temperature $\gamma$ here, it is easy to see that this $\gamma$ can be easily absorbed into the $Q,K$ projection matrices, for example, by setting $Q \gets \sqrt{\gamma}Q$ and $K \gets \sqrt{\gamma} K$.

\paragraph{Architectural connections and parameters.}
Consider the $(t+1)$-th transformer layer, with $X^{(t)}$ and $\tilC^{(t)}$ denoting the input to the $(t+1)$-th encoder and decoder layer respectively (see \cref{eq:ed-iterates}).
To update the cluster assignments and the cluster centers as in Lloyd's algorithm, that is, to obtain $X^{(t+1)}$ and $\tilC^{(t+1)}$ from $X^{(t)}$ and $\tilC^{(t)}$, we compute
\begin{equation} \label{eq:l2-trf-updates}
\begin{split}
X^{(t+1)} & \gets
\colorbox{gray!40}{$
X^{(t)} +
$\!}
\ \
\colorbox{RawSienna!40}{$
\attned{}(X^{(t)}, \tilC^{(t)}, Q_1, K_1, V_1)
$}
+
\colorbox{MidnightBlue!40}{$
\attned{}(X^{(t)}, X^{(t)}, Q_2, K_2, V_2)
$}
\\
\tilC^{(t+1)} & \gets
\colorbox{gray!40}{$
\tilC^{(t)} +
$\!}
\ \
\colorbox{RawSienna!40}{$
\attnip{}(\tilC^{(t)}, X^{(t+1)}, Q_3, K_3, V_3)
$}
+
\colorbox{MidnightBlue!40}{$
\attnip{}(\tilC^{(t)}, \tilC^{(t)}, Q_4, K_4, V_4)
$},
\end{split}
\end{equation}
where, for both the encoder and decoder tokens, the first terms on the right-hand side are the \colorbox{gray!40}{\!residual connections\!}, the second terms are the \colorbox{RawSienna!40}{\!cross-attention\!}, and the last terms are the \colorbox{MidnightBlue!40}{self-attention\!} (\Cref{fig:architecture}).
Note that only the encoder self- and cross-attention updates make use of the negative squared Euclidean distance based attention scores $\attned{}(\cdots)$~\footnote{We will show how the encoder updates can be obtained with dot-product attention $\attnip{}(\cdots)$ in \cref{sec:results:dp-ffn}.}; the decoder attention updates utilize the standard dot-product attention scores $\attnip{}(\cdots)$.

Here, the $\{(Q_\mu, K_\mu, V_\mu), \mu = 1,\ldots, 4\}$ correspond to the parameters of the transformer (i.e., the queries, keys, and values projection matrices), and {\em are shared across all layers $t \in [T]$}.
We set them as follows:
\begin{equation} \label{eq:trf-param-vals}
\begin{split}
& K_1 = Q_1 = K_2 = Q_2 = I_{\demb}^{:d}, \  V_1 = -V_2 = I_{\demb}^{d+1:}, \\
& Q_3 = K_3 = Q_4 = K_4 = I_{\demb}^{d+1:}, \ V_3 = -V_4 = I_{\demb}^{:d}.
\end{split}
\end{equation}
The input embeddings, attention mechanism and transformer architecture and parameters are set such that:
\begin{enumerate}
\item The first $d$ dimensions of the encoder tokens $X^{(t)}$ remain unchanged, while the last $k$ dimensions are updated after a forward pass to reflect new cluster assignments. These cluster assignments are represented as one-hot vectors.
\item The last $k$ dimensions of the decoder tokens $\tilC^{(t)}$ stay fixed, while the first $d$ dimensions are updated after a forward pass through the transformer to reflect new centers.
\item The encoder cross-attention computes the update to the last $k$ dimensions of $X^{(t)}$, while the decoder cross-attention computes the update to the first $d$ dimensions of $\tilC^{(t)}$ (see \Cref{sec:methods:selective} for further details).
\item The encoder (decoder) self-attention exactly computes the additive inverse of the current last $k$ dimensions (first $d$ dimensions) of $X^{(t)}$ tokens ($\tilC^{(t)}$ tokens), and when combined with the residual connection, resets the current cluster assignments (cluster centers) to zero. This allows the cross-attention to update the encoder (decoder) tokens as per Lloyd's algorithm.
\end{enumerate}
More precisely, we prove the following:
\begin{theorem}\label{thm:l2-clus-eu}
Given as input a set of points $X = \{x_1, \ldots, x_n\}$ and initial centers $C = \{c_1^{(0)}, \ldots, c_k^{(0)} \}$, arranged in columns of matrices $X\in\R^{d\times n}$ and $C\in\R^{d\times k}$, and a $T$-layer transformer parameterized with shared $\{(Q_\mu, K_\mu, V_\mu), \mu = 1, \ldots, 4\}$ as specified in \cref{eq:trf-param-vals}, utilizing the updates in \cref{eq:l2-trf-updates}, the output of the $T^{\text{th}}$ transformer layer exactly matches the output of Lloyd's algorithm (\Cref{fig:overview}A) after $T$ iterations.
\end{theorem}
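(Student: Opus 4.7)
The plan is to proceed by induction on the layer index $t$, maintaining the hypothesis that $X^{(t)}=\qa{X}{Y^{(t)}}$ and $\tilC^{(t)}=\qa{C^{(t)}}{I_k}$, where $Y^{(t)}$ is the one-hot encoding of Lloyd's $t$-th assignment and $C^{(t)}$ collects Lloyd's $t$-th centers. The base case $t=0$ holds by the embedding construction in Box 1A (with $Y^{(0)}=0$). The inductive step examines the four attention terms in \cref{eq:l2-trf-updates} under the parameter choices of \cref{eq:trf-param-vals}; the key structural observation is that each $Q_\mu,K_\mu$ restricts to exactly one of the two blocks of the $(d+k)$-dimensional embedding while $V_\mu$ reads off the complementary block, so attention scores depend only on one block and updates land only in the other, which is precisely why the residual carries the untouched block through unchanged.

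For the encoder update, in $\attned{}(X^{(t)},\tilC^{(t)};Q_1,K_1,V_1)$ the choice $Q_1=K_1=I_{d+k}^{:d}$ makes the score $A^{ji}=-\eudi{c_j^{(t)}}{x_i}$, so $\sigma_\infty$ concentrates on $j^\star_i\triangleq\arg\min_j \eudi{c_j^{(t)}}{x_i}$; since $V_1=I_{d+k}^{d+1:}$ reads off the last-$k$ block of $\tilC^{(t)}$, which is $I_k$, the $i$-th cross-attention output column is $\qa{0}{e_{j^\star_i}}$, exactly the new Lloyd assignment (line 5 of \Cref{alg:lloyds}). In the self-attention $\attned{}(X^{(t)},X^{(t)};Q_2,K_2,V_2)$ the scores are $-\eudi{x_\ell}{x_i}$, whose $\sigma_\infty$ is supported on the set of indices whose points coincide with $x_i$; because Lloyd's assigns duplicate points identically, those columns of $Y^{(t)}$ all equal $y_i^{(t)}$, and with $V_2=-V_1$ the output column is $\qa{0}{-y_i^{(t)}}$. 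Adding these two contributions to the residual $\qa{x_i}{y_i^{(t)}}$ produces $\qa{x_i}{e_{j^\star_i}}$ as the $i$-th column of $X^{(t+1)}$.

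For the decoder update, $\attnip{}(\tilC^{(t)},X^{(t+1)};Q_3,K_3,V_3)$ with $Q_3=K_3=I_{d+k}^{d+1:}$ yields score $A^{ij}=\inpr{y_i^{(t+1)}}{e_j}\in\{0,1\}$, the indicator of $x_i$ being assigned to cluster $j$; $\sigma_\infty$ on column $j$ is therefore uniform over that indicator set, and $V_3=I_{d+k}^{:d}$ reads off $X$, so the $j$-th output column is $\qa{c_j^{(t+1)}}{0}$ with $c_j^{(t+1)}$ the Lloyd centroid of cluster $j$ (line 6). The decoder self-attention has scores $\inpr{e_\ell}{e_j}=\delta_{\ell j}$ and $V_4=-V_3$, giving column $\qa{-c_j^{(t)}}{0}$; combined with the residual $\qa{c_j^{(t)}}{e_j}$ and the cross-attention output, this produces $\qa{c_j^{(t+1)}}{e_j}$ as the $j$-th column of $\tilC^{(t+1)}$, closing the induction and giving the one-layer-to-one-iteration correspondence.

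The main obstacle will be the degenerate cases where $\sigma_\infty$ is not unambiguously a limit: ties in the encoder cross-attention when a point is equidistant to multiple centers, and the all-zero column case in the decoder cross-attention when some cluster is empty. These are resolved by fixing a consistent tiebreaking rule that both $\sigma_\infty$ and Lloyd's algorithm obey (so the limiting softmax still returns a valid one-hot) and by adopting a standard convention for empty clusters (such as leaving the center unchanged, encoded by a suitable mask or by the convention also used in \Cref{alg:lloyds}); under these conventions the per-layer equivalence holds verbatim, and iterating $T$ times delivers the claim.
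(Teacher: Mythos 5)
Your proposal is correct and follows essentially the same route as the paper's proof: a per-layer computation of the four attention terms under the block-structured $Q,K,V$ choices, showing that the encoder cross/self-attention plus residual writes the new one-hot assignments while the decoder cross/self-attention plus residual writes the new centroids, iterated over $T$ layers. You are in fact slightly more careful than the paper on the degenerate cases (duplicate points, ties, and empty clusters, where the paper simply asserts $\sigma_\infty(\DistPairs(X,X))=I_n$ and omits the proof of the supporting lemma), which is a welcome refinement rather than a deviation.
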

The proof uses the values of $\{(Q_\mu, K_\mu, V_\mu), \mu = 1, \ldots, 4\}$ specified in \cref{eq:trf-param-vals}, and the properties of limiting soft-max attention to show the equivalence between the transformer update and execution of the Lloyd's algorithm.
We provide further technical details and the proof in \Cref{sec:methods:icll}.
This $k$-means transformer is able to exactly implement Lloyd's algorithm with a practically-sized model:
\begin{corollary}\label{rem:trf-size}
The size of this $k$-means transformer architecture implementing Lloyd's algorithm is $O(d+k)$.
\end{corollary}
Thus, for a $k$-clustering problem with $n$ points in $d$ dimensions, the size of the transformer is only additive in $d$ and $k$ --- that is $O(d+k)$ --- which is smaller than the $O(dk)$ size of the output cluster centers and smaller than the $O(d(n+k))$ size of the input consisting of $n$ points and $k$ initial centers.

The transformer implementation of Lloyd's algorithm assumes the theoretical limiting soft-max attention.
We therefore study conditions under which exact numerical correspondences can be obtained between the $k$-means transformer and Lloyd's algorithms for a finite $\gamma$.
We first illustrate this using points $x_i \in \mathbb{R}^2$ to aid in visualization, and $k=5$ clusters.
Specifically, for very large, yet finite $\gamma = 10^4$, we find that each layer of the $k$-means transformer exactly matches Lloyd's algorithm across 10 iterations (i.e., 10 transformer layers) (\Cref{fig:numerical_validation}A,D).
In \Cref{fig:numerical_validation}B,E we demonstrate that the trajectories of the cluster centers evolve identically across Lloyd's algorithm and the $k$-means transformer, resulting in identical final cluster assignments (\Cref{fig:numerical_validation}C,F).
We perform a thorough analysis of parity between the transformer and Lloyd's algorithm with varying problem parameters such as number of points, dimensions, number of clusters, and data variance in \cref{asec:num-val}.
\begin{figure}[t]
\centering
\includegraphics[clip, trim=0.15in 0 0.15in 0, width=\columnwidth]{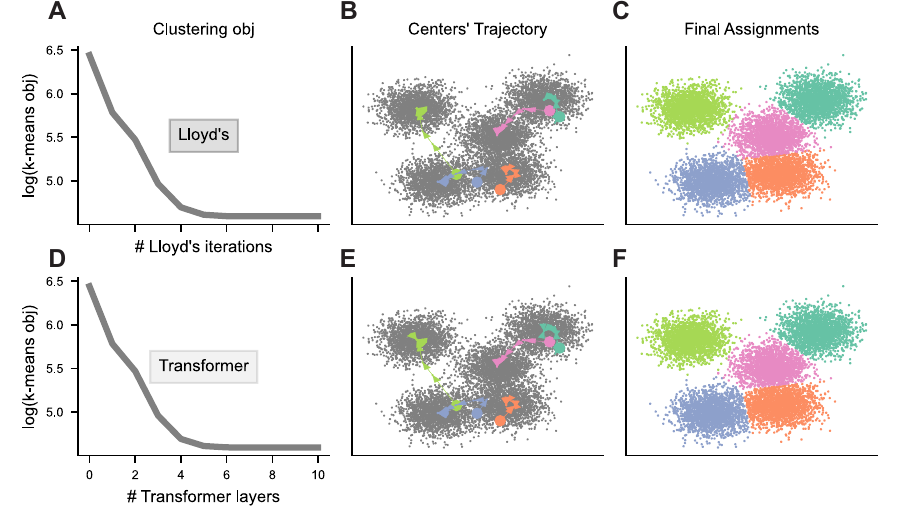}
\caption{{\bf Numerical validation.} We compare the $k$-means transformer to Lloyd's algorithm in $d=2$ dimensions with $n=10000$ points and $k=5$ clusters using $\gamma=10^4$. We present:
{\bf (A-C)}~The objective function, evolution of cluster centers, and final assignments after 10 iterations of Lloyd's algorithm.
{\bf (D-F)}~Same as {\bf (A-C)}, but after forward-pass through 10 layers of the $k$-means transformer.
In panels {\bf B} and {\bf E}, initial cluster centers are the circles $\bullet$ while the evolving cluster centers are shown as $\blacktriangle$.
}
\label{fig:numerical_validation}
\end{figure}
\subsection{Training the transformer to cluster} \label{sec:results:train}

The previous result presents a transformer architecture with precise weights that exactly realizes Lloyd's algorithm.
How would this transformer architecture behave when its weights are learned by optimizing a clustering objective?
Here, given a set of clustering tasks, we learn the transformer parameters $\{(Q_\mu, K_\mu, V_\mu \in \R^{\demb\times\demb}), \mu = 1, \ldots 4\}$ used in \cref{eq:l2-trf-updates}.
We consider clustering tasks consisting of $n$ points $X \in \R^{d \times n}$ sampled from a $d$-dimensional mixture of Gaussians.
Each task is generated from a different mixture of Gaussians.
We randomly select $k$ of the $n$ points as the initial centers $C^{(0)} \in \R^{d \times k}$.
We train a one-transformer layer with randomly initialized weights and $\gamma = 1.0$ by minimizing the (smoothed upperbound of the) $k$-means objective in \cref{eq:kmeans-dc}, averaged across all tasks.
See \cref{asec:trf-train} for further experimental details and results.

\begin{figure*}[t]
\centering
\begin{subfigure}{0.3\textwidth}
\centering
\includegraphics[width=\textwidth]{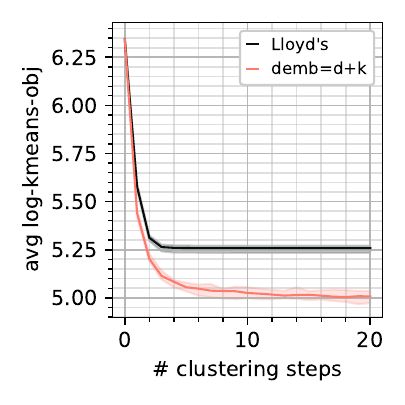}
\caption{$\demb\!=\!d\!+\!k$, random init}
\label{fig:trf-train:multi-step}
\end{subfigure}
~
\begin{subfigure}{0.3\textwidth}
\centering
\includegraphics[width=\textwidth]{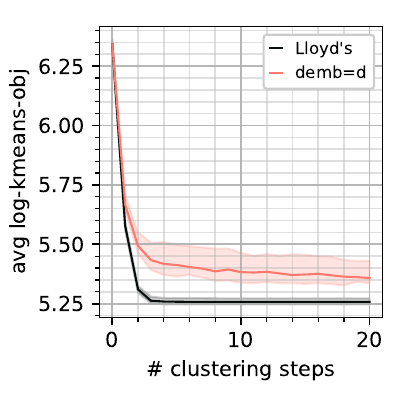}
\caption{$\demb\!=\!d$, random init}
\label{fig:trf-train:multi-step-na}
\end{subfigure}
~
\begin{subfigure}{0.3\textwidth}
\centering
\includegraphics[width=\textwidth]{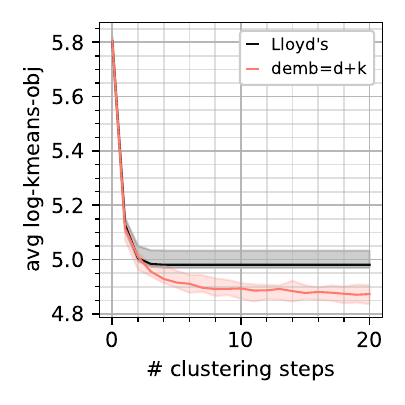}
\caption{$\demb\!=\!d\!+\!k$, $k$-means{\tt+\!+} init}
\label{fig:trf-train:multi-step-kmp}
\end{subfigure}
~
\vskip 0.1in
\begin{subfigure}{\textwidth}
\centering
\includegraphics[width=\textwidth]{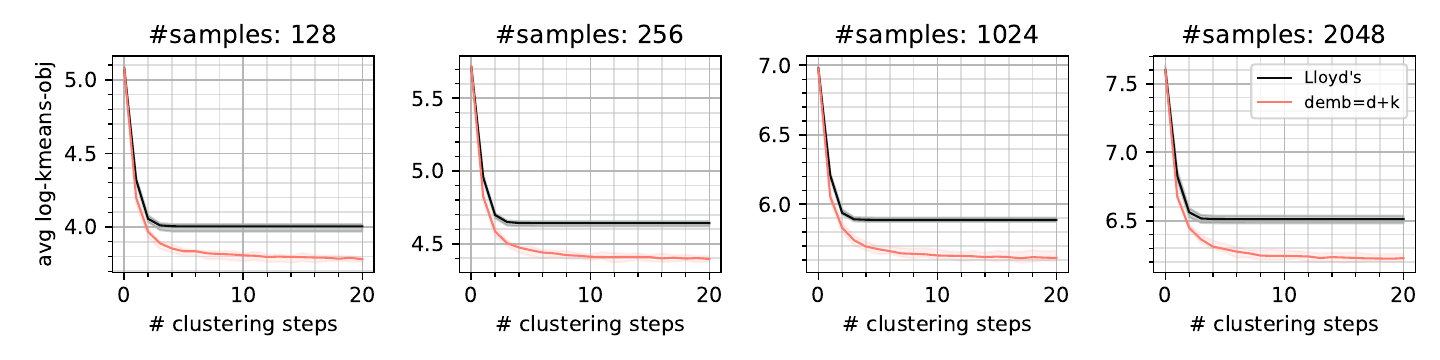}
\caption{Different number of points $n$ in the clustering task. $\demb = d + k$, random init}
\label{fig:trf-train:vary-nsamples}
\end{subfigure}
\caption{{\bf Clustering effectiveness of the trained $k$-means transformer.} We evaluate the trained $k$-means transformer against Lloyd's algorithm for 320 unseen clustering tasks with $d=32$ dimensions and $k=10$ clusters for 20 clustering steps, starting from the same initial cluster centers. The lines represent the average log of the $k$-means objective in \cref{eq:kmeans-dc} ({\em lower is better}), while the ribbons indicate their min-max range over the 320 clustering tasks. Beyond the base comparison in \cref{fig:trf-train:multi-step} with $n=512$ points, $\demb = d+k$, and randomly initialized centers, we consider additional settings such as (i)~a different $\demb=d$ in \cref{fig:trf-train:multi-step-na}, (ii)~$k$-means{\tt+\!+} center initialization in \cref{fig:trf-train:multi-step-kmp}, and (iii)~varying number of points $n$ in the clustering task in \cref{fig:trf-train:vary-nsamples}.}
\label{fig:trf-train}
\end{figure*}

We train this one-layer transformer model (for single-step clustering) until convergence, and then utilize it repeatedly to perform multi-step clustering.
We evaluate this trained model on 320 new clustering tasks (from the same task distribution), and compare it to Lloyd's algorithm for the same number of clustering steps. \cref{fig:trf-train:multi-step} shows that the {\em trained transformer significantly outperforms the $k$-means objective obtained by Lloyd's algorithm}, converging to an average log-$k$-means-objective of $\sim$5 compared to 5.25 obtained by Lloyd's. This highlights the ability of our proposed architecture to learn highly effective clustering algorithms even when using standard soft-max attention with $\gamma = 1$ (as opposed to limiting soft-max attention with $\gamma \to \infty$).

To evaluate whether the transformer learns meaningful clustering mechanisms, we apply this learned transformer to clustering problems of varying sizes. We use the transformer trained on $d=32$-dimensional clustering tasks with $n=512$ samples and $k=10$ clusters. In \cref{fig:trf-train:vary-nsamples}, we evaluate this trained transformer on unseen clustering tasks, varying the number of samples ranging from 128 (smaller than the training tasks) to 2048 (4$\times$ the size of the training tasks). We find that this trained transformer still outperforms Lloyd's algorithm.

In our proposed model (where we chose the weights), we use $\demb = (d+k)$-dimensional token embeddings to encode points with cluster assignments, and cluster centers with cluster indices.
To understand the importance of the choice of $\demb = d + k$ to clustering performance, we consider an alternative transformer model with $\demb = d$ dimensional token embeddings (dropping the additional $k$ dimensions). After training the model on the same set of clustering tasks, we find that the model with reduced $\demb$ is unable to match the performance of Lloyd's algorithm, converging to an average log-$k$-means-objective of $\sim$5.35 compared to 5.25 achieved by Lloyd's. This indicates that the additional embedding dimensions are key to effective clustering performance (\cref{fig:trf-train:multi-step-na}).

Lloyd's algorithm is known to be quite sensitive to the initial centers, and various initialization algorithms have been developed such as the farthest-first traversal scheme~\citep{gonzalez1985clustering} and $k$-means{\tt+\!+}~\citep{arthur2007k}. In \cref{fig:trf-train:multi-step-kmp}, we evaluate the ability of the trained transformer to leverage improved initialization via $k$-means{\tt+\!+}. We see that the performance of Lloyd's algorithm improves to an average log-$k$-means-objective of less than 5.0 (compared to 5.25 in \cref{fig:trf-train:multi-step} on the same set of clustering tasks). However, the trained transformer still continues to outperform Lloyd's, achieving an average log-$k$-means-objective of $\sim$4.85 (improving from $\sim$5.0 in \cref{fig:trf-train:multi-step} on the same set of tasks). Note that this transformer was only trained with tasks using random initial centers, but is able to utilize improved initial centers during test time for unseen tasks.

\begin{figure*}[t]
\centering
\includegraphics[width=\textwidth]{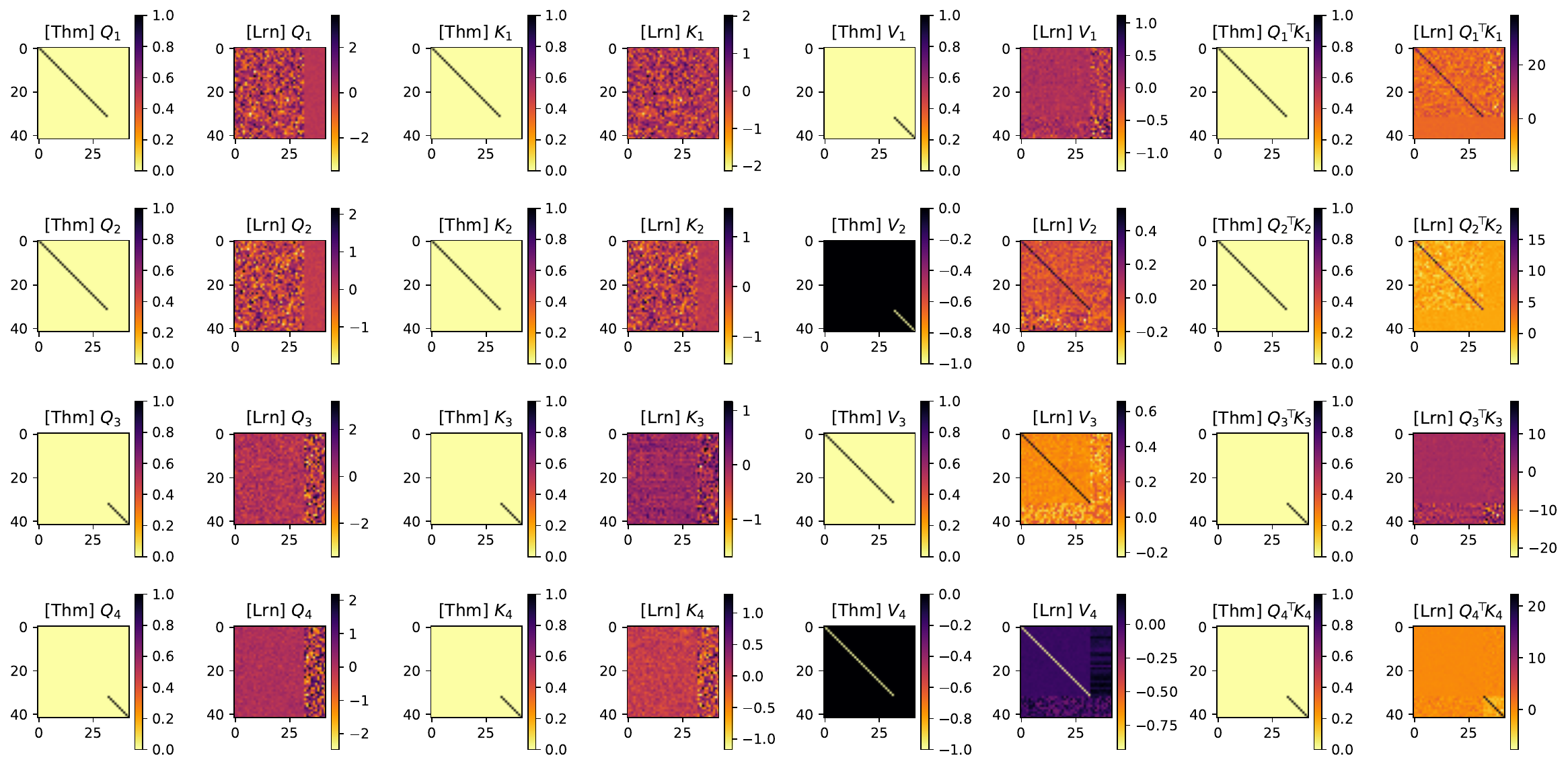}
\caption{{\bf Constructed [Thm] and learned [Lrn] weights in the $k$-means transformer.} We examine the model with $d=32, k=10$ using the 4 attention modules in \cref{eq:l2-trf-updates}, and view the 4 sets of weights $\lbrace (Q_\mu, K_\mu, V_\mu, Q_\mu^\top K_\mu ), \mu \in [4]\rbrace$, comparing the learned ones with the constructions in \Cref{thm:l2-clus-eu}. The embedding dimensionality $\demb = d + k = 42$; all weight matrices are $(42 \times 42)$.}
\label{fig:lrn-weights}
\end{figure*}

To better understand what the model is learning, we present the learned weights in \cref{fig:lrn-weights}, and make the following observations:
(i)~The learned $Q_\mu^\top K_\mu$ matrices match the dominant diagonal form of the theoretical constructions in 3/4 cases even though the learned  $Q_\mu, K_\mu$ matrices do not look similar to the constructed ones. This aligns with our \cref{rem:inf-params} in \cref{sec:methods:icll}.
(ii)~The learned $Q_3^\top K_3$ for the cross-attention in the cluster center update appears significantly different (more diffuse) than the constructed $I_{d+k}^{d+1:}$ in \cref{thm:l2-clus-eu}, implying that the learned model updates the centers differently than Lloyd's.
(iii)~The learned value matrices $(V_3, V_4)$ in the center update (cross- and self- attention, respectively) align with the constructed ones. However, the value matrices for the assignment updates $(V_1, V_2)$ appear to differ from the constructions, especially with the self-attention $V_2$ having a dominant diagonal form (close to $I_{d+k}^{:d}$) that is orthogonal to the constructed $-I_{d+k}^{d+1:}$.

\begin{figure*}[t]
\centering
\includegraphics[width=\textwidth]{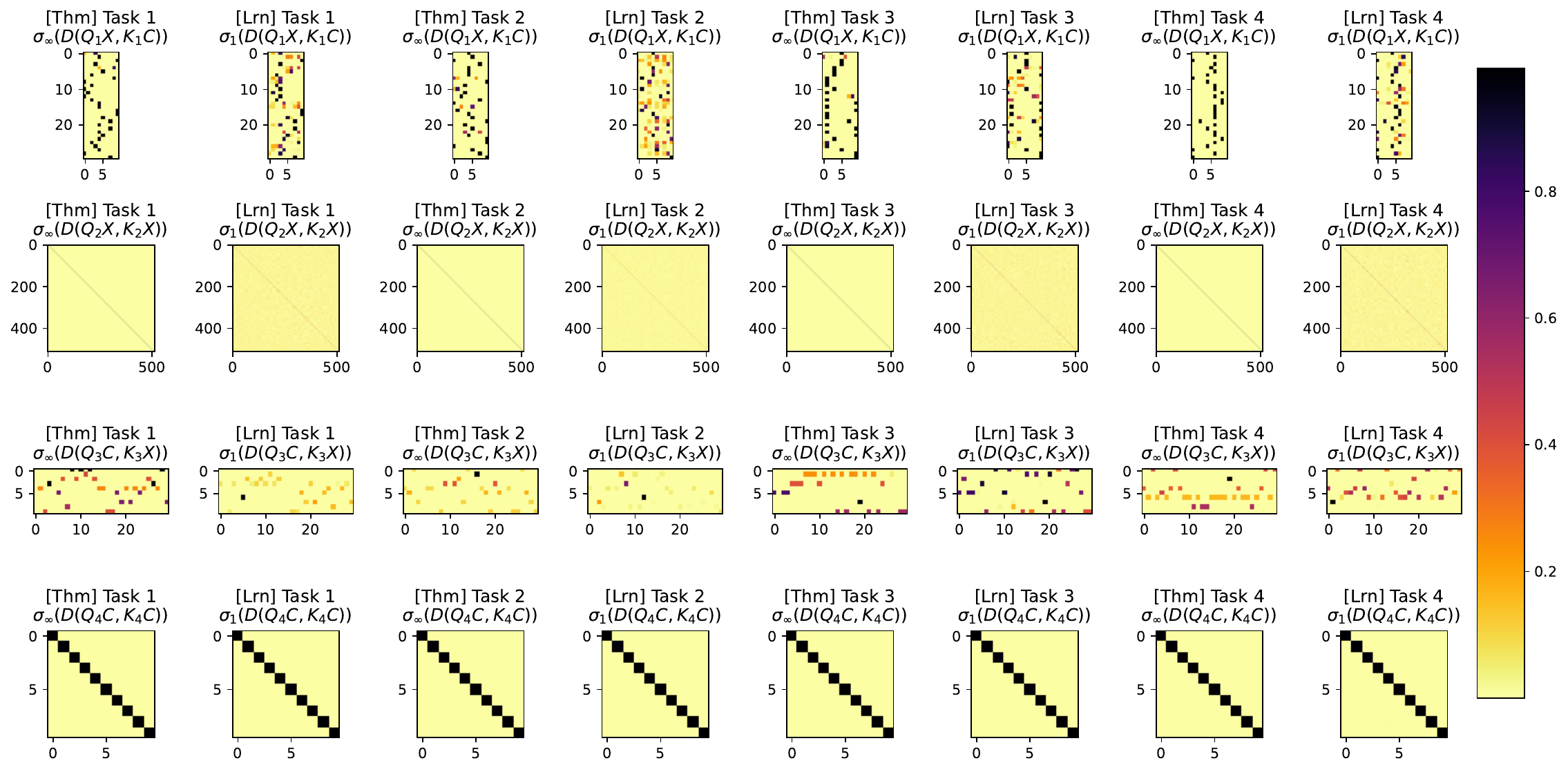}
\caption{{\bf Constructed [Thm] and learned [Lrn] attention maps.}
We consider the model with $d=32, k=10$, and present the 4 sets of attention maps (denoted as $\sigma_\gamma(D(QZ, KZ'))$ with projected queries $QZ$ and projected keys $KZ'$) from the learned model (with $\gamma = 1$) for a single update with 4 different clustering tasks with $n=512$. We visualize them next to the maps that implement Lloyd's algorithm in \cref{thm:l2-clus-eu} (with $\gamma \to \infty$). Note that for the cross-attention maps, the attention maps would be $(512 \times 10)$ or $(10 \times 512)$, making heatmap visualization quite unclear. Thus, we subsample 30 points, and show the cross-attention maps for those 30 points only (leading to more visually clear $(30 \times 10)$ and $(10 \times 30)$ heatmaps).}
\label{fig:lrn-attn-map}
\end{figure*}

To better compare the behaviour of the learned $k$-means transformer to Lloyd's we present the attention maps for each of the attention modules for 4 of the clustering tasks in \cref{fig:lrn-attn-map}, and make the following observations:
(i)~All the learned attention maps are quite sparse even with a softmax attention with $\gamma = 1$, implying that the transformer learned to utilize sparse attention maps (as in \cref{thm:l2-clus-eu}) without explicitly requiring the limiting softmax attention.
(ii)~The learned self-attention maps for both the updates in \cref{eq:l2-trf-updates} seem to match the constructed one quite well with a diagonal attention map.
(iii)~The learned and constructed cross-attention maps differ significantly. The learned assignment update attention maps $\attned{\gamma}(X^{(t)}, \tilC^{(t)}, Q_1, K_1)$ visually appear to better match the constructed maps, while the learned center update attention map $\attned{\gamma}(\tilC^{(t)}, X^{(t+1)}, Q_3, K_3)$ differs significantly from the constructed one.

\subsection{Clustering with dot-product attention and FFN} \label{sec:results:dp-ffn}
The proposed transformer in \cref{sec:results:lloyds} uses (negative) squared Euclidean distances to exactly emulate a single Lloyd's iteration for Euclidean $k$-means.
However, standard transformers architectures use dot-product attention. Here, we will show that transformers with dot-product attention can also exactly execute a single Lloyd's iteration with (i)~a modified token embedding scheme, and (ii)~a FFN block following the attention block, as is standard in typical transformers. One difference from a standard block is that we will make use of a FFN with rectified quadratic activation instead of the usual rectified linear or ReLU activation.

For any matrix $A \in \R^{m \times p}$, let $q(A) \in \R^p$ be the column-wise squared $\ell_2$ norm, with the $i$-th value in $q(A)$ being the squared $\ell_2$ norm of the $i$-th column of $A$. Given this definition, we modify the $(d+k)$-dimensional embeddings from \cref{eq:ed-iterates} to $\demb=(d+k+2)$-dimensional ones as follows:
\begin{equation} \label{eq:ed-iterates-ip}
\tilC^{(t)} \triangleq \qa{C^{(t)}}{q(C^{(t)})^\top \\ -1/2 \\ I_k},
\quad
X^{(t)} \triangleq \qa{X}{q(X)^\top \\ -1/2 \\ Y^{(t)}}.
\end{equation}
Then we use the following dot-product attention updates:
\begin{equation} \label{eq:l2-trf-ip-updates}
\begin{split}
X^{(t+1)} & \gets
    X^{(t)}
    + \attnip{}(X^{(t)}, \tilC^{(t)}, Q_1, K_1, V_1)
    + \attnip{}(X^{(t)}, X^{(t)}, Q_2, K_2, V_2) \\
\tilC^{(t+\nicefrac{1}{2})} & \gets
    \tilC^{(t)} +
    \attnip{}(\tilC^{(t)}, X^{(t+1)}, Q_3, K_3, V_3)
    + \attnip{}(\tilC^{(t)}, \tilC^{(t)}, Q_4, K_4, V_4)
\\
\tilC^{(t+1)} & \gets
  \tilC^{(t+\nicefrac{1}{2})} +
  W_2^\top \sigma_{\textsf{F}} (W_1 \tilC^{(t+\nicefrac{1}{2})}),
\end{split}
\end{equation}
where $\{(Q_\mu, K_\mu, W_\mu), \mu = 1, \ldots, 4\}$ are attention parameters, and $W_1, W_2 \in \R^{d \times (d+k+2)}$ are the FFN parameters, shared across all layers $t \in [T]$,  with $\sigma_{\textsf{F}}(z) = [z]_+^2$ as the rectified quadratic activation. Then, we can show that:
\begin{theorem}[informal] \label{thm:l2-clus-ip-informal}
Given as input a set of points $X$ and initial centers $C$, there exists a  $T$ layer transformer with shared attention parameters $\{(Q_\mu, K_\mu, V_\mu), \mu = 1, \ldots, 4\}$ and FFN parameters $(W_1, W_2)$ using the updates in \cref{eq:l2-trf-ip-updates} such that the output of the $T^{\text{th}}$ transformer layer exactly matches the output of Lloyd's algorithm with the same input after $T$ iterations.
\end{theorem}
The complete theorem statement and proof are presented in \cref{sec:methods:icll:dp}. It follows the same technique as in \cref{thm:l2-clus-eu} with the following modifications: (i)~The $Q_1, K_1, Q_2, K_2$ matrices are set such that the dot-product between the projected query-key tokens computes exactly the negative squared Euclidean distance making use of the squared norms present in $q(X)$ and $q(C^{(t)})$ in \cref{eq:ed-iterates-ip}; for example, in the encoder cross-attention, with the $i$-th encoder token as query $q = [x_i^\top, \norm{x_i}^2, -\nicefrac{1}{2}, y_i^\top ]^\top$ and $j$-th decoder token as key $k = [c_j^\top, \norm{c_j}^2, -\nicefrac{1}{2}, e_j^\top ]^\top$, the projection matrices $Q_1, K_1$ are set such that the projected forms are $Q_1 q = [x_i^\top, \norm{x_i}^2, -\nicefrac{1}{2} ]^\top $ and $K_1 k = [c_j^\top, -\nicefrac{1}{2}, \norm{c_j}^2 ]^\top$, and thus $\inpr{Q_1 q}{ K_1 k} = -\nicefrac{1}{2} \eudi{x_i}{c_j}$. (ii)~The decoder attention updates compute the cluster centers $c_j^{(t+1)}, j \in [k]$ in $\tilC^{(t+\nicefrac{1}{2})}$ as done in \cref{thm:l2-clus-eu}. (iii)~The decoder FFN computes the squared norm of the new cluster centers $\norm{c_j^{(t+1)}}^2$, and updates the decoder tokens to get $\tilC^{(t+1)}$.

\subsection{General architecture and other algorithms} \label{sec:results:general}

While we provably show that our specific transformer architecture with appropriate parameterization exactly executes Lloyd's algorithm for Euclidean $k$-means clustering, we also consider a broader view of our proposed architecture as an algorithmic skeleton for clustering in \Cref{fig:gen-arch} (\Cref{sec:gen-arch}).
This novel view of {\em clustering algorithms as a transformer architecture} allows us to instantiate diverse set of algorithms by modifying different transformer circuit mechanisms. \Cref{tab:comp-alg} (\Cref{sec:gen-arch}) shows examples of existing clustering algorithms that are reproduced in our architecture by modifying different circuit mechanisms.

\paragraph{Soft Euclidean $k$-means clustering.} (\Cref{sec:methods:icll}, \cref{thm:l2-clus eu})
Starting with the base architecture that implements Lloyd's algorithm (\Cref{fig:architecture}, \Cref{thm:l2-clus-eu}), by (i)~swapping the limiting soft-max attention with a standard soft-max attention in the encoder cross-attention update, and (ii)~changing the limiting soft-max attention with a linear attention in the decoder cross-attention update, we recover the standard soft $k$-means clustering algorithm~\citep{dunn1974fuzzy, bezdek2013pattern} (\Cref{alg:soft-kmeans} in \Cref{sec:methods:icll}). Note that this is not a discrete clustering problem since each point is allowed to be partially assigned to different clusters (and the partial assignments affect the cluster center updates).

\paragraph{Discrete spherical $k$-means clustering.} (\Cref{sec:methods:spherical}, \cref{thm:sp-clus ip})
By (i)~changing the attention score from negative squared Euclidean distance to dot-product and (ii)~including normalization~\citep{ba2016layer, zhang2019root} in the decoder (such as RMSNorm), we obtain Lloyd's algorithm for spherical clustering (\Cref{alg:lloyds-sphere} in \Cref{sec:methods:spherical}), where the points and the centers remain on the unit sphere.

\paragraph{Trimmed Euclidean $k$-means clustering.} (\Cref{sec:methods:robust})
The original $k$-means problem is known to be extremely sensitive to outliers due to the use of the squared Euclidean distance.
To ignore outliers while computing new cluster centers, we replace the soft-max (a negative Shannon entropy regularized arg-max) in the decoder cross-attention with a different regularized arg-max.
The {\em norm $\alpha$-negative entropy} regularized arg-max produces the ``norm-max'' attention~\citep{blondel2020learning}, and results in the classical trimmed $k$-means algorithm~\citep{cuesta1997trimmed} (\Cref{alg:trim-kmeans} in \Cref{sec:methods:robust}).

Beyond finding transformer configurations that execute known algorithms in-context, the general view of the transformer architecture as an algorithmic skeleton allows us to add or substitute architectural components, yielding ``novel'' clustering algorithms. We present two such algorithms:

\paragraph{Robust $k$-means clustering.}  (\Cref{sec:methods:new})
While trimmed $k$-means increases the robustness of the $k$-means algorithm by effectively ignoring the outliers based on an estimated distance threshold, we develop an alternate form of robust discrete $k$-means by utilizing a sparse soft-max attention~\citep{gupta2021memory, martins2016softmax}.
Such attention selects points close to the cluster centers, while weighting the contributions of the (selected) points based on their distance to the cluster centers. This dampens the contributions of cluster members that are far away from centers.

\paragraph{Randomized Euclidean $k$-medoids clustering.}  (\Cref{sec:methods:new})
We consider Euclidean $k$-medoids, defined as:
\begin{equation}\label{eq:kmeds}
\min_{c_1, \ldots, c_k \in X} \sum_{i = 1}^n \min_{j = 1, \ldots, k} \eudi{ x_i }{ c_j },
\end{equation}
This differs from the $k$-means clustering problem in \cref{eq:kmeans-dc} in that it requires cluster centers $c_j$ to be points in the set $X$.
While this problem bears similarities to the $k$-means problem, it is a significantly harder problem, and cannot be solved favorably with Lloyd's alternating algorithm; successful $k$-medoids algorithms include partitioning around medoids~\citep{kaufman1990partitioning} and CLARANS~\citep{ng2002clarans}.
However, in the $k$-means transformer, utilizing Gumbel-softmax~\citep{jang2017categorical} attention in the decoder cross-attention allows points from $X$ to be chosen as cluster centers, producing a valid $k$-medoids clustering solution.

Overall, the transformer circuit implementation of the $k$-means algorithm provides a flexible yet interpretable framework to instantiate a diverse variety of clustering algorithms.

\section{Discussion} \label{sec:disc}

The encoder-decoder transformer was developed to solve sequence-to-sequence tasks~\citep{vaswani2017attention}.
One such task is text summarization, where the model takes a large input text and produces a faithful summary.
We analogized the $k$-means problem as clustering (summarizing) a large set of points into a small set of cluster centers.
This led to {\em a small, practically-viable encoder-decoder transformer} that exactly realizes Lloyd's algorithm for $k$-means clustering (\cref{thm:l2-clus-eu}, \cref{thm:l2-clus-ip-informal}) with a model whose size is smaller than that of the input clustering problem (\cref{rem:trf-size}).
Importantly, the development of this transformer circuit implementation of $k$-means provides a general approach for developing alternative and novel clustering algorithms through interpretable architectural modifications in the transformer.

Prior theoretical work has shown that infinitely wide neural networks can approximate any function, establishing universal approximation results in an asymptotic sense~\citep{hornik1989multilayer, hartman1990layered}.
Similarly, certain transformer variants, such as universal transformers or models equipped with recurrent computational steps such as chain-of-thought, have been proven Turing complete, demonstrating that they can simulate arbitrary computation given sufficient depth or time~\citep{strobl2024formal, perez2021attention, li2025constant, jiang2026softmax, merrill2024expressive, dehghani2018universal}.
These results do imply the existence of an implementation of Lloyd's algorithm, but at high cost in time and space; in contrast, our work provides an efficient and compact transformer architecture with an explicit parameterization that {\em exactly} implements Lloyd's algorithm.
Importantly, we show empirically that this hand-specified model can also be learned from data to perform effective clustering.

Our transformer construction that exactly performs Lloyd's algorithm complements emerging work that investigates the in-context learning abilities in transformers across a range of function-learning tasks~\citep{von2023transformers, ahn2023transformers, huang2024incontext, zhang2024trained}.
In‑context learning refers to the ability of transformers to infer a task from example input-output pairs provided in context, and to apply the inferred function to new inputs without any parameter updates~\citep{garg2022what, akyurek2023what}.
More specifically, in supervised settings where labeled examples ${(x_i, y_i)}_{i \in [n]}$ are provided in the context along with a single unlabeled query $x_q$, the model produces a prediction $\hat{y}_q$ by implicitly executing an algorithm in-context during the forward pass.
However, most prior research has focused on supervised objectives (e.g., mean squared error), and often rely on simplifying assumptions within the attention mechanism, such as the use of linear attention -- a weighted-average approximation of standard softmax attention.
These assumptions depart significantly from the original transformer architecture, which relies on selective arg‑max-like attention behavior.
In contrast, we leverage the full transformer architecture and parameterize it from first principles to exactly implement Lloyd’s algorithm in-context, while also exploring the learnability of this construction.
And though some other related works have previously shown where input tokens can ``self-cluster'' in-context~\citep{geshkovski2023emergence, geshkovski2025mathematical}, we instead specify an architecture that \textit{exactly} reproduces the widely-used Lloyd's algorithm for $k$-means clustering.

Finally, we showed the generality of our proposed architecture by instantiating alternative clustering algorithms via architectural modifications, such as soft $k$-means and trimmed $k$-means, highlighting how transformer circuit mechanisms can be employed for clustering.
An interesting implication of our analytic results is that, for any clustering algorithm that we realize as a transformer, there are infinitely many parameter configurations (the $Q,K$ projection matrices), implying the existence of infinitely many instantiations that realize the same algorithm; see \Cref{rem:inf-params} in \Cref{sec:methods:icll}.

\paragraph{Limitations.}
Our theoretical results rely on the use of the limiting soft-max attention with $\gamma \to \infty$, which is practically realizable in the forward-pass, but is hard for back-propagation.
However, we show that the $k$-means transformer with $\gamma = 1$ (thus more backprop-friendly) is able to learn an effective clustering algorithm (\cref{fig:trf-train}).
Furthermore, while we theoretically establish expressivity (ability to realize known algorithms), we leave any corresponding theoretical generalization guarantees for future study. Finally, while our proposed architecture can be trained on clustering tasks with a fixed number of points $n$ and generalizes to problems with larger number of points $n' > n$ (as in \cref{fig:trf-train:vary-nsamples}), it does not support training on $k$-clustering tasks with a fixed number of clusters $k$ and generalizing to problems with a larger number $k' > k$.

\section{Conclusion}
We demonstrate how the widely studied Lloyd's algorithm for $k$-means clustering can be exactly realized with transformer circuits, and subsequently trained for improved clustering. We also show how interpretable architectural changes lead to various (and sometimes novel) clustering algorithms.
Together, these results highlight how algorithmic transparency within transformer circuits can yield both theoretical insight and practical advances.
We hope this work serves as a foundation for further exploration of neural implementations of classical methods.

\bibliographystyle{unsrtnat}
\bibliography{klic-refs}

\clearpage
\appendix
\allowdisplaybreaks

\addcontentsline{toc}{section}{Appendix}
\part{Appendix}
\parttoc

\clearpage

\section{Numerical Validation for Prescribed Transformer Weights} \label{asec:num-val}

\begin{figure*}[ht]
\centering
\includegraphics[width=\textwidth]{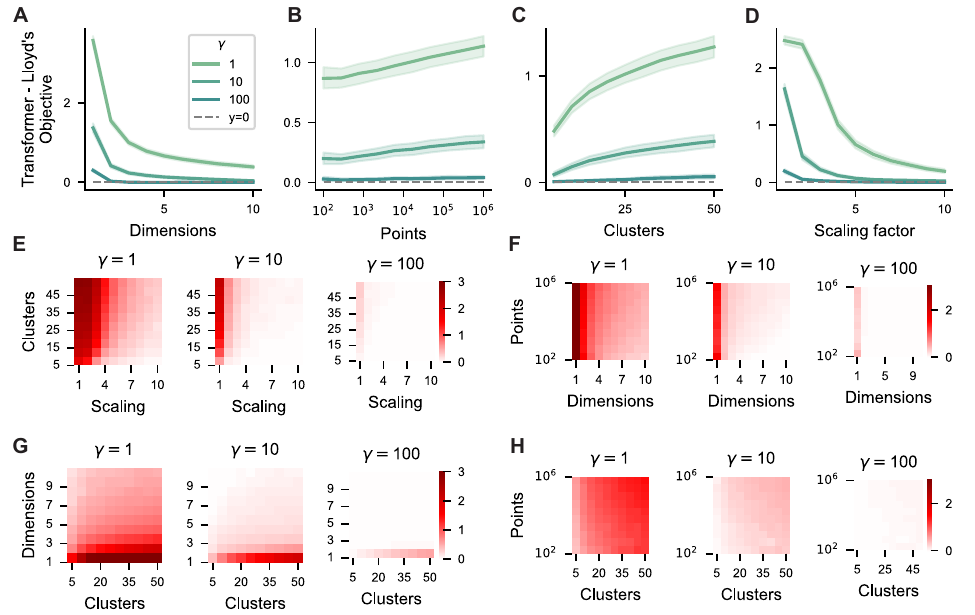}
\caption{Comparing the performance of the $k$-means transformer to Lloyd's algorithm across a range of parameter choices.
Consistent with our theory, all transformer models exhibited improved correspondence to Lloyd's algorithm as $\gamma$ increases.
{\bf A)}~Increasing the dimensionality of the embedding space of points improves correspondence between the transformer and Lloyd's algorithm.
Correspondence between models is computed as the difference between the log of transformer and Lloyd's objective function after 10 iterations / layers.
Increasing the number of {\bf B)}~points-to-be-clustered {\bf C)}~number of clusters degrades correspondence.
{\bf D)}~Scaling the points by a multiplicative constant improves correspondence as the magnitude of the constant increases.
This implies that just scaling the features of the data can improve performance without having to change parameters of the architecture.
{\bf E-H)}~We show the relative changes of jointly altering parameter-pairs in a 2D matrix.
Each matrix element reflects the difference of the log $k$-means objective between the transformer and Lloyd's implementation.
Error bars (in {\bf A-D}) denote the difference of objective functions across all other parameter ranges, while keeping the parameter of interest fixed.
}
\label{fig:parameter_sweep}
\end{figure*}

We next comprehensively assess how different the performances are (i.e., the difference in the log of $k$-means objective between Lloyd's and $k$-means transformer).
 Relevant code can be found at \url{https://github.com/rithram/kmeans-trf}.
We test performances across a range of $\gamma$ and data distributional properties, including the dimension $d$ for samples $x \in \mathbb{R}^d$, the total number of samples $n$, the number of clusters $k$, and a constant scaling factor.
As expected (and consistent with our theoretical results), we find that increasing $\gamma$ uniformly improves performance of the $k$-means transformer (\Cref{fig:parameter_sweep}).
We also find that as the dimension $d$ of samples increases, the $k$-means transformer behaves more closely to Lloyd's algorithm independent of $\gamma$ (\Cref{fig:parameter_sweep}A,F,G).
This is because the pairwise squared Euclidean distance between points grows with the number of dimensions (assuming the variance is the same/independent within each dimension).
Thus, since the soft-max is applied on the attention matrix, which is computed as the pairwise negative squared Euclidean distance between points/tokens, increasing the embedding dimension amplifies the limiting effect in the soft-max without changing $\gamma$.

We also find that as the number of points and clusters increases, divergence between Lloyd's and $k$-means transformer increases (\Cref{fig:parameter_sweep}B,C).
However, this can be ameliorated by increasing $\gamma$, and/or multiplying all samples $x \in X$ by a fixed positive scalar (\Cref{fig:parameter_sweep}D,E).
Multiplying by a constant (positive) factor produces behavior similar to increasing the embedding dimensions of $x\in X$, since scaling all points by a factor $> 1$ increases the pairwise squared Euclidean distance between points, improving the sharpness of the soft-max.
Thus, multiplicative scaling of points provides a straightforward (and invertible) approach to improving the performance of the $k$-means transformer; the $k$-means problem is scale invariant.

\clearpage
\section{Training the Transformer} \label{asec:trf-train}

In this section, we discuss the details of training the transformer to learn to cluster. We discuss the training task distributions, the training objective, and finally present results on training convergence and generalization for various choices of the data dimensionality $d$, the number of clusters $k$ and various other training hyperparameters. Relevant code can be found at \url{https://github.com/rithram/kmeans-trf}.

\subsection{Distribution of clustering tasks}

To train our proposed transformer model, we considered a distribution of clustering tasks, where each task consists of samples generated from a mixture of Gaussians. For the $\tau$-th task, the samples $X_\tau \in \R^{d \times n}$ are generated by:
\begin{itemize}
\item[(i)] randomly sampling the $d$-dimensional centers
\item[(ii)] randomly sampling the $d$ scales of the Gaussians for each of the centers,
\item[(iii)] randomly sampling the mixture model weights,
\item[(iv)] sampling $n$ samples from this mixture distribution, and
\item[(v)] translate then scaling all data to be in the $[0,1]^d$.
\end{itemize}

The final scaling step is performed as Euclidean clustering is invariant to scaling and translation. Note that, as the data dimensionality $d$ increases, the diameter of the set of samples increases.

Given the set of samples, we randomly sample $k$ points from $X_\tau$ to define the set of initial centers $C_\tau \in \R^{d \times k}$.

\subsection{Smoothed training objective}

Let us denote the weights of the single transformer layers as $W = \{(Q_\mu, K_\mu, V_\mu), \mu = 1, \ldots, 4\}$. Given the $\tau$-th task denoted at $(X_\tau, C_\tau)$, we first embed them as $(X_\tau^{(0)}, \tilC_\tau^{(0)})$, where
\begin{equation}
X_\tau^{(0)} \gets \qa{X_\tau}{0_{k \times n}} \in \R^{(d+k) \times n},
\quad
\tilC_\tau^{(0)} \gets \qa{C_\tau}{I_k} \in \R^{(d+k) \times k}.
\end{equation}
Then we apply the transformer updates once as defined in \cref{eq:l2-trf-updates} to obtain $(X_\tau^{(1)}(W), \tilC_\tau^{(1)}(W))$, where we make the dependence of the transformer output on its weights $W$ explicit.

We can define the $k$-means objective given samples $X$ and centers $C$ as:
\begin{equation} \label{eq:ex-kmeans}
L(X, C) = \sum_{i = 1}^n \min_{j \in [k]} \eudi{x_i}{c_j}.
\end{equation}
However, note that this objective is not differentiable, and thus cannot be directly used from training the transformer. Thus, we utilize a smoothed upperbound of $L(X, C)$ using the softmax operation with a temperature $\lambda > 0$, defined as:
\begin{equation} \label{eq:sm-kmeans}
\tilde{L}_\lambda (X, C) = \sum_{i = 1}^n \frac{
  \sum_{j = 1}^k \exp(- \eudi{x_i}{c_j} / \lambda) \eudi{x_i}{c_j}
}{
  \sum_{j = 1}^k \exp(- \eudi{x_i}{c_j} / \lambda)
},
\end{equation}
where $L(X,C) = \lim_{\lambda \to 0_+} \tilde{L}_\lambda(X, C)$. Thus, the training objective for learning the transformer weights $W$ is given by:
\begin{equation}
\min_W
\sum_\tau \tilde{L}_\lambda (X_\tau, \hat{C}_\tau^{(1)}(W)),
\end{equation}
where $\hat{C}_\tau(W) \in \R^{d \times k}$ is defined as $\hat{C}_\tau(W) \triangleq \tilC_\tau^{(1)}(W)[d:, :]$ as the first $d$ rows of single transformer layer output $\tilC_\tau^{(1)}(W)$. We consider $\lambda$ as a hyperparameter, and we will demonstrate the effect of this hyperparameter on the training and generalization.

\subsection{Training convergence and effects of hyperparameters}

For a given data dimensionality $d$ and number of clusters $k$, we train the model for 10000 steps with an initial learning rate of 0.01 and a decay rate of 0.5 on a loss plateau (where the validation loss does not decrease for consecutively for 250 steps). For each training step, we sample 32 clustering tasks, and utilize that to compute the smoothed objective in \cref{eq:sm-kmeans}. For validation, we utilize 320 separate generated clustering tasks. For all our training and validation clustering tasks, we consider $n = 512$ samples to be clustered. For a given training with $d$ dimensions and $k$ clusters, the embedding dimensions of our transformer $\demb = d + k$.

As an ablation, we consider a version of our proposed transformer where we drop the additional $k$ dimensions in the token embedding and trained a transformer with $\demb = d$, using the same training tasks and smoothed clustering objective. All following results contain the result for both our proposed transformer architecture and this ablation with a smaller number of embedding dimensions.

\begin{figure}[ht]
\centering
\begin{subfigure}{0.49\textwidth}
\includegraphics[width=\textwidth]{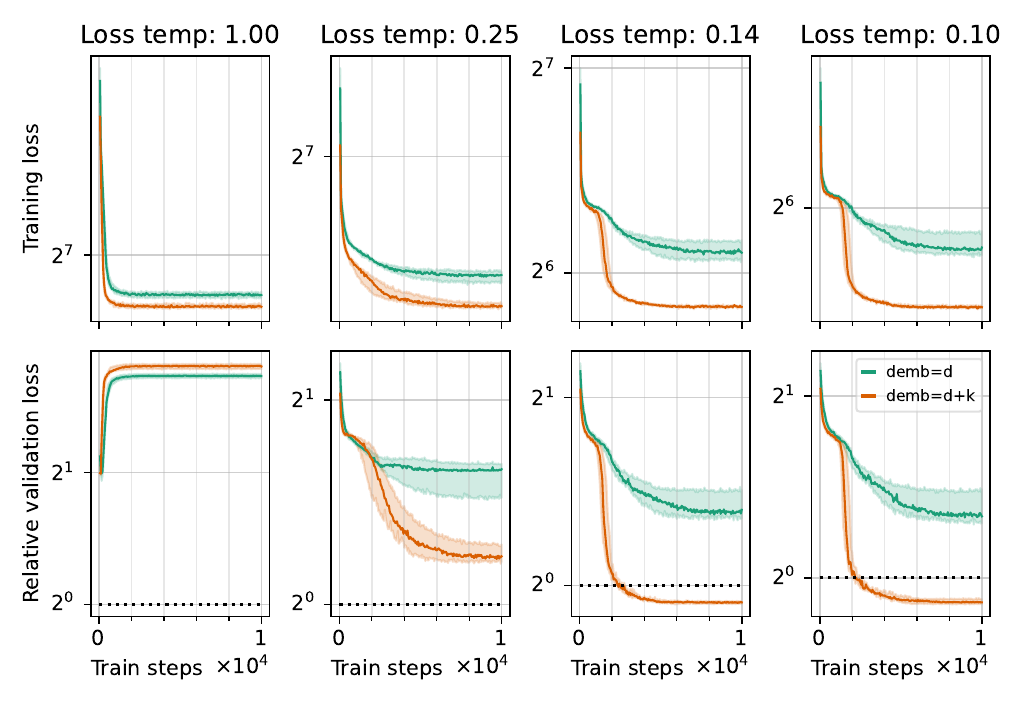}
\caption{$d = 4$}
\label{fig:tr-cvg-lambda-d4}
\end{subfigure}
~
\begin{subfigure}{0.49\textwidth}
\includegraphics[width=\textwidth]{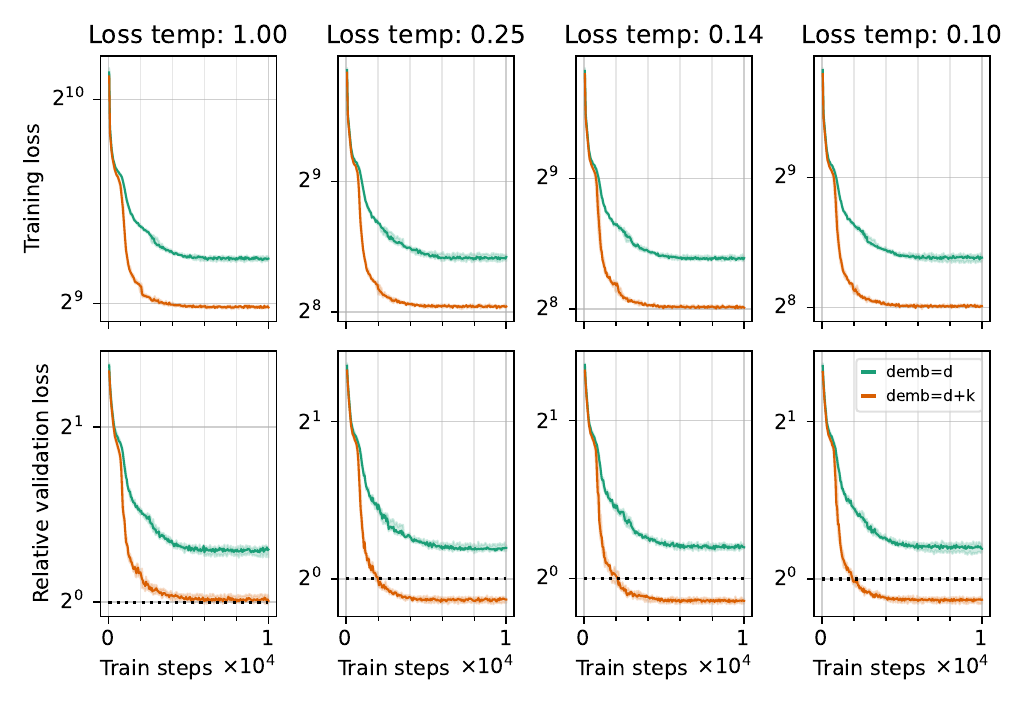}
\caption{$d = 32$}
\label{fig:tr-cvg-lambda-d32}
\end{subfigure}
\caption{Training convergence for varying values of the smoothed loss temperature $\lambda > 0$. Here we select $k = 6$. The top row in each figure tracks the training loss (aggregating the batch smoothed loss $\sum_\tau \tilde{L}_\tau (X_\tau, \hat{C}_\tau(W))$ over the previous 50 training steps) aggregated over 10 random trials with the ribbons indicating the inter-quartile range ({\em lower is better}). The bottom row in each figure tracks the relative validation loss computed using the validation clustering tasks as $\sum_\tau \nicefrac{L(X_\tau, \hat{C}_\tau(W))}{L(X, C^{(1)})}$, where $C^{(1)}$ is the cluster centers obtained via one step of Lloyd's algorithm as shown in \cref{fig:overview}A ({\em lower is better}). The dashed line at $y = 1$ in the bottom row of each figure denotes the performance of a single step of Lloyd's algorithm.}
\label{fig:tr-cvg-lambda}
\end{figure}

\Cref{fig:tr-cvg-lambda} studies the effect of the temperature hyperparameter $\lambda$ for the smoothed training objective in \cref{eq:sm-kmeans} as we vary it from $\lambda = 1$ to $\lambda = 0.1$. With low dimensionality data with $d=4$ in \cref{fig:tr-cvg-lambda-d4}, we see that training is not able to match the performance of Lloyd's algorithm for $\lambda \in \{1, 0.25\}$, highlighting the mismatch between the desired $k$-means objective and its smoothed counterpart. However, for $\lambda \in \{0.14, 0.1\}$, we see that the training is able to obtain single-layer transformer weights that allow it to outperform single-step Lloyd's algorithm by the significant margin; note that the vertical axis is on a logarithmic scale. For higher dimensions with $d = 32$ in \cref{fig:tr-cvg-lambda-d32}, we see that the training is able to find single-layer transformer weights that match or outperform single-step Lloyd's algorithm for all values of $\lambda$. This is because the pairwise Euclidean distances increase on average as $d$ increases since we assume that the samples to be clustered were translated and scaled to the hypercube $[0,1]^d$. For the following results, we will fix the smoothed objective temperature $\lambda = 0.1$.

The results in \cref{fig:tr-cvg-lambda} also show that, while the training converges, the relative validation loss of the transformer with $\demb = d$ indicates that the trained transformer model is unable to match the performance of Lloyd's algorithm, indicating that the additional dimensions in our proposed token embedding is somewhat essential for the model to be able to match Lloyd's algorithm.

\begin{figure}[ht]
\centering
\begin{subfigure}{0.49\textwidth}
\includegraphics[width=\textwidth]{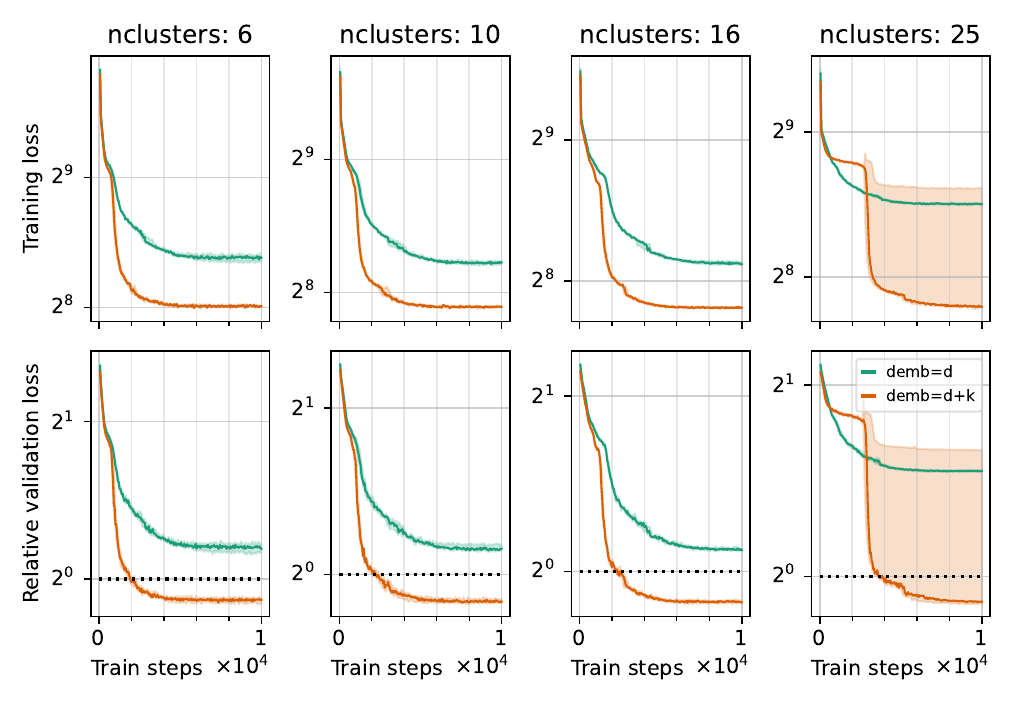}
\caption{$d=32$, varying $k$}
\label{fig:tr-cvg-nclusters}
\end{subfigure}
~
\begin{subfigure}{0.49\textwidth}
\includegraphics[width=\textwidth]{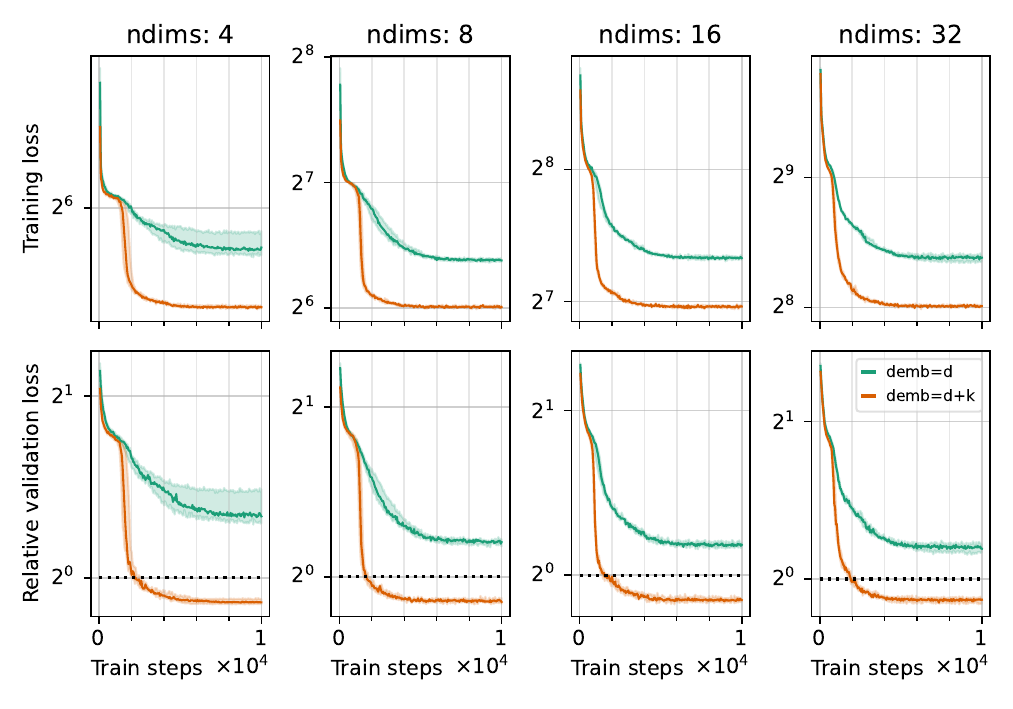}
\caption{$k=6$, varying $d$}
\label{fig:tr-cvg-ndims}
\end{subfigure}
\caption{Training convergence for varying problem setups. Here we select $\lambda = 0.1$. The top row in each figure tracks the training loss (see \cref{fig:tr-cvg-lambda} for details) aggregated over 10 random trials with the ribbons indicating the inter-quartile range ({\em lower is better}). The bottom row in each figure tracks the relative validation loss computed using the validation clustering tasks (see \cref{fig:tr-cvg-lambda} for details) ({\em lower is better}). The dashed line at $y = 1$ in the bottom row of each figure denotes the performance of a single step of Lloyd's algorithm.}
\label{fig:tr-cvg-vary}
\end{figure}

\Cref{fig:tr-cvg-nclusters} demonstrates the effect of changing the number of desired clusters $k$ in the $k$-means problem with $k \in \{6, 10, 16, 25\}$ (for a fixed $d$ and $\lambda$). Note that the embedding dimension $\demb = d+k$ of our proposed transformer architecture is tied to the number of desired clusters $k$. The results show that, for all values of $k$, with a smoothed loss temperature $\lambda = 0.1$, the learned single-layer transformer significantly outperforms the single-step Lloyd's algorithm. This performance improvement is quite stable, though for the largest value of $k = 25$, there are a couple of trials (as seen with the inter-quartile range) where the training was unable to learn useful weights. This has to do with the fact that, in general, for a fixed temperature $\lambda$, the gap between the exact non-differentiable $k$-means objective $L(X, C)$ in \cref{eq:ex-kmeans} and the smoothed $\tilde{L}_\lambda(X, C)$ in \cref{eq:sm-kmeans} grows with $k$ as the softmax spreads more weight away from the argmax to the remaining $k-1$ values.

In \cref{fig:tr-cvg-ndims}, we study the effect of varying the data dimensionality $d$ from 4 to 32 (for a fixed $k$ and $\lambda$), and see that, for all values of $d$, the learned single-layer transformer continues to outperform the single-step Lloyd's algorithm by a significant margin. In both \cref{fig:tr-cvg-nclusters} and \cref{fig:tr-cvg-ndims}, we see that the ablation with the transformer with $\demb = d$ continues to be unable to match the Lloyd's algorithm, indicating that the additional $k$ dimensions provide a crucial boost in expressivity.

\paragraph{Multi-step clustering.}
While we train the single-layer transformer effectively for single-step clustering, we can easily make use of this learned transformer for multi-step clustering as follows for input $(X, C^{(0)})$:
\begin{itemize}
\item For clustering step $t = 1, \ldots, T$:
  \begin{itemize}
  \item Embed $(X, C^{(t)})$ as $(X^{(0)}, \tilC^{(0)})$ with $X^{(0)} \gets \qa{X}{0_{k \times n}}$ and $\tilC^{(0)} \gets \qa{C^{(t)}}{I_k}$.
  \item For input $(X^{(0)}, \tilC^{(0)})$ to the single-layer transformer, get output $(X^{(1)}, \tilC^{(1)})$.
  \item Use the first $d$ rows of $\tilC^{(1)}$ as the updated centers $C^{(t)}$.
  \end{itemize}
\item Return $C^{(T)}$.
\end{itemize}

We use this procedure for the clustering results in \cref{fig:trf-train}.

\subsection{Further empirical evaluation of trained transformer}

Here we consider the $k$-means transformer trained on clustering tasks generated from mixtures of Gaussians (as described earlier), and evaluate it on clustering tasks generated from other mixture distributions (Cauchy, Laplace, Gumbel, LogNormal). The results in \cref{tab:mix-res} and \cref{fig:mix-res} show that all distribution mixtures, the trained $k$-means transformer is able to improve the $k$-means objective over the initial centers, and outperforms Lloyd's algorithm for all but the tasks generated from mixtures of Cauchy distributions.

\begin{table}[!!h]
\centering
\caption{We evaluate the trained $k$-means transformer with $d = 32, k = 10$ (trained on clustering tasks generated from mixtures of Gaussians) on clustering tasks generated from other mixture distribution families --- namely Cauchy, Laplace, Gumbel and LogNormal. We compare its performance to Lloyd's algorithm and report the average log $k$-means objective after 20 iterations aggregated over 320 test tasks. See \Cref{fig:mix-res} for the complete trajectory.}
\label{tab:mix-res}
{\begin{tabular}{lccc}
\toprule
Family   & Initial objective & Lloyd's & $k$-means transformer \\
\midrule
   Normal & 6.3430 & 5.2636 & 5.0052 \\
   Cauchy & 3.4078 & 3.2528 & 3.3443 \\
   Gumbel & 6.2408 & 5.2557 & 5.0723 \\
  Laplace & 6.0447 & 5.1014 & 4.9505 \\
LogNormal & 6.2965 & 5.2406 & 5.0026 \\
\bottomrule
\end{tabular}}
\end{table}

\begin{figure}[htb]
\centering
\includegraphics[width=\linewidth]{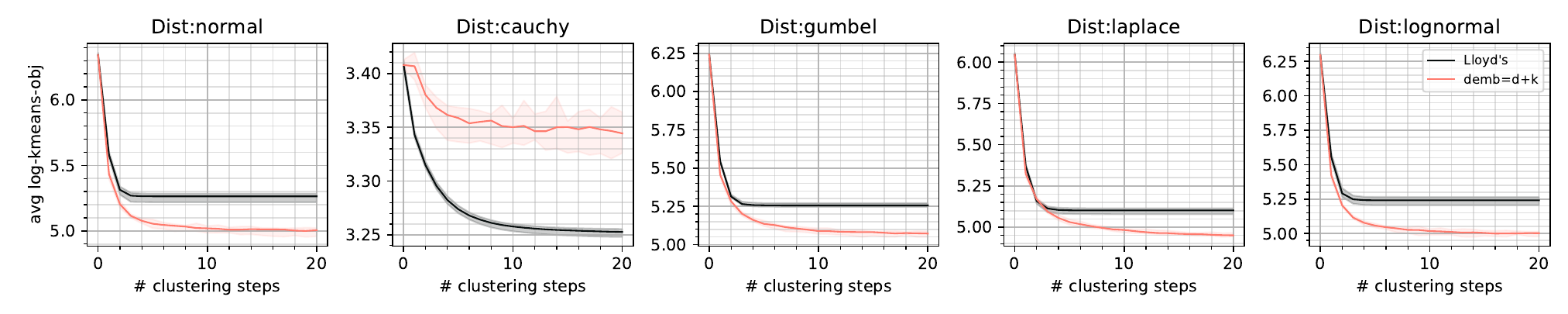}
\caption{Same at \Cref{tab:mix-res} but here we present the complete trajectory of the $k$-means objective over the 20 clustering iterations.}
\label{fig:mix-res}
\end{figure}

Next we apply the trained $k$-means transformer to 15 OpenML datasets~\citep{OpenML2025}. This model is trained {\em solely on synthetic clustering tasks} as discussed earlier with $d = 32, k = 10$. Hence, strong generalization is only {\em expected} on new clustering tasks obtained from the same generative process. We have no such control on the OpenML datasets. We execute Lloyd's and the trained model for 10 iterations, and report the mean (std) log kmeans objective (40 trials). In all cases, the trained model improves the kmeans objective from the initial centers, demonstrating that the model has learned some generalized clustering mechanisms. However, it underperforms Lloyd's by 1-7\% as the real data does not match the training task distribution.

\begin{table}[h]
\centering
\caption{We execute Lloyd's algorithm and the trained $k$-means transformer (with $d=32, k=10$) for 10 iterations on 15 OpenML~\citep{OpenML2025} datasets with dimensionality greater than or equal to $32$, and report the mean${}_{\pm\text{std}}$ log $k$-means objective aggregated over 40 trials. Each trial consists of (i)~different subsets of size 1024 from the original set if the number of samples in the dataset is greater than 1024, otherwise all samples are used, (ii)~different dimension subsets of size $32$ if the original number of dimensions is greater than 32, otherwise all the data dimensions are used, and (iii)~different sets of initial centers.}
\label{tab:openml-trf}
{\begin{tabular}{lcccc}
\toprule
     Dataset  &  Initial obj       &  Lloyd's           & $k$-means trf      & relative diff \\
\midrule
gina-agnostic &  $8.35_{\pm 0.14}$ &  $7.88_{\pm 0.13}$ &  $7.95_{\pm 0.13}$ &  $0.93\%$   \\
         musk &  $6.98_{\pm 0.15}$ &  $6.38_{\pm 0.11}$ &  $6.45_{\pm 0.11}$ &  $1.09\%$   \\
mfeat-factors &  $6.96_{\pm 0.10}$ &  $6.33_{\pm 0.06}$ &  $6.41_{\pm 0.06}$ &  $1.22\%$   \\
 GermanCredit &  $8.41_{\pm 0.11}$ &  $7.90_{\pm 0.10}$ &  $8.00_{\pm 0.10}$ &  $1.24\%$   \\
mfeat-fourier &  $7.01_{\pm 0.07}$ &  $6.47_{\pm 0.03}$ &  $6.55_{\pm 0.03}$ &  $1.25\%$   \\
   ionosphere &  $6.36_{\pm 0.10}$ &  $6.00_{\pm 0.04}$ &  $6.08_{\pm 0.03}$ &  $1.30\%$   \\
feat-karhunen &  $6.83_{\pm 0.06}$ &  $6.28_{\pm 0.03}$ &  $6.38_{\pm 0.03}$ &  $1.53\%$   \\
 ada-agnostic &  $7.68_{\pm 0.14}$ &  $7.15_{\pm 0.11}$ &  $7.26_{\pm 0.11}$ &  $1.58\%$   \\
    optdigits &  $7.73_{\pm 0.12}$ &  $7.13_{\pm 0.11}$ &  $7.27_{\pm 0.12}$ &  $1.85\%$   \\
        sonar &  $5.51_{\pm 0.06}$ &  $4.97_{\pm 0.06}$ &  $5.09_{\pm 0.08}$ &  $2.48\%$   \\
sylva-agnostic&  $6.67_{\pm 0.30}$ &  $6.21_{\pm 0.29}$ &  $6.39_{\pm 0.29}$ &  $2.89\%$   \\
    oil-spill &  $6.29_{\pm 0.32}$ &  $5.57_{\pm 0.20}$ &  $5.79_{\pm 0.16}$ &  $3.95\%$   \\
     ailerons &  $5.27_{\pm 0.20}$ &  $4.69_{\pm 0.14}$ &  $4.96_{\pm 0.14}$ &  $5.81\%$   \\
          pc4 &  $5.82_{\pm 0.18}$ &  $5.29_{\pm 0.15}$ &  $5.61_{\pm 0.17}$ &  $5.99\%$   \\
          pc3 &  $5.61_{\pm 0.20}$ &  $5.08_{\pm 0.12}$ &  $5.44_{\pm 0.19}$ &  $7.07\%$   \\
\bottomrule
\end{tabular}}
\end{table}

\clearpage
\section{General Architecture} \label{sec:gen-arch}

\begin{figure*}[h]
\centering
\includegraphics[width=\textwidth]{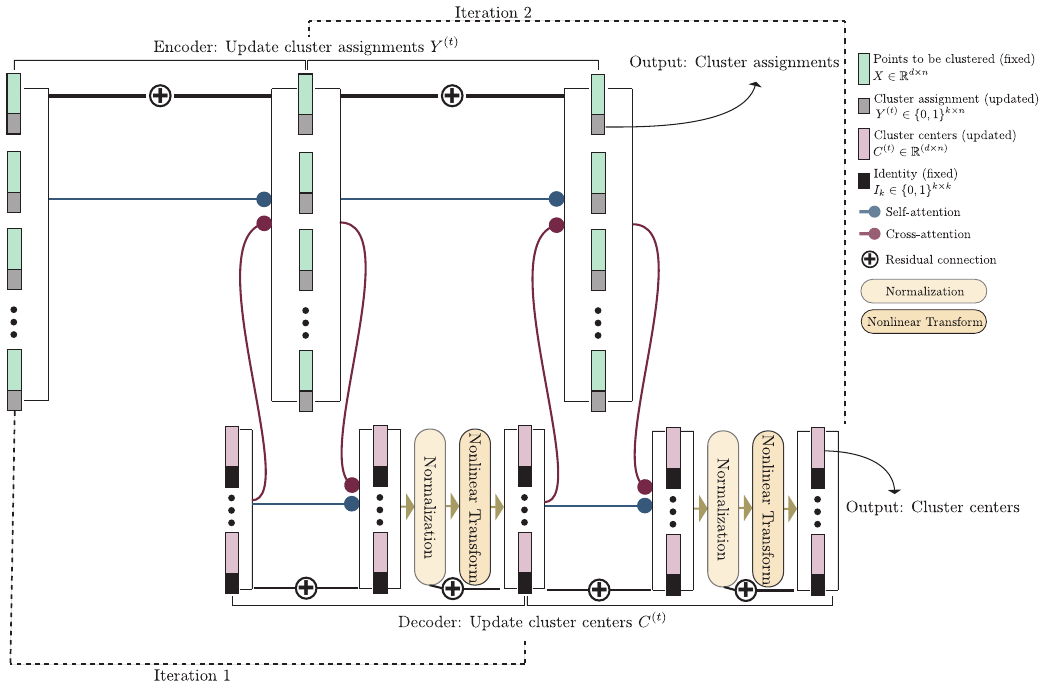}
\caption{{\bf The transformer architecture as an algorithmic skeleton for $k$-means clustering.}
We generalize our $k$-means transformer architecture to give increased algorithmic expressibility by incorporating normalization (e.g., LayerNorm~\citep{ba2016layer} and RMSNorm~\citep{zhang2019root}) and nonlinear transformations (e.g., multilayer perceptrons) --- standard ingredients of the transformer architecture.
As detailed in \Cref{tab:comp-alg}, different choices for each of these transformer components can lead to different (and novel) clustering algorithms.}
\label{fig:gen-arch}
\end{figure*}
~
\begin{table*}[!!!h]
\caption{{\bf Different configurations of the transformer architecture result in different clustering algorithms.} { We consider different configurations of (i)~the encoder and decoder self-attention (Enc-SA and Dec-SA respectively), (ii)~the cross-attention with encoder tokens as queries and decoder tokens as keys/values (Enc-CA), (iii)~the cross-attention with decoder tokens as queries and encoder tokens as keys/values (Dec-CA), (iv)~the normalization (Norm) and (v)~the token-wise nonlinear transformation using a feed-forward network with a single hidden layer (NLT). \Cref{fig:gen-arch} visualizes this architectural template. SMA denotes the standard soft-max attention while LSMA denotes the limiting soft-max attention where the soft-max temperature approaches its lower limit. LinA denotes linear attention where the attention scores do not go through a nonlinear function (such as $\exp(\cdot)$) before the normalization. GSMA denotes the Gumbel-soft-max attention where we sample a key/value token based on the attention scores~\citep{jang2017categorical}. NMA denotes the ``norm-max'' attention resulting from the norm $\alpha$-negentropy regularized $\arg\max$~\citep{blondel2020learning, santos2024sparse}; note that soft-max is Shannon negentropy regularized $\arg\max$. SSMA denotes sparse versions of the soft-max such as sparse-max~\citep{martins2016softmax} or top-$k$ attention~\citep{gupta2021memory}. The \ip and \ed denote the dot-product based (standard) and negative squared Euclidean distance based attention scores respectively. The problems and algorithms marked with ${}^\star$ signify that we can also consider variations such as spherical clustering and use of \ip attention scores instead of \ed attention scores. The `\cmark' denotes the use of a component, while `-' marks ununsed component.}}
\label{tab:comp-alg}
\begin{center}
{\footnotesize
\vskip -0.1in
\begin{tabular}{cccccclc}
\toprule
Enc-SA & Enc-CA & Dec-SA & Dec-CA & Norm & NLT & Algorithm & Appendix \\
\midrule
\ed-LSMA & \ed-LSMA & \ip-LSMA & \ip-LSMA & -      & -      & Lloyd's algorithm / Euclidean $k$-means & \ref{sec:methods:icll} \\
\ip-LSMA & \ip-LSMA & \ip-LSMA & \ip-LSMA & -      & \cmark & Lloyd's algorithm / Euclidean $k$-means & \ref{sec:methods:icll:dp} \\
\ip-LSMA & \ip-LSMA & \ip-LSMA & \ip-LSMA & \cmark & -      & Lloyd's algorithm / Spherical $k$-means & \ref{sec:methods:spherical} \\
\ed-LSMA & \ed-SMA  & \ip-LSMA & \ip-LinA & -      & -      & ${}^\star$Soft $k$-means & \ref{sec:methods:icll} \\
\ed-LSMA & \ed-LSMA & \ip-LSMA & \ed-NMA  & -      & -      & ${}^\star$Trimmed $k$-means & \ref{sec:methods:robust} \\
\midrule
\ed-LSMA & \ed-LSMA & \ip-LSMA & \ed-SSMA & -      & -      & ${}^\star$Robust $k$-means & \ref{sec:methods:new:robust} \\
\ed-LSMA & \ed-LSMA & \ip-LSMA & \ed-GSMA & -      & -      & ${}^\star$Euclidean $k$-medoids & \ref{sec:methods:new:kmeds} \\
\bottomrule
\end{tabular}
}
\end{center}
\end{table*}
\clearpage

\section{Selective Attention for Clustering} \label{sec:methods:selective}

As discussed above, a key component of attention, and so of transformer processing, is the \emph{soft-max} operation, denoted $\sigma_\gamma$, which for inverse-temperature $\gamma > 0$ and $a\in\R^m$ has entries
\begin{equation}\label{eq:smax}
\sigma_\gamma(a)^i \triangleq \frac{\exp(\gamma a^i)}{\sum_{j\in [m]} \exp(\gamma a^j)},
\end{equation}
where $a^i$ denotes the $i$-th entry in the vector $a$.
We can write \cref{eq:smax} as $\sigma_\gamma(a) = \exp(\gamma a) (\mathbf{1}_m^\top \exp(\gamma a))^{-1}$, where vector $\exp(\gamma a)$ is entrywise, having $\exp(\gamma a)^i = \exp(\gamma a^i)$, so that $\mathbf{1}_m^\top \exp(\gamma a)$ is the denominator in \cref{eq:smax}.
If $A\in\R^{m\times n}$, then applying the soft-max operation to each column of $A$ can thus be expressed as
$\sigma_\gamma(A) = \exp(\gamma A) \diag(\mathbf{1}_m^\top \exp(\gamma A))^{-1}$, where $\exp(\gamma A)$ is entrywise, and the $\diag$ operator fills the nonzero entries of an $n\times n$ diagonal matrix with the entries of an $n$-dimensional vector.
Given a vector $z\in\R^m$, $z^\top\sigma_\gamma(a)$ is the real number that is a linear combination of the entries of $z$, using the weights of $\sigma_\gamma(a)$.
Similarly, for $Z\in\R^{d\times m}$ and matrix $A\in\R^{m\times n}$, each column of $Z\sigma_\gamma(A)\in\R^{d\times n}$ is a linear combination of columns of $Z$, using the weights of a column of~$A$.

The attention operation is said to be applied from $X\in\R^{d\times n}$ to $Z\in\R^{d\times m}$, using (learnable) attention parameters $Q, K, V \in \R^{d \times d}$, by computing
\footnote{In general, the $Q,K \in \R^{d' \times d}$ for some $d' \not= d$ for increased expressivity, but we will only require $d' = d$, and hence forego this additional notation.}
\begin{equation}\label{eq:attn-update}
\attnip\gamma(X, Z; Q, K, V) \triangleq VZ\sigma_\gamma(A) \in \R^{d \times n},
\end{equation}
where $A = (KZ)^\top (QX) \in \R^{m \times n}$ is the pre-soft-max attention score matrix.
For vector $x$, let $\sigma_\lin(x)^i \triangleq x^i/\sum_{i'} x^{i'}$.
Then \emph{linear attention} is
$\attnip\lin(X, Z; Q, K, V) \triangleq VZ\sigma_\lin(A)$, where $A$ is as above, and $\sigma_\lin$ is applied columnwise.
Note that here the $(j,i)$-th entry of the attention score matrix has $A^{ji} = \inpr{K z_j }{ Q x_i}$, (the \ip-attention in \Cref{tab:comp-alg}).
However, one can also define the attention score as $A^{ji} = -\eudi{K z_j }{ Q x_i}$ (the \ed-attention in \Cref{tab:comp-alg})~\citep{tsai2019transformer}, and this version will be denoted as $\attned\gamma(\ldots)$.
This is more formally expressed as follows.
\begin{definition}
For matrices $X\in\R^{d\times n}$ with columns $x_i$, and $C\in\R^{d\times k}$ with columns $c_j$, define the matrix $\DistPairs(C,X)\in\R^{k\times n}$ with $\DistPairs(C,X)^{ji} = -\eudi{c_j}{x_i}$.
\end{definition}
With these definitions,
\begin{equation}\label{eq:attned-update}
\attned\gamma(X, Z; Q, K, V) \triangleq VZ\sigma_\gamma(\DistPairs(KZ,QX)) \in \R^{d \times n}.
\end{equation}
When $\sigma_\lin$ is used instead of $\sigma_\gamma$, the result is
$\attned\lin(X, Z; Q, K, V)$.

Note that for integer $m\ge 0$, with $0_{p \times q}$ denoting the $(p \times q)$ zero matrix,
\begin{equation}\label{eq zeropad}
\DistPairs\left(\qa{C}{0_{m\times k}},\qa{X}{0_{m\times n}}\right) = \DistPairs(C,X)
\end{equation}

When $\gamma=\infty$ (discussed below), we omit it, and have $\attnip{}(\ldots)$ and $\attned{}(\ldots)$. Note that, there is no $\gamma$ in linear attention as it has no effect on linear attention (the $\gamma$ values get canceled out in the numerator and denominator of $\sigma_\lin$.

As $\gamma\to\infty$, the vector $\sigma_\gamma(a)$ becomes concentrated on the indices of the largest coordinates of~$a$.
We can define $\sigma_\infty(a) = \lim_{\gamma \to \infty} \sigma_\gamma(a)$, the limiting soft-max.
If $M$ is the set of indices at which $a$ is maximum, then in general $\sigma_\infty(a)^i = \frac1{|M|}$ if $i\in M$, and is zero otherwise.
For example, the vector $[1, 0, 1, -2]$ has $M=\{1,3\}$, so $|M|=2$ and $\sigma_\infty([1, 0, 1, -2]) = [\nicefrac{1}{2}, 0, \nicefrac{1}{2}, 0]$.
Since the attention is split evenly across the maxima, $\sigma_\infty$ has been called AHAT or {\em averaging hard attention} in the literature~\citep{strobl2024formal}.
If $|M|=1$, that is, $a$ has a unique maximum coordinate, then $\sigma_\infty(a)$ is a one-hot encoding, with a single nonzero entry $\frac1{|M|}=1$ at the index in $M$, and is zero otherwise.
As an example, $\sigma_\infty([1.4, 1.5, -10]) = [0, 1, 0]$.

To see the use of the unique ($|M|=1$) case in the context of clustering, let the $n$ points to be clustered $\{ x_1, \ldots, x_n \}$ serve as the query tokens, arranged into $X\in\R^{d\times n}$, the current $k$ centers $\{c_1, \ldots, c_k\}$ as the key tokens, arranged as $C\in\R^{d\times k}$, and we use the \ed-attention score $A=\DistPairs(C,X)$.
That is, the attention score matrix $A=\DistPairs(C,X)\in \R^{k \times n}$ has $A^{ji} = - \| c_j - x_i \|^2$.
Then $\sigma_\infty(A)$, which applies soft-max to each column of $A$, will give us the per-point cluster assignment, with the $i$-th column of $\sigma_\infty(A)$ denoting the cluster assigned to point $i$.
(It follows also that $C\sigma_\infty(A)\in\R^{d\times n}$ has column $i$ equal to the center closest to $x_i$.)

Consider the following example where we have $n = 9$ points (queries) and $k = 3$ cluster centers (keys), resulting in a $3 \times 9$ attention score matrix (with negative squared Euclidean distance based scores):
\begin{equation} \label{eq:clus-assign-ex-pre-lsa}
A = 
{\footnotesize
\left[\begin{array}{lllllllll}
-0.1 & -0.2 & -0.7 & -0.2 & -0.8 & -0.3 & -0.8 & -0.2 & -0.1 \\
-0.5 & -0.1 & -0.1 & -0.4 & -0.7 & -0.2 & -0.9 & -0.3 & -0.2 \\
-0.7 & -0.3 & -0.2 & -0.6 & -0.1 & -0.3 & -0.1 & -0.7 & -0.7
\end{array}\right].
}
\end{equation}
Considering the cluster assignment step in \cref{fig:overview}A (line 4), points $\{1, 4, 8, 9\}$ should be assigned to cluster 1, points $\{2,3,6\}$ to cluster 2, and points $\{5,7\}$ to cluster 3.
Applying the columnwise limiting soft-max to this attention matrix results in the following:
\begin{equation} \label{eq:clus-assign-ex-post-lsa}
\sigma_\infty(A) = 
{\footnotesize
\left[\begin{array}{lllllllll}
1 & 0 & 0 & 1 & 0 & 0 & 0 & 1 & 1 \\
0 & 1 & 1 & 0 & 0 & 1 & 0 & 0 & 0 \\
0 & 0 & 0 & 0 & 1 & 0 & 1 & 0 & 0
\end{array}\right],
}
\end{equation}
with each column corresponding to a cluster index in one-hot encoded form: columns $\{1, 4, 8, 9\}$ are $[1, 0, 0]$ (or cluster 1), columns $\{2,3,6\}$ are $[0, 1, 0]$ (cluster 2), and columns $\{5, 7\}$ are $[0, 0, 1]$ (cluster 3).

The non-unique case ($|M|>1$) of the limiting soft-max is also useful for clustering.
Consider the $k$ centers as the queries, and the $n$ points as the keys, and an attention score matrix $B \in \R^{n \times k}$ where $B_{ij} = 1$ if point $i$ is assigned to cluster $j$, or $0$ otherwise.
Note that, this attention score matrix happens to be exactly the transpose of the cluster assignment matrix obtained in \cref{eq:clus-assign-ex-post-lsa} --- that is, $B = \sigma_\infty(A)^\top$.
Applying columnwise limiting soft-max $\sigma_\infty(B)$ to this attention score matrix $B$ results in each column being inversely scaled by the number of nonzeros in that column (which is equal to the number of points assigned to the corresponding cluster).
Continuing with the example in \cref{eq:clus-assign-ex-post-lsa}, we will have:
\begin{equation} \label{eq:clus-center-lsa}
B = \sigma_\infty(A)^\top = 
{\footnotesize
\left[\begin{array}{lll}
1 & 0 & 0 \\
0 & 1 & 0 \\
0 & 1 & 0 \\
1 & 0 & 0 \\
0 & 0 & 1 \\
0 & 1 & 0 \\
0 & 0 & 1 \\
1 & 0 & 0 \\
1 & 0 & 0
\end{array}\right]
}
\quad\text{with}\quad
\sigma_\infty(B) = 
{\footnotesize
\left[\begin{array}{lll}
\nicefrac{1}{4} & 0 & 0 \\
0 & \nicefrac{1}{3} & 0 \\
0 & \nicefrac{1}{3} & 0 \\
\nicefrac{1}{4} & 0 & 0 \\
0 & 0 & \nicefrac{1}{2} \\
0 & \nicefrac{1}{3} & 0 \\
0 & 0 & \nicefrac{1}{2} \\
\nicefrac{1}{4} & 0 & 0 \\
\nicefrac{1}{4} & 0 & 0
\end{array}\right].
}
\end{equation}
The product of the matrix $X$ and the limiting soft-max attention matrix $\sigma_\infty(B)$ results in exactly the new cluster centers computed in the Lloyd's algorithm for Euclidean $k$-means (\cref{fig:overview}A, line 5).
This can be demonstrated in the above example as follows:
\begin{equation} \label{eq:clus-center-comp}
\begin{split}
X \cdot \sigma_\infty(B) & =
[x_1,\ x_2 ,\ x_3 ,\ x_4 ,\ x_5 ,\ x_6 ,\ x_7 ,\ x_8 ,\ x_9]
{\footnotesize
\left[\begin{array}{lll}
\nicefrac{1}{4} & 0 & 0 \\
0 & \nicefrac{1}{3} & 0 \\
0 & \nicefrac{1}{3} & 0 \\
\nicefrac{1}{4} & 0 & 0 \\
0 & 0 & \nicefrac{1}{2} \\
0 & \nicefrac{1}{3} & 0 \\
0 & 0 & \nicefrac{1}{2} \\
\nicefrac{1}{4} & 0 & 0 \\
\nicefrac{1}{4} & 0 & 0
\end{array}\right]
}
\\
& =
\left[
\frac{(x_1 + x_4 + x_8 + x_9)}{4}, \
\frac{(x_2 + x_3 + x_6)}{3}, \
\frac{(x_5 + x_7)}{2}
\right],
\end{split}
\end{equation}
which are exactly the cluster centers computed in Lloyd's.
Thus, \cref{eq:clus-assign-ex-post-lsa} and \cref{eq:clus-center-comp} demonstrate that the attention mechanism is uniquely suited for clustering, with the attention mechanism directly computing the necessary quantities in the Lloyd's algorithm.

Making a general claim as in the examples above, we have the following lemma, after a definition

\begin{definition}\label{def Y}
    For $C\in\R^{d\times k}, X\in\R^{d\times n}$, define the matrix $Y^\gamma(C,X)\triangleq \sigma_\gamma(\DistPairs(C,X))\in\R^{k\times n}$.
    When $\gamma=\infty$, we can omit it.
\end{definition}

\begin{lemma}\label{lem block lloyd}
For $X\in\R^{d\times n}$ and $C\in\R^{d\times k}$,
$A=\DistPairs(C, X)$ has $\sigma_\infty(A)=Y(C,X)$.
Also for $B=\sigma_\infty(A)^\top$, $\sigma_\infty(B)_{ij}=1/m_j$ when $x_i$ has $c_j$ closest and there are $m_j$ such $x_i$, and $\sigma_\infty(B)_{ij}$ is zero otherwise.
Finally, if $C$ denotes the centers at one iteration of Lloyd's algorithm, then
\[
X\sigma_\infty(B) =
X\sigma_\infty(Y(C,A)^\top) =
X\sigma_\infty(\sigma_\infty(\DistPairs(C, X))^\top)
\] denotes the centers at the next iteration.
\end{lemma}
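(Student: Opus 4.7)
The plan is to verify the three assertions essentially by unpacking definitions and computing, column by column, the action of the column-wise $\sigma_\infty$ on the successive matrices $A$, $B=\sigma_\infty(A)^\top$, and then the product $X\sigma_\infty(B)$. The first equality $\sigma_\infty(A)=Y(C,X)$ is immediate from the definition of $Y^\gamma$: taking $\gamma=\infty$ gives $Y(C,X)=\sigma_\infty(\DistPairs(C,X))=\sigma_\infty(A)$, with nothing further to check.

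For the second claim, I would inspect column $i$ of $A$: its entries are $-\eudi{c_j}{x_i}$ for $j\in[k]$, so the maximal entries occur precisely at those indices $j$ for which $c_j$ is closest (in squared Euclidean distance) to $x_i$. In the generic case of a unique closest center, $\sigma_\infty$ returns the one-hot vector with a $1$ at row $j^\star(i)$, so column $i$ of $\sigma_\infty(A)$ is this one-hot encoding of $x_i$'s cluster assignment. Transposing places these indicators as rows of $B$; equivalently, column $j$ of $B$ has $1$'s in exactly the $m_j$ rows indexed by the points currently assigned to cluster $j$, and $0$'s elsewhere. Applying $\sigma_\infty$ column-wise to $B$, the maximizing index set in column $j$ is that $m_j$-element subset, and the averaging rule $\sigma_\infty(a)^i=1/|M|$ yields entries $1/m_j$ at exactly those rows and zero elsewhere, as claimed.

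The third claim is then a direct matrix-multiplication computation: the $j$-th column of $X\sigma_\infty(B)$ equals $\frac{1}{m_j}\sum_{i\,:\,a_i=j}x_i$, which is precisely the updated center of cluster $j$ produced by line~5 of Lloyd's algorithm. Substituting $B=Y(C,X)^\top=\sigma_\infty(\DistPairs(C,X))^\top$ then gives the displayed nested-$\sigma_\infty$ expression, completing the proof.

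The only real obstacle is handling the degenerate situations in which $|M|>1$ for one of the soft-max applications: ties (an $x_i$ equidistant from two or more centers, so column $i$ of $A$ has several maxima) or empty clusters (an identically-zero column of $B$). In the tie case, $\sigma_\infty$ splits the one-hot evenly across the tied rows, so $x_i$ is fractionally assigned; this AHAT semantics departs from a strict reading of line~4 of Lloyd's, so I would either impose a generic-position assumption or state the equivalence under the AHAT tie-breaking convention. In the empty-cluster case, $\sigma_\infty$ applied to the all-zero column returns the uniform vector $(1/n)\mathbf{1}$, producing the global mean as a default ``center'' --- a well-defined but unorthodox extension that should be explicitly flagged. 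Either way, the core algebraic identification proceeds exactly as above in the generic case, matching the worked example in the text.
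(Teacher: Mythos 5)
Your proof is correct. Note that the paper itself gives no argument for this lemma --- its proof is literally ``Omitted'' --- so there is nothing to compare against beyond the worked numerical examples in \S\ref{sec:methods:selective} (\cref{eq:clus-assign-ex-post-lsa}--\cref{eq:clus-center-comp}), and your column-by-column unpacking of the definitions is exactly the general form of that example. Two points in your favor: you silently repair the typo $Y(C,A)$ for $Y(C,X)$ in the lemma's displayed equation, and your explicit flagging of the two degenerate cases (ties, where AHAT fractionally assigns a point and so departs from a strict reading of line~4 of \Cref{alg:lloyds}, and empty clusters, where $\sigma_\infty$ of an all-zero column returns $\tfrac1n\mathbf{1}_n$ and hence the global mean) identifies genuine caveats that the lemma's statement quietly assumes away and that any complete proof should state as hypotheses.
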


\begin{proof}
Omitted.
\end{proof}

We note that for $x\in\{0,1\}^d$, $\sigma_\infty(x)=\sigma_\lin(x)$, and if also $\norm{x}=1$ ($x$ is a natural basis vector), then $\sigma_\infty(x)=x$.
Thus for example $\sigma_\infty(Y(C,A)^\top)=\sigma_\lin(Y(C,A)^\top)$, and $\sigma_\infty(I_d)=\sigma_\lin(I_d)=I_d$.

\clearpage

\section{In-context Lloyd's for Euclidean Clustering} \label{sec:methods:icll}

\begin{algorithm}[tb]
\SetAlgoCaptionSeparator{}
\caption{Lloyd's algorithm to cluster $n$ points into $k$ clusters}
\label{alg:lloyds}
\DontPrintSemicolon
{\footnotesize
Input: Instances $\{x_i \in \R^d, i \in [n]  \}$\;
Input: Number of iterations $T$\;
Input: Initial centers $\{c_j^{(0)} \in \R^d, j \in [k] \}$\;
\For{$t = 1, \ldots, T$} {
  $\forall i \in [ n ], \ a_{i}^{(t)} \gets \argmin_{j \in [ k  ]} \eudi{x_i}{c_j^{(t-1)}}$ \;
  $\forall j \in [ k ], \ c_j^{(t)} \gets \argmin_{c \in \R^d} \sum_{ i \in [ n ]: a_i^{(t)} = j}  \eudi{ x_i }{ c }$
}
\Return{$\{ c_j^{(T)}, j \in [k] \}$}
}
\end{algorithm}

\begin{algorithm}[tb]
\SetAlgoCaptionSeparator{}
\caption{Soft $k$-means clustering}
\label{alg:soft-kmeans}
\DontPrintSemicolon
{\footnotesize
Input: Instances $\{x_i \in \R^d, i \in [n] \}$\;
Input: Number of iterations $T$\;
Input: Initial centers $\{c_j^{(0)} \in \R^d, j \in [k] \}$\;
Input: Inverse temperature $\gamma > 0$\;
\For{$t = 1, \ldots, T$} {
  \tcp{Partially assign points to clusters}
  $\forall i \in [n], j \in [k], \ w_{ij}^{(t)} \gets \frac{
    \exp(-\gamma \eudi{x_i}{c_j^{(t-1)}})
  }{
    \sum_{j'=1}^k \exp(-\gamma \eudi{x_i}{c_{j'}^{(t-1)}})
  }$ \;
  \tcp{Update centers with partial assignments}
  $\forall j \in [k], \ c_j^{(t)} \gets \arg \min_{c \in \R^d} \sum_{i=1}^n w_{ij}^{(t)} \eudi{x_i}{c}$
}
\Return{$\{ c_j^{(T)}, j \in [k] \}$}
}
\end{algorithm}

Here we will establish transformer configurations that lead to the in-context execution of Lloyd's algorithm (\cref{fig:overview}A, also \cref{alg:lloyds}). We will present the theoretical result in a general form which will also allow us to show a similar result for the soft $k$-means clustering algorithm~\citep{dunn1974fuzzy, bezdek2013pattern} shown in \Cref{alg:soft-kmeans}. Note that soft $k$-means clustering allows points to be partially assigned to multiple clusters (line 5, \Cref{alg:soft-kmeans}. Given these partial assignments, the centers are updated making use of all the instances (as opposed to only instances assigned to the cluster as in discrete $k$-means). The partial assignment weights are used to compute the new cluster centers (line 6, \Cref{alg:soft-kmeans}).

\subsection{Transformers with Euclidean attention} \label{sec:methods:icll:neg-sq-eudi}

While we demonstrate above how the (limiting) soft-max attention between points and cluster centers can be used to implement Lloyd's algorithm, we have not shown how to repeatedly perform the cluster assignment and centers update steps in-context in a transformer architecture.
For this, we use specific embeddings of the points and centers, and make use of the self-attention and residual connections in a transformer.

\paragraph{Embedding the input.}
We embed each point $x_i \in \R^d$ into a $(d+k)$-dimensional vector $[x_i^\top, y_i^\top]^\top$, where the last $k$ dimensions are placeholders for the (evolving) cluster assignment of this point.
Initially, we can set $y_i = 0_k$ for all $i = 1, \ldots, n$.
Arranging these $n$ embeddings as $(d+k)$-dimensional columns gives us the initial encoder embeddings $X^{(0)} \in \R^{(d+k) \times n}$, which we can also write as $\qa{X}{Y^{(0)}}$,
where $X\in\R^{d\times n}$ is the input, and $Y^{(0)}\in\R^{k\times n}$ is the matrix of cluster assignments.

We also embed the initial centers $c_j \in \R^d$ into a $(d+k)$-dimensional vector $[c_j^\top, e_j^\top]^\top$, where $e_j$ is the $j$-th characteristic vector (a one-hot vector with the $j$-th entry equal to 1), and the last $k$ dimensions denote the index of the cluster, while the first $d$ dimensions are the placeholders for the (evolving) cluster centers.
Stacking the per-center embeddings gives us the initial decoder embeddings $\tilC^{(0)} \in \R^{(d+k) \times n}$,
which we can also write as $\tilC^{(0)} \triangleq\qa{C^{(0)}}{I_k}$.

In short, we use matrices
\begin{align}\label{eq ed iterates}
    X^{(t)} &\triangleq \qa{X}{Y^{(t)}}\nonumber\\
    Y^{(t)} & = Y(C^{(t-1)}, X) \mathrm{\ has\ } Y^{(t)}_{ji}=1\mathrm{\ if\ } x_i\mathrm{\ assigned\ to\ } c_j, \mathrm{\ and\ } 0\mathrm{\ otherwise}\nonumber\\
    \tilC^{(t)} & \triangleq \qa{C^{(t)}}{I_k}
\end{align}

\paragraph{Updates in-context.}
Consider the $(t+1)$-th transformer layer, with $X^{(t)}$ and $C^{(t)}$ denoting the input to the $(t+1)$-th encoder and decoder layer respectively. Then we define the attention based updates as follows:
\begin{align} \label{eq:l2-clus-trf-updates}
X^{(t+1)} & \gets X^{(t)}
     + \attned{}(X^{(t)}, \tilC^{(t)}, Q_1, K_1, V_1)
     + \attned{}(X^{(t)}, X^{(t)}, Q_2, K_2, V_2) \nonumber \\
\tilC^{(t+1)} & \gets \underbrace{\tilC^{(t)}}_{\text{residual}}
    + \underbrace{\attnip{}(\tilC^{(t)}, X^{(t+1)}, Q_3, K_3, V_3)}_{\text{cross-attention}}
    + \underbrace{\attnip{}(\tilC^{(t)}, C^{(t)}, Q_4, K_4, V_4)}_{\text{self-attention}},
\end{align}
where, recalling \Cref{fig:architecture}, in both the encoder and decoder updates, the first term on the right-hand-side corresponds to the residual connections, the second term corresponds to the cross-attention, and the last term corresponds to the self-attention. Here, the $\{(Q_\mu, K_\mu, V_\mu), \mu = 1,\ldots, 4\}$ correspond to the parameters of the transformer. As we discussed above, the cross-attention computes the updated per-point cluster assignments and cluster centers. We can set the transformer parameters appropriately so that:
\begin{enumerate}
\item The first $d$ dimensions of the encoder tokens $X^{(t)}$ do not change, while the last $k$ dimensions are appropriately updated to match the cluster assignments as per the Lloyd's iteration (\cref{alg:lloyds}, line 4).
\item The last $k$ dimensions of the decoder tokens $C^{(t)}$ do not change, while the first $d$ dimensions are appropriately updated to match new cluster centers as per the Lloyd's iteration (\cref{alg:lloyds}, line 5).
\item The encoder cross-attention computes the update to the last $k$ dimensions of the encoder tokens as shown in \cref{eq:clus-assign-ex-post-lsa}, while the decoder cross-attention computes the update to the first $d$ dimensions of the decoder tokens as shown in \cref{eq:clus-center-comp}.
\item The encoder (decoder) self-attention exactly computes the additive inverse of the current last $k$ dimensions (first $d$ dimensions) of the encoder (decoder) tokens, and when combined with the residual connection in \cref{eq:l2-clus-trf-updates}, it effectively resets the current cluster assignments (cluster centers) to zero, allowing the additive cross-attention update to modify the encoder (decoder) tokens as per Lloyd's algorithm.
\end{enumerate}
More precisely, we show the following:
\begin{theorem}\label{thm:l2-clus eu}
Given as input a set of points $S = \{x_1, \ldots, x_n\}$ and initial centers $C = \{c_1^{(0)}, \ldots, c_k^{(0)} \}$, or arranged in columns, $X\in\R^{d\times n}$ and $C\in\R^{d\times k}$, there exists parameters $\{(Q_\mu, K_\mu, V_\mu), \mu = 1, \ldots, 4\}$ for the transformer
\begin{align}
X^{(t+1)} & \gets X^{(t)}
     + \attned\gamma(X^{(t)}, \tilC^{(t)}, Q_1, K_1, V_1)
     + \attned{}(X^{(t)}, X^{(t)}, Q_2, K_2, V_2) \nonumber \\
\tilC^{(t+1)} & \gets \tilC^{(t)}
    + \attnip\xi (\tilC^{(t)}, X^{(t+1)}, Q_3, K_3, V_3)
    + \attnip{}(\tilC^{(t)}, \tilC^{(t)}, Q_4, K_4, V_4)
\end{align}
such that if $\gamma=\xi=\infty$, then the output of the $T$-th transformer layer exactly matches the output of Lloyd's algorithm (\cref{alg:lloyds}) with the same input after $T$ iterations.
That is, $Y^{(t+1)}=Y(C^{(t)}, X)$, and $C^{(t+1)} = X \sigma_\infty((Y^{(t+1)})^\top)$, for $t\in [T-1]$,
as in Lloyd's algorithm.
If $\gamma < \infty$ and $\xi=\lin$ (that is, linear attention), then $Y^{(t+1)}=Y^\gamma(C^{(t)}, X)$ and $C^{(t+1)} = X \sigma_\lin(Y^{(t+1)})^\top$, for $t\in [T-1]$, as in soft $k$-means (\Cref{alg:soft-kmeans}).
\end{theorem}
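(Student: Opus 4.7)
The plan is to proceed by induction on $t$, maintaining the block-structured invariant $X^{(t)}=\qa{X}{Y^{(t)}}$ and $\tilC^{(t)}=\qa{C^{(t)}}{I_k}$ from \cref{eq ed iterates}, where $Y^{(0)}=0_{k\times n}$ and $C^{(0)}$ is the given initial set of centers. The choices in \cref{eq:trf-param-vals} make each of the projections $Q_\mu, K_\mu, V_\mu$ into block-coordinate selectors via $I_{d+k}^{:d}$ and $I_{d+k}^{d+1:}$, so every one of the four attention terms writes to exactly one of the two row-blocks (top $d$ rows or bottom $k$ rows). Consequently the encoder and decoder updates decouple cleanly, and the strategy is to compute each attention term in closed form and read off $Y^{(t+1)}$ and $C^{(t+1)}$ from the sum with the residual.

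For the encoder, the cross-attention has queries and keys projected onto the top $d$ rows, so \cref{eq zeropad} collapses its distance-based score matrix to $\DistPairs(C^{(t)},X)$, and since $V_1=I_{d+k}^{d+1:}$ keeps only the lower $I_k$ of $\tilC^{(t)}$, the output is $\qa{0}{Y^\gamma(C^{(t)},X)}$ by \Cref{def Y}. The encoder self-attention similarly reduces its score matrix to the $n\times n$ symmetric matrix with entries $-\eudi{x_j}{x_i}$, whose maxima along each column lie on the diagonal; with $V_2=-I_{d+k}^{d+1:}$ the output equals $-\qa{0}{Y^{(t)}}$, so adding to the residual $X^{(t)}$ cancels the previous assignment and installs $Y^{(t+1)}=Y^\gamma(C^{(t)},X)$ in its place, giving $Y^{(t+1)}=Y(C^{(t)},X)$ when $\gamma=\infty$. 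For the decoder, $Q_3$ and $K_3$ select the bottom $k$ rows, turning the dot-product score into $(Y^{(t+1)})^\top I_k = (Y^{(t+1)})^\top$, while $V_3=I_{d+k}^{:d}$ retains only $X$ from $X^{(t+1)}$; the cross-attention output is therefore $\qa{X\sigma_\xi((Y^{(t+1)})^\top)}{0}$. For $\xi=\infty$, \Cref{lem block lloyd} identifies this with the Lloyd update $C^{(t+1)}=X\sigma_\infty(Y(C^{(t)},X)^\top)$; for $\xi=\lin$, column-normalization of $(Y^{(t+1)})^\top$ produces the column-stochastic weights $w_{ij}/\sum_{i'}w_{i'j}$, recovering exactly the soft $k$-means update from \Cref{alg:soft-kmeans}. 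The decoder self-attention has score matrix $I_k^\top I_k = I_k$, whose column-wise $\sigma_\infty$ is again $I_k$, so with $V_4=-I_{d+k}^{:d}$ it produces $-\qa{C^{(t)}}{0}$; summing with the residual yields $\tilC^{(t+1)}=\qa{C^{(t+1)}}{I_k}$, completing the inductive step.

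The main obstacle is verifying that the two self-attention terms give the expected additive inverses of the placeholder blocks. For the decoder this is immediate, since the cluster tags $e_1,\ldots,e_k$ forming $I_k$ are distinct by construction, so $I_k$ is already its own $\sigma_\infty$-image. For the encoder it is subtler: when all $x_i$ are distinct, each column of $[-\eudi{x_j}{x_i}]$ has a unique maximum at $j=i$ and $\sigma_\infty$ returns the identity, but if some points coincide the AHAT convention spreads attention equally across the duplicates. The rescue is that identical points share a nearest center and hence identical columns of $Y^{(t)}$, so averaging over duplicates leaves the vector unchanged, and the self-attention still returns $-\qa{0}{Y^{(t)}}$. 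Once this identity and its decoder counterpart are established, the remainder is bookkeeping using \cref{eq zeropad} and \Cref{lem block lloyd}, and the two cases $(\gamma=\xi=\infty)$ and $(\gamma<\infty,\ \xi=\lin)$ fall out simultaneously from the same block calculations with only the choice of column-normalization in the decoder cross-attention differing.
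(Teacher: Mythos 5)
Your proposal is correct and follows essentially the same route as the paper's proof: the identical parameter choices from \cref{eq:trf-param-vals}, the same block-wise evaluation of the four attention terms using \cref{eq zeropad}, and the same cancellation of the placeholder blocks by self-attention plus the residual connection, with the two cases $\xi=\infty$ and $\xi=\lin$ read off from the decoder cross-attention. The one place you go beyond the paper is in justifying that the encoder self-attention returns exactly $-\qa{0_{d\times n}}{Y^{(t)}}$ even when points coincide --- the paper simply asserts $\sigma_\infty(\DistPairs(X,X))=I_n$ via \Cref{lem block lloyd} (whose proof is omitted), whereas your observation that duplicated points share identical columns of $Y^{(t)}$, so the AHAT averaging over duplicates is harmless, explicitly closes that small edge case.
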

We will specify our matrices for attention in a compact form, using the following notation.

\begin{definition}\label{def:lin-op}
    For integers $0< i\le j\le d$, let $I_d^{i:j}$ denote the linear operator on vectors $x\in\R^{d}$ with $d\ge j$ such that $(I_d^{i:j}x)_k = x_k$ for $i\le k\le j$, and zero otherwise.
    (We can implement $I^{i:j}_d$ as a diagonal matrix with entries $(I_d^{i:j})_{k,k}=1$ for $k=i,i+1,\ldots,j$, and all other entries zero.)
    When $j=d$, we may write $I_d^{i:}$, and when $i=1$, we may write $I_d^{:j}$.
    When $d$ can be inferred from context (e.g., from matrices in a product with $I_d^{i:j}$), then we may write e.g. $I_d^{i:}$.
\end{definition}

Note that for $x\in\R^d$, $I_d^{i:j}x$ zeroes out the entries $x$ with indices in $1,2\ldots,i-1$ and $j+1,j+2,\ldots d$, and leaves the rest alone.

\begin{proof}[Proof of Theorem~\ref{thm:l2-clus eu}]
Let $K_1=Q_1=K_2=Q_2 = I_{d+k}^{:d}$, and $V_1=-V_2=I_{d+k}^{d+1:}$. Note that $I_{d+k}^{:d}$ and $I_{d+k}^{d+1:}$ are $(d+k) \times (d+k)$ matrices as defined in \Cref{def:lin-op}, operating on vector in $\R^{d+k}$.

To update cluster assignments for Lloyd's algorithm, that is, to compute $X^{(t+1)}$ from $X^{(t)}$ and $\tilC^{(t)}$, we compute
\begin{equation}\label{eq:l2-clus-assign-eu}
\begin{aligned}
\quad & \attned\gamma(X^{(t)}, \tilde C^{(t)}, I_{d+k}^{:d},I_{d+k}^{:d},I_{d+k}^{d+1:})
                + \attned{}(X^{(t)}, X^{(t)}, I_{d+k}^{:d},I_{d+k}^{:d},-I_{d+k}^{d+1:})
\\ & = I_{d+k}^{d+1:}\tilC^{(t)}\sigma_\gamma(\DistPairs(I_{d+k}^{:d}\tilC^{(t)}, I_{d+k}^{:d} X^{(t)}))
                - I_{d+k}^{d+1:}X^{(t)}\sigma_\infty(\DistPairs(I_{d+k}^{:d} X^{(t)}, I_{d+k}^{:d} X^{(t)}))
\\ & = \qa{0_{d\times k}}{I_k}\sigma_\gamma\left(\DistPairs\left(\qa{C^{(t)}}{0_{k\times k}}, \qa{X}{0_{k\times n}}\right)\right)
                     - \qa{0_{d\times n}}{Y^{(t)}}\sigma_\infty(\DistPairs\left(\qa{X}{0_{k\times n}}, \qa{X}{0_{k\times n}}\right))
\\ & = \qa{0_{d\times k}}{I_k}\sigma_\gamma(\DistPairs(C^{(t)}, X))
                     - \qa{0_{d\times n}}{Y^{(t)}}\sigma_\infty(\DistPairs(X,X)) \mathrm{\ \ \ \ \ \ \ using\ \eqref{eq zeropad}}
\\ & = \qa{0_{d\times k}}{I_k}Y^\gamma(C^{(t)}, X)
    - \qa{0_{d\times n}}{Y^{(t)}}I_n. \mathrm{\ \ \ \ \ \ using\ Lemma~\ref{lem block lloyd}},\ \sigma_\infty(\DistPairs(X, X))=I_n
\\ & = \qa{0_{d\times n}}{Y^\gamma(C^{(t)}, X)-Y^{(t)}}
\end{aligned}\end{equation}
We add this quantity to $X^{(t)}=\qa{X}{Y^{(t)}}$ and assign it to $X^{(t+1)}$, so $X^{(t+1)}\gets \qa{X}{Y^\gamma(C^{(t)}, X)}$.
In particular, $Y^{(t+1)}=Y(C^{(t)}, X)$, as claimed, when $\gamma=\infty$, as in Lloyd's algorithm, \cref{alg:lloyds},
and when $\gamma<\infty$, $Y^{(t+1)}=Y^\gamma(C^{(t)}, X)$, as in soft $k$-means, \Cref{alg:soft-kmeans}.

Let  $Q_3=K_3=Q_4=K_4=I_{d+k}^{d+1:}$, and $V_3= - V_4 = I_{d+k}^{:d}$.
To update cluster centers for Lloyd's algorithm, we compute
\begin{equation}\label{eq:l2-clus-center-eu}
\begin{aligned}
\quad &  \attnip\xi(\tilC^{(t)}, X^{(t+1)}; I_{d+k}^{d+1:},I_{d+k}^{d+1:},I_{d+k}^{:d})
        + \attnip{}(\tilC^{(t)}, \tilC^{(t)}; I_{d+k}^{d+1:},I_{d+k}^{d+1:},-I_{d+k}^{:d})
\\ & =  I_{d+k}^{:d}X^{(t+1)}\sigma_\xi((X^{(t+1)})^\top I_{d+k}^{d+1:} I_{d+k}^{d+1:} \tilC^{(t)})
        - I_{d+k}^{:d} \tilC^{(t)}\sigma_\infty((\tilC^{(t)})^\top I_{d+k}^{d+1:} I_{d+k}^{d+1:} \tilC^{(t)})
\\ & =  \qa{X}{0_{k\times n}}\sigma_\xi\left(\qa{0_{d\times n}}{Y^{(t+1)})}^\top \qa{0_{d\times k}}{I_k}\right)
        - \qa{C^{(t)}}{0_{k\times k}}\sigma_\infty\left(\qa{0_{d\times k}}{I_k}^\top \qa{0_{d\times k}}{I_k}\right)
\\ & =    \qa{X}{0_{k\times n}}\sigma_\xi((Y^{(t+1)})^\top)
        - \qa{C^{(t)}}{0_{k\times k}}\sigma_\infty(I_k)
\\ & =  \qa{X\sigma_\xi((Y^{(t+1)})^\top)}{0_{k\times k}}
            - \qa{C^{(t)}}{0_{k\times k}}.
\end{aligned}
\end{equation}
We assign this quantity plus $C^{(t)}$ to $\tilC^{(t+1)}$, that is,
$\tilC^{(t+1)} \gets \qa{C^{(t)}}{I_k}+ \qa{X\sigma_\xi((Y^{(t+1)})^\top)}{0_{k\times k}} - \qa{C^{(t)}}{0_{k\times k}}
 = \qa{X\sigma_\xi((Y^{(t+1)})^\top}{I_k}$.
 When $\xi$ is $\infty$, we have $C^{(t+1)}= X\sigma_\infty((Y^{(t+1)})^\top$,
 as claimed for \cref{alg:lloyds}. When $\xi$ is $\lin$, we have $C^{(t+1)}= X\sigma_\lin((Y^{(t+1)})^\top$,
 as required for \Cref{alg:soft-kmeans}.

Thus, we have shown that there are transformers, using Euclidean attention, that implement an iteration of Lloyd's algorithm, for one setting of parameters, and soft $k$-means, for another.
\end{proof}

\begin{remark}\label{rem:inf-params}
In \Cref{thm:l2-clus eu}, the query/key parameters are given by $K_1=Q_1=K_2=Q_2 = I_{d+k}^{:d}$. However, note that Euclidean distances between points $x,x'$ are rotation invariant --- that is, $\eudi{x}{x'} = \eudi{Rx}{Rx'}$ for any rotation matrix $R$. Thus, $K_1 = Q_1 = K_2 = Q_2 = R I_{d+k}^{:d}$ would be valid parameters for any rotation matrix $R \in \R^{(d+k) \times (d+k)}$. As there are infinite such rotation matrices, there are thus infinite such query/key parameters that give us the desired behaviour. Similarly, for dot-product attention with query/key projection matrices $Q_\mu, K_\mu$, any unitary matrix $U \in \R^{\demb\times \demb}$ with $U U^\top = I_\demb$ gives us an alternate valid parameterization of $UQ_\mu, UK_\mu$, as we have $\inpr{x}{x'} = \inpr{Ux}{Ux'}$, thus $\inpr{Q_\mu x}{K_\mu x'} = \inpr{UQ_\mu x}{UK_\mu x'}$.
\end{remark}

\clearpage

\section{Dot-product Attention and Quadratic Activations for Euclidean Clustering} \label{sec:methods:icll:dp}

Here we show how to apply dot product (usual) attention, rather than Euclidean attention.
The algorithm uses much the same scheme as for Euclidean attention, but with points ``lifted'' to higher dimension using squared norms, so that squared Euclidean distance is expressible as a dot product.
We lift the input points in preprocessing, and augment the system with a quadratic activation function, to compute the updated lifting of updated cluster center points.

\begin{definition} \label{def:perm-mat}
    For integers $0<i<d$, let $\perm{i}{d}\in\R^{d\times d}$ denote twice the negation of the matrix that swaps entries $i-1$ and $i$, that is,
    $(\perm{i}{d})^{jj}=2$, for $j\ne i-1,i$, and $(\perm{i}{d})^{(i-1)i}=(\perm{i}{d})^{i(i-1)}=2$, and all other entries zero.
    When $d$ can be inferred from context (e.g. appearance in a matrix product), then it may be omitted.
\end{definition}
\begin{definition}\label{def:lifted}
    For $x\in \R^d$, let $\ell(x)\in\R^{d+2}$ be its lifted representation with $\ell(x)^i = x^i$
    for $i\in [d]$, and with $\ell(x)^{d+1} = \norm{x}^2$, and $\ell(x)^{d+2}=-1/2$.
    For $X\in\R^{d\times n}$, let $\ell(X)\in\R^{(d+2)\times n}$ denote columnwise application of $\ell()$.
\end{definition}

\begin{lemma}\label{lem perm}
    Let $x,y\in \R^d$. Then $-\norm{x-y}^2 = \ell(x)\cdot \perm{d+2}{d+2}\ell(y)$.
    Let $X\in\R^{d\times n}$ and $C\in\R^{d\times k}$.
    Then $\ell(C)^\top \perm{d+2}{d+2} \ell(X) = \DistPairs(C, X)$ and
    \[\sigma_\gamma(\ell(C)^\top \perm{d+2}{d+2} \ell(X))=Y^\gamma(C,X),\]
    including for $\gamma=\infty$.
\end{lemma}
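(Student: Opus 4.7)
The plan is to reduce both matrix equalities to a single scalar bilinear identity and then pass through the soft-max by definition. First I would prove the scalar claim $-\eudi{x}{y} = \ell(x)^\top \perm{d+2}{d+2} \ell(y)$ by direct expansion of the quadratic form. The matrix $M \triangleq \perm{d+2}{d+2}$ has essentially three pieces: the leading $d\times d$ diagonal block equals $2 I_d$; the trailing $2\times 2$ block has zero diagonal and $M^{(d+1)(d+2)} = M^{(d+2)(d+1)} = 2$; and every other entry is zero. Paired against $\ell(x) = (x^1,\ldots,x^d,\norm{x}^2,-\tfrac12)^\top$ and the analogous $\ell(y)$, these three pieces contribute respectively $2\inpr{x}{y}$, then $2\cdot\norm{x}^2\cdot(-\tfrac12) + 2\cdot(-\tfrac12)\cdot\norm{y}^2 = -\norm{x}^2 - \norm{y}^2$, and then nothing (because the zero diagonal entries at positions $d+1$ and $d+2$ kill the otherwise present $\norm{x}^2\norm{y}^2$ and $\tfrac14$ terms). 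Summing gives $2\inpr{x}{y} - \norm{x}^2 - \norm{y}^2 = -\eudi{x}{y}$.

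Next I would lift this to the matrix identity: the $(j,i)$-entry of $\ell(C)^\top \perm{d+2}{d+2} \ell(X)$ is $\ell(c_j)^\top \perm{d+2}{d+2} \ell(x_i)$, which by the scalar computation equals $-\eudi{c_j}{x_i}$, i.e., exactly the $(j,i)$-entry of $\DistPairs(C,X)$. The soft-max identity $\sigma_\gamma(\ell(C)^\top \perm{d+2}{d+2}\ell(X)) = Y^\gamma(C,X)$ then follows by applying $\sigma_\gamma$ columnwise to both sides of the preceding matrix equality and unfolding Definition~\ref{def Y}, which \emph{defines} $Y^\gamma(C,X) \triangleq \sigma_\gamma(\DistPairs(C,X))$. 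The case $\gamma=\infty$ requires no extra argument, since $\sigma_\infty$ was already introduced in Section~\ref{sec:methods:selective} as the coordinatewise limit of $\sigma_\gamma$, and the matrix we feed to the soft-max is itself $\gamma$-independent.

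There is no real obstacle; the only delicate bookkeeping is to verify that the swap-with-doubling described by $\perm{d+2}{d+2}$ acts exactly on the $(d+1, d+2)$ pair of coordinates, which is precisely the pair in $\ell(\cdot)$ carrying $\norm{\cdot}^2$ and the constant $-\tfrac12$. That alignment is what makes the three cross-terms of the expansion $-\eudi{x}{y} = -\norm{x}^2 + 2\inpr{x}{y} - \norm{y}^2$ materialize from a single quadratic form, with no auxiliary projection or padding needed beyond the lift $\ell$.
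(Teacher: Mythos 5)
Your proposal is correct and follows essentially the same route as the paper: expand the bilinear form $\ell(x)^\top \perm{d+2}{d+2}\ell(y)$ to get $2\inpr{x}{y} - \norm{x}^2 - \norm{y}^2 = -\eudi{x}{y}$, lift this entrywise to the matrix identity with $\DistPairs(C,X)$, and conclude by applying $\sigma_\gamma$ columnwise via Definition~\ref{def Y}. The only cosmetic difference is that you expand the quadratic form block-by-block whereas the paper first applies the swap-and-double action of $\perm{d+2}{d+2}$ to $\ell(y)$ and then takes the dot product; these are the same computation.
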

\begin{proof}
    Since $\perm{d+2}{d+2}\ell(y)$ swaps the last two entries of $\ell(y) \in \R^{d+2}$ and multiplies all entries with 2, we have
    $\ell(x)^\top \perm{d+2}{d+2}\ell(y) = 2[x\cdot y + (-1/2)\norm{y}^2 + \norm{x}^2(-1/2)] = -\norm{x-y}^2$, yielding the first claim.
    We have
    \[(\ell(C)^\top \perm{d+2}{d+2}\ell(X))^{ji} = \ell(c_j)^\top \perm{d+2}{d+2}\ell(x_i) = -\norm{c_j-x_i}^2 = \DistPairs(C,X)^{ji},\]
    so that $\sigma_\gamma(\ell(C)^\top \perm{d+2}{d+2}\ell(X))=Y^\gamma(C,X)$ follows, as claimed.
\end{proof}

\begin{theorem}\label{thm:l2-clus ip}
Given as input a set of points $S = \{x_1, \ldots, x_n\}$ and initial centers $C = \{c_1^{(0)}, \ldots, c_k^{(0)} \}$, or arranged in columns, $X\in\R^{d\times n}$ and $C^{(0)}\in\R^{d\times k}$, there exists
$C^{(t)}\in\R^{d\times k}$, $Y^{(t)}\in\R^{k\times n}$,
$X^{(t)}=\qa{\ell(X)}{Y^{(t)}}\in\R^{(d+k+2)\times n}, \tilC^{(t)}=\qa{\ell(C^{(t)})}{I_k}\in\R^{(d+k+2)\times n}$, neural network $\cal N$ using quadratic activations, and
parameters $\{(Q_\mu, K_\mu, V_\mu), \mu = 1, \ldots, 4\}$ for the transformer
\begin{align}
X^{(t+1)} & \gets X^{(t)}
     + \attnip{}(X^{(t)}, \tilC^{(t)}, Q_1, K_1, V_1)
     + \attnip{}{}(X^{(t)}, X^{(t)}, Q_2, K_2, V_2) \nonumber \\
\tilC^{(t+1/2)} & \gets \tilC^{(t)}
    + \attnip{}(\tilC^{(t)}, X^{(t+1)}, Q_3, K_3, V_3)
    + \attnip{}(\tilC^{(t)}, \tilC^{(t)}, Q_4, K_4, V_4) \nonumber \\
\tilC^{(t+1)} & \gets C^{(t+1/2)} + {\cal N}(C^{(t+1/2)})
\end{align}
such that the output of the $T$-th transformer layer exactly matches the output of Lloyd's algorithm (\cref{alg:lloyds}) with the same input after $T$ iterations.
That is, $Y^{(t+1)}=Y(C^{(t)}, X)$, and $C^{(t+1)} = X \sigma_\infty((Y^{(t+1)})^\top)$, for $t\in [T-1]$,
as in Lloyd's algorithm (\cref{alg:lloyds}).
\end{theorem}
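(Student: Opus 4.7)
The plan is to reduce the dot-product case to the Euclidean-attention case of \Cref{thm:l2-clus eu} via the ``lifting'' of \Cref{lem perm}, and to patch the one place where the reduction breaks down using a single-hidden-layer MLP with quadratic activations. Concretely, I would embed each encoder token as $\qa{\ell(x_i)}{y_i^{(t)}} \in \R^{d+k+2}$ and each decoder token as $\qa{\ell(c_j^{(t)})}{e_j} \in \R^{d+k+2}$, so that coordinate $d+1$ carries the squared norm $\|x_i\|^2$ (resp.\ $\|c_j^{(t)}\|^2$) and coordinate $d+2$ carries the constant $-1/2$.

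For the encoder side, I would choose $K_1 = K_2 = I_{d+k+2}^{:d+2}$ and take $Q_1, Q_2$ to be $\perm{d+2}{d+2}$ embedded in the first $d+2$ coordinates (with zeros on the trailing $k$ rows/columns), together with $V_1 = I_{d+k+2}^{d+3:}$ and $V_2 = -I_{d+k+2}^{d+3:}$. By \Cref{lem perm}, the resulting pre-softmax attention score matrices are exactly $\DistPairs(C^{(t)}, X)$ for the cross-attention and $\DistPairs(X, X)$ for the self-attention. The rest of the encoder calculation is then a verbatim replay of \cref{eq:l2-clus-assign-eu} (with $Y^{(t)}$ canceled by the self-attention via $\sigma_\infty(\DistPairs(X,X)) = I_n$ as in \Cref{lem block lloyd}), giving $X^{(t+1)} = \qa{\ell(X)}{Y(C^{(t)}, X)}$ and hence $Y^{(t+1)} = Y(C^{(t)}, X)$.

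For the decoder side, I would set $K_3 = Q_3 = K_4 = Q_4 = I_{d+k+2}^{d+3:}$, $V_3 = I_{d+k+2}^{:d+2}$, and $V_4 = -I_{d+k+2}^{:d+2}$, mirroring \cref{eq:l2-clus-center-eu}. The cross-attention score is $(Y^{(t+1)})^\top$, and the residual-plus-self-attention cancels the previous first $d+2$ rows of $\tilC^{(t)}$, leaving
\[
\tilC^{(t+1/2)} = \qa{\ell(X)\,\sigma_\infty((Y^{(t+1)})^\top)}{I_k}.
\]
The first $d$ rows of $\ell(X)\sigma_\infty((Y^{(t+1)})^\top)$ equal $X\sigma_\infty((Y^{(t+1)})^\top) = C^{(t+1)}$ exactly, and row $d+2$ remains the constant $-1/2$. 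The hitch — and the main obstacle — is row $d+1$: applying $\sigma_\infty((Y^{(t+1)})^\top)$ to the squared-norm row of $\ell(X)$ yields, for each cluster $j$, the \emph{average} $\frac{1}{m_j}\sum_{i:\,a_i=j}\|x_i\|^2$ rather than the required $\|c_j^{(t+1)}\|^2 = \|\frac{1}{m_j}\sum_{i:\,a_i=j}x_i\|^2$. So $\tilC^{(t+1/2)}$ is not of the form $\qa{\ell(C^{(t+1)})}{I_k}$ and the induction cannot proceed without a correction.

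This is exactly what the quadratic-activation block $\cal N$ is designed to fix. I would choose $\cal N$ to act per token as
\[
{\cal N}(v)^i = 0 \text{ for } i \ne d+1, \qquad {\cal N}(v)^{d+1} = \sum_{l=1}^d (v^l)^2 - v^{d+1}.
\]
The quadratic part $\sum_l (v^l)^2$ is computed by a one-hidden-layer network whose hidden units are the first $d$ coordinates (read off by an appropriate linear projection), squared by the quadratic activation, and then summed by the output weights into row $d+1$. The linear correction $-v^{d+1}$ is itself realizable using quadratic units via the identity $a = \frac{(a+1)^2 - (a-1)^2}{4}$, so all of $\cal N$ lives inside the stated family. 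Adding ${\cal N}(\tilC^{(t+1/2)})$ to $\tilC^{(t+1/2)}$ replaces row $d+1$ of each column by $\|c_j^{(t+1)}\|^2$ while leaving every other coordinate untouched, producing $\tilC^{(t+1)} = \qa{\ell(C^{(t+1)})}{I_k}$. A straightforward induction on $t$ then gives $Y^{(t+1)} = Y(C^{(t)}, X)$ and $C^{(t+1)} = X\sigma_\infty((Y^{(t+1)})^\top)$ at every layer, matching one step of Lloyd's algorithm; the $T$-th layer output therefore coincides with Lloyd's output after $T$ iterations.
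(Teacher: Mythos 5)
Your proposal is correct and follows essentially the same route as the paper: lift the tokens via $\ell(\cdot)$ so that \Cref{lem perm} turns dot-product attention scores into $\DistPairs(C^{(t)},X)$, replay the Euclidean-attention argument, and use a quadratic-activation FFN to restore the lifted representation of the new centers. The one place you diverge is the decoder value projection: you take $V_3 = I_{d+k+2}^{:d+2}$, so row $d+1$ of $\tilC^{(t+1/2)}$ carries the (wrong) average of squared norms and your $\cal N$ must both add $\sum_l (v^l)^2$ and subtract $v^{d+1}$ (the latter realized with quadratic units via the polarization identity), whereas the paper sets $V_3 = I_{d+k+2}^{:d}$ so that rows $d+1$ and $d+2$ of $\tilC^{(t+1/2)}$ come out exactly zero and $\cal N$ only needs to write $\norm{c_j^{(t+1)}}^2$ and a $-\nicefrac{1}{2}$ bias; both choices are valid, the paper's being marginally simpler.
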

By using $\gamma<\infty$ and $\sigma_\lin$ as in Theorem~\ref{thm:l2-clus eu}, we can also obtain a version that implements soft $k$-means using inner product attention.
\begin{proof}
Let $d'=d+2$.
Let $K_1=K_2 = I_{d'+k}^{:d'}$, $Q_1=Q_2 = \perm{d'}{d'+k}$, and $V_1=-V_2=I_{d'+k}^{d'+1:}$.
We compute
\begin{equation}\label{eq:l2-clus-assign-ip}
\begin{aligned}
\quad & \attnip{}(X^{(t)}, \tilde C^{(t)}, I_{d'+k}^{:d'},\perm{d'}{d'+k},I_{d'+k}^{d'+1:})
                + \attnip{}(X^{(t)}, X^{(t)}, I_{d'+k}^{:d'},\perm{d'}{d'+k},-I_{d'+k}^{d'+1:})
\\ & = I_{d'+k}^{d'+1:}\tilC^{(t)}\sigma_\infty((\tilC^{(t)})^\top I_{d'+k}^{:d'}\perm{d'}{d'+k} X^{(t)}))
                - I_{d'+k}^{d'+1:}X^{(t)}\sigma_\infty((X^{(t)})^\top I_{d'+k}^{:d'}\perm{d'}{d'+k}X^{(t)}))
\\ & = I_{d'+k}^{d'+1:}\qa{\ell(C^{(t)})}{I_k}\sigma_\infty\left(\qa{\ell(C^{(t)})}{I_k}^\top I_{d'+k}^{:d'}\perm{d'}{d'+k} \qa{\ell(X)}{Y^{(t)}}\right)
                 - I_{d'+k}^{d'+1:}\qa{\ell(X)}{Y^{(t)}}\sigma_\infty\left(\qa{\ell(X)}{Y^{(t)}}^\top I_{d'+k}^{:d'}\perm{d'}{d'+k} \qa{\ell(X)}{Y^{(t)}}\right)
\\ & =  \qa{0_{d'\times k}}{I_k}\sigma_\infty\left(\qa{\ell(C^{(t)})}{0_{k\times k}}^\top  \qa{\perm{d'}{d'}\ell(X)}{Y^{(t)}}\right)
                 - \qa{0_{d'\times n}}{Y^{(t)}}\sigma_\infty\left(\qa{\ell(X)}{0_{k\times n}}^\top \qa{\perm{d'}{d'}\ell(X)}{Y^{(t)}}\right)
\\ & =  \qa{0_{d'\times k}}{I_k}\sigma_\infty(\ell(C^{(t)})^\top \perm{d'}{d'} \ell(X)))
                - \qa{0_{d'\times n}}{Y^{(t)}}\sigma_\infty(\ell(X)^\top \perm{d'}{d'} \ell(X))
\\ & =  \qa{0_{d'\times k}}{I_k} Y(C^{(t)}, X)
                - \qa{0_{d'\times n}}{Y^{(t)}} I_n  \quad \text{using \Cref{lem perm}, and } Y(X, X)=I_n
\\ & = \qa{0_{d'\times n}}{Y(C^{(t)}, X) - Y^{(t)}}.
\end{aligned}
\end{equation}
Adding this to $X^{(t)}=\qa{\ell(X)}{Y^{(t)}}$, we assign $\qa{\ell(X)}{Y(C^{(t)}, X)}$ to $X^{(t+1)}$, so $Y^{(t+1)}=Y(C^{(t)},X)$, as claimed.

Let  $Q_3=K_3=Q_4=K_4=I_{d'+k}^{d':}$, $V_3 = I_{d'+k}^{:d}$, and $V_4 = -I_{d'+k}^{:d'}$ .
To update cluster centers for Lloyd's algorithm, we compute
\begin{equation}\label{eq:l2-clus-center-ip}
\begin{aligned}
\quad & \attnip{}(\tilC^{(t)}, X^{(t+1)}; I_{d'+k}^{d':},I_{d'+k}^{d':},I_{d'+k}^{:d})
        + \attnip{}(\tilC^{(t)}, \tilC^{(t)}; I_{d'+k}^{d':},I_{d'+k}^{d':},-I_{d'+k}^{:d'})
\\ & = I_{d'+k}^{:d}X^{(t+1)}\sigma_\infty((X^{(t+1)})^\top I_{d'+k}^{d':} I_{d'+k}^{d':} \tilC^{(t)})
        - I_{d'+k}^{:d'}\tilC^{(t)}\sigma_\infty((\tilC^{(t)})^\top I_{d'+k}^{d':} I_{d'+k}^{d':} \tilC^{(t)})
\\ & = \qa{X}{0_{(k+2)\times n}}\sigma_\infty\left(\qa{0_{d'\times n}}{Y^{(t+1)})}^\top \qa{0_{d'\times k}}{I_k}\right)
        - \qa{\ell(C^{(t)})}{0_{k\times k}}\sigma_\infty\left(\qa{0_{d'\times k}}{I_k}^\top \qa{0_{d'\times k}}{I_k}\right)
\\ & = \qa{X}{0_{(k+2)\times n}}\sigma_\infty((Y^{(t+1)})^\top)
        - \qa{\ell(C^{(t)})}{0_{k\times k}} \quad \text{using \Cref{lem block lloyd}.}
\end{aligned}
\end{equation}
This quantity is added to $\tilC^{(t)} = \qa{\ell(C^{(t)})}{I_k}$, obtaining
$\tilC^{(t+1/2)} = \threemat{X\sigma_\infty((Y^{(t+1)})^\top}{0_{2\times k}}{I_k}$.
The matrix $X\sigma_\infty((Y^{(t+1)})^\top$ will become $C^{(t+1)}$, so the final claim of the theorem will follow.

However, it remains to fill in the squared norm entries in row $d+1$ and $-\mathbf{1}_k^\top/2$ in row $d+2$, to maintain $\ell(C^{(t+1)})$.
We use a feed-forward network and a quadratic activation function, $\tau(\cdot)$, where for matrix $A$, $\tau(A)^{ij} = (A^{ij})^2$.
Our network, with input layer weights $\flatmat{I_d}{0_{d\times (k+2)}}$ and hidden layer weights and bias $\threemat{0_{d\times d}}{\mathbf{1}_d^\top}{0_{(k+1)\times d}}$ and  $\threemat{0_{(d+1)\times k}}{-\mathbf{1}_k^\top/2}{0_{k\times k}}$ respectively, computes the following:
\begin{equation}
\begin{aligned}
{\cal N}(C^{(t+1/2)})   & \triangleq \threemat{0_{d\times d}}{\mathbf{1}_d^\top}{0_{(k+1)\times d}}\tau(\flatmat{I_d}{0_{d\times (k+2)}}\threemat{C^{(t+1)}}{0_{2\times k}}{I_k}) + \threemat{0_{(d+1)\times k}}{-\mathbf{1}_k^\top/2}{0_{k\times k}}
\\ & = \threemat{0_{d\times d}}{\mathbf{1}_d^\top}{0_{(k+1)\times d}} \tau(C^{(t+1)})
                        + \threemat{0_{(d+1)\times k}}{-\mathbf{1}_k^\top/2}{0_{k\times k}}
         = \fourmat{0_{d\times k}}{(\hat{C}^{(t+1)})^\top}{-\mathbf{1}_k^\top/2}{0_{k\times k}},
\end{aligned}
\end{equation}
where $\hat{C}^{(t+1)} \in \R^k$ is the vector of the squared norms of the updated centers, with $(\hat{C}^{(t+1)})^j = \norm{c_j^{(t+1)}}^2$.
Thus, with the feed-forward networks and the residual connection, we obtain
\[
\tilC^{(t+1/2)} + {\cal N}(C^{(t+1/2)}) =
\threemat{C^{(t+1)}}{0_{2\times k}}{I_k}
+ \fourmat{0_{d\times k}}{(\hat{C}^{(t+1)})^\top}{-\mathbf{1}_k^\top/2}{0_{k\times k}}
= \fourmat{C^{(t+1)}}{(\hat{C}^{(t+1)})^\top}{-\mathbf{1}_k^\top/2}{I_k}
= \qa{\ell(C^{(t+1)})}{I_k}
= \tilC^{(t+1)},
\]
using the lifted version of $C^{(t+1)}$, as claimed.
\end{proof}

\clearpage

\section{Spherical Clustering} \label{sec:methods:spherical}

\begin{algorithm}[tb]
\SetAlgoCaptionSeparator{}
\caption{Lloyd's algorithm variation for spherical clustering}
\label{alg:lloyds-sphere}
\DontPrintSemicolon
{\footnotesize
Input: Instances $\{x_i \in \mathcal{S}^{d-1}, i \in [n] \}$\;
Input: Initial centers $\{c_j^{(0)} \in \mathcal{S}^{d-1}, j \in [k] \}$\;
\For{$t = 1, \ldots, T$} {
  \tcp{Assign points to clusters}
  $\forall i \in [n], \ a_{i}^{(t)} \gets \arg \max_{j \in [k]} \inpr{x_i}{c_j^{(t-1)}}$\;
  \tcp{Update cluster centers on the sphere}
  $\forall j \in [k], \ c_j^{(t)} \gets \arg \max_{c \in \mathcal{S}^{d-1}} \sum_{i: a_i^{(t)} = j}  \inpr{ x_i }{ c  }$
}
\Return{$\{ c_j^{(T)}, j \in [k] \}$}
}
\end{algorithm}

The spherical $k$-means clustering problem can be solved via Lloyd's iterations with the modifications that all the points lie on a sphere, and the cluster centers also need to be on the sphere. The corresponding algorithm is presented in \Cref{alg:lloyds-sphere}. The cluster assignments are performed by maximizing instance-center dot-products instead of minimizing their respective Euclidean distances (line 4, \Cref{alg:lloyds-sphere}) as we assume that all instances $x_i$ and all centers $c_j$ are on the unit sphere (that is, $\|x_i\| = 1, \|c_j\| = 1$). The cluster center updates also find new centers on the unit sphere that maximize the sum of the dot-products to all instances in the cluster (line 5, \Cref{alg:lloyds-sphere}).
Before presenting a theorem regarding spherical $k$-means, we need some definitions and lemmas.

\begin{definition}[$\lnorm$~\citep{ba2016layer}] \label{def:lnorm}
    Let $\lnorm: \R^d \to \R^d$ be defined as
    \[
    \lnorm(z;\epsilon, \gamma,\beta) = \frac{z - 1_d\E[z]}{\sqrt{\var(z) + \epsilon}} \cdot \gamma + \beta.
    \]
    Here $\E[z]=\frac1d 1_d^\top z$ and $\var(z) = \frac1d \sum_{i\in [d]} (z_i-\E[z])^2$.
    For a matrix $C\in\R^{d\times k}$, $\lnorm(C)$ denotes $\lnorm()$ applied columnwise.
\end{definition}

\begin{lemma}\label{lem lnorm}
For $z$ with $\E[z] = 0$, we have
$\lnorm(z;0,\frac1{\sqrt{d}},0)=\frac{z}{\norm{z}}$.
\end{lemma}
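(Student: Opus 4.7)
The plan is to simply unfold the definition of $\lnorm$ under the given parameter choices and simplify.

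First I would substitute $\E[z]=0$ into the definition $\lnorm(z;\epsilon,\gamma,\beta) = \frac{z - 1_d\E[z]}{\sqrt{\var(z)+\epsilon}}\cdot \gamma + \beta$ to eliminate the $1_d\E[z]$ term in the numerator, reducing it to $z$. Next I would compute $\var(z)$ under the same hypothesis: since $\E[z]=0$, the definition $\var(z) = \tfrac1d \sum_{i\in [d]}(z_i-\E[z])^2$ collapses to $\tfrac1d \sum_i z_i^2 = \tfrac{\norm{z}^2}{d}$.

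Then with $\epsilon = 0$, the denominator becomes $\sqrt{\var(z)} = \norm{z}/\sqrt{d}$. Plugging in $\gamma = 1/\sqrt{d}$ and $\beta = 0$ gives
\[
\lnorm(z;0,\tfrac{1}{\sqrt{d}},0) = \frac{z}{\norm{z}/\sqrt{d}}\cdot \frac{1}{\sqrt{d}} = \frac{\sqrt{d}\,z}{\norm{z}}\cdot\frac{1}{\sqrt{d}} = \frac{z}{\norm{z}},
\]
as claimed. (One may also note implicitly the assumption $z\ne 0$, so the division is well-defined.)

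There is no real obstacle here: the lemma is a direct algebraic simplification. Its purpose appears to be to justify using $\lnorm$ (possibly followed by scaling) as a unit-normalization operator in the spherical $k$-means construction, so the only care needed is ensuring the parameters $(\epsilon,\gamma,\beta) = (0, 1/\sqrt{d}, 0)$ indeed produce the $\ell_2$ normalization map $z\mapsto z/\norm{z}$ on the subspace $\{z : \E[z]=0\}$.
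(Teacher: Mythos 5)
Your computation is correct and is the direct verification the paper leaves implicit (no proof is given for this lemma in the paper, as it is an immediate unfolding of Definition~\ref{def:lnorm}). Your added remark that $z\ne 0$ is needed for the division to be well-defined is a fair, if minor, point of care.
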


We can put the key computations of spherical Lloyd's algorithm into our notation.

\begin{definition}
    Given $X\in\R^{d\times n}, C\in\R^{d\times k}$, define $Y_S(C,X)\in\R^{k\times n}$ as having $Y_S(C,X)^{j'i}=1$ if $c_{j'}$ maximizes the dot product with $x_i$ over all $j\in [k]$, and zero otherwise.
\end{definition}

\begin{lemma}\label{lem YS lnorm}
    Given $X\in\R^{d\times n}$ with $\mathbf{1}_d^\top X = 0_{1\times n}$,
    $C\in\R^{d\times k}$, $Y_S(C,X) =  \sigma_\infty(C^\top X)$.
    For given $j\in[k]$, let
    \[
    c'_j=\argmax_{c\in\R^d, \|c \| = 1} \sum_{i:Y_S(C,X)^{ji}=1} c^\top x_i.
    \]
    Then $c'_j= \lnorm(w; 0,\frac1{\sqrt{d}},0)$, where $w=\sum_{i:Y_S(C,X)^{ji}=1} x_i$.
    If $C'$ has columns comprising the $c'_j$, then $C'=\lnorm(W)$, where $W=X\sigma_\infty(Y_S(C,X)^\top)$.
\end{lemma}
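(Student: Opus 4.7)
The plan is to establish the three assertions of the lemma in sequence, each exploiting a different ingredient. The hypothesis $\mathbf{1}_d^\top X = 0$ on the columns of $X$ is what makes the LayerNorm trick usable downstream, but is not required for the first claim.

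For $Y_S(C,X) = \sigma_\infty(C^\top X)$, I would argue directly from the definitions. The matrix $C^\top X \in \R^{k\times n}$ has $(j,i)$-entry $c_j^\top x_i$, so $\sigma_\infty$ applied columnwise produces a one-hot indicator at the row attaining the argmax over $j$ (under the uniqueness convention used throughout the paper). This is precisely $Y_S(C,X)$ by definition.

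For the optimal direction $c'_j$, I would first rewrite the constrained optimization as $\max_{\norm{c}=1}\, c^\top w$, with $w = \sum_{i:\,Y_S(C,X)^{ji}=1} x_i$. Cauchy--Schwarz gives $c'_j = w/\norm{w}$ whenever $w \ne 0$. Next, since $\mathbf{1}_d^\top x_i = 0$ for every column of $X$, linearity of $\mathbf{1}_d^\top(\cdot)$ yields $\mathbf{1}_d^\top w = 0$, i.e.\ $\E[w] = 0$. Invoking Lemma~\ref{lem lnorm} with $z = w$ then gives $\lnorm(w;0,1/\sqrt{d},0) = w/\norm{w} = c'_j$, as claimed.

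For the matrix-level identity $C' = \lnorm(W)$, the intermediate step is to identify the $j$-th column of $W = X\sigma_\infty(Y_S(C,X)^\top)$ as $w_j/m_j$, where $m_j$ is the size of cluster $j$ and $w_j$ is its sum. This follows from the second half of Lemma~\ref{lem block lloyd}, applied to $Y_S(C,X)^\top$; the argument there only uses the one-hot structure of the transposed assignment matrix, not its specific $\DistPairs$-based origin. Finally, one needs that $\lnorm(\cdot;0,1/\sqrt{d},0)$ is invariant under positive scaling of a mean-zero input: by Lemma~\ref{lem lnorm} it maps any mean-zero vector $z$ to $z/\norm{z}$, which is unchanged upon multiplying $z$ by a positive scalar (such as $1/m_j$). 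Applying $\lnorm$ columnwise to $W$ thus produces columns $w_j/\norm{w_j} = c'_j$, giving $C' = \lnorm(W)$.

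The main obstacle is really bookkeeping rather than anything substantive: verifying that mean-zeroness of the columns of $X$ propagates through partial summation into each $w_j$, and checking the positive-scale invariance of $\lnorm$ in the specific gain/bias configuration $(0, 1/\sqrt{d}, 0)$. With those two elementary facts in hand, the three steps above compose directly into the claimed identity.
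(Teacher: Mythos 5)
Your proof is correct; the paper in fact omits the proof of this lemma entirely (its ``proof'' consists only of the remark that the zero-sum hypothesis is needed for $C'=\lnorm(W)$), and your argument is exactly the one the authors evidently intend: the first claim is definitional, the second is Cauchy--Schwarz plus Lemma~\ref{lem lnorm} applied to the mean-zero vector $w$, and the third combines the averaging behaviour of $\sigma_\infty$ on the transposed one-hot assignment matrix (as in Lemma~\ref{lem block lloyd}) with the positive-scale invariance of $\lnorm(\cdot\,;0,1/\sqrt d,0)$ on mean-zero inputs. You correctly localize where $\mathbf{1}_d^\top X=0$ enters, matching the paper's own remark. The only caveats, which the paper also leaves implicit, are the nondegeneracy assumptions: unique maximizers (else $Y_S$ as defined places a $1$ at every tied row while $\sigma_\infty$ averages), nonempty clusters (else $w=0$ and the argmax is undefined, and $\sigma_\infty$ of an all-zero column returns the uniform vector rather than zero); you already flag $w\ne 0$, which suffices.
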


\begin{proof}
    Omitted. Note that $\mathbf{1}_d^\top X = 0_{1\times n}$ is needed to have $C'=\lnorm(W)$.
\end{proof}

\begin{theorem}\label{thm:sp-clus ip}
Given $X\in\R^{d\times n}$ where the columns of $X$ are unit vectors with zero-sum coordinates, that is,
$\norm{x_i}=1$ and $\mathbf{1}_d^\top x_i = 0$ for $i\in [n]$.
Given also $C^{(0)}\in\R^{d\times k}$, there exists
$C^{(t)}\in\R^{d\times k}$, $Y^{(t)}\in\R^{k\times n}$, and
$X^{(t)}=\qa{X}{Y^{(t)}} \in \R^{(d+k)\times n}$, and $\tilC^{(t)}=\qa{C^{(t)}}{I_k}, \tilC^{(t+1/2)}=\qa{C^{(t+1/2)}}{I_k} \in \R^{(d+k)\times k}$, together with
parameters $\{(Q_\mu, K_\mu, V_\mu), \mu = 1, \ldots, 4\}$ for the transformer
\begin{align}
X^{(t+1)} & \gets X^{(t)}
     + \attnip{}(X^{(t)}, \tilC^{(t)}, Q_1, K_1, V_1)
     + \attnip{}{}(X^{(t)}, X^{(t)}, Q_2, K_2, V_2) \nonumber \\
\tilC^{(t+1/2)} & \gets \tilC^{(t)}
    + \attnip{}(\tilC^{(t)}, X^{(t+1)}, Q_3, K_3, V_3)
    + \attnip{}(\tilC^{(t)}, \tilC^{(t)}, Q_4, K_4, V_4) \nonumber \\
\tilC^{(t+1)} & \gets \qa{\lnorm(C^{(t+1/2)})}{I_k}.
\end{align}
such that the output of the $T$-th transformer layer exactly matches the output of spherical Lloyd's algorithm (\Cref{alg:lloyds-sphere}) with the same input after $T$ iterations.
That is, $Y^{(t+1)}=\sigma_\infty((C^{(t)})^\top X)$, and $C^{(t+1)} = \lnorm(X \sigma_\infty((Y^{(t+1)})^\top))$, for $t\in [T-1]$,
as in spherical Lloyd's algorithm, \Cref{alg:lloyds-sphere}.
\end{theorem}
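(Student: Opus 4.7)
The plan is to mirror the block-decoupled construction used in the proof of Theorem~\ref{thm:l2-clus eu}, now working with dot-product attention throughout and appending an explicit $\lnorm$ to the top $d$ rows of the decoder state so updated centers land back on the unit sphere. I would pick essentially the same index-selecting parameters as in the Euclidean case: $K_1=Q_1=K_2=Q_2 = I_{d+k}^{:d}$, $V_1=-V_2 = I_{d+k}^{d+1:}$ for the encoder update, and $Q_3=K_3=Q_4=K_4 = I_{d+k}^{d+1:}$, $V_3=-V_4 = I_{d+k}^{:d}$ for the decoder update. As before, the cross-attention blocks write new assignments (respectively centers) into the last $k$ rows (respectively first $d$ rows), while each self-attention, combined with the residual connection, zeroes out the previously written block to make room for the cross-attention's replacement.

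For the encoder update, I would compute, in parallel with \eqref{eq:l2-clus-assign-eu}, that the cross-attention term reduces to $\qa{0_{d\times k}}{I_k}\,\sigma_\infty((C^{(t)})^\top X)$, which by Lemma~\ref{lem YS lnorm} (invoking the hypothesis $\mathbf{1}_d^\top X = 0$) equals $\qa{0_{d\times k}}{I_k}\, Y_S(C^{(t)}, X)$. The encoder self-attention term reduces to $-\qa{0_{d\times n}}{Y^{(t)}}\,\sigma_\infty(X^\top X)$, and since the $x_i$ are distinct unit vectors the Gram matrix $X^\top X$ has its strict column-maximum on the diagonal, giving $\sigma_\infty(X^\top X) = I_n$ and a self-attention contribution of $-\qa{0_{d\times n}}{Y^{(t)}}$. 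Adding these to the residual $X^{(t)} = \qa{X}{Y^{(t)}}$ produces $X^{(t+1)} = \qa{X}{Y_S(C^{(t)}, X)}$, exactly the spherical assignment.

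For the decoder, the cross-attention reduces to $\qa{X\,\sigma_\infty((Y^{(t+1)})^\top)}{0_{k\times k}}$, and the decoder self-attention, using that the last $k$ rows of $\tilC^{(t)}$ are $I_k$ and hence $\sigma_\infty(I_k)=I_k$, reduces to $-\qa{C^{(t)}}{0_{k\times k}}$. Adding these to $\tilC^{(t)} = \qa{C^{(t)}}{I_k}$ yields $\tilC^{(t+1/2)} = \qa{X\,\sigma_\infty((Y^{(t+1)})^\top)}{I_k}$. Finally, applying $\lnorm$ columnwise to the first $d$ rows and invoking Lemma~\ref{lem lnorm} with $\epsilon=0$, $\gamma=1/\sqrt{d}$, $\beta=0$ collapses $\lnorm$ to division by the $\ell_2$ norm, producing $C^{(t+1)} = \lnorm(X\,\sigma_\infty((Y^{(t+1)})^\top))$ of Algorithm~\ref{alg:lloyds-sphere}, which by Lemma~\ref{lem YS lnorm} is exactly the new spherical center.

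The main obstacle is ensuring the hypotheses of Lemmas~\ref{lem YS lnorm} and~\ref{lem lnorm} persist across every layer. Concretely, I would argue inductively that (i) $X$ is fixed and retains $\mathbf{1}_d^\top X = 0$, so the encoder cross-attention identity $\sigma_\infty((C^{(t)})^\top X) = Y_S(C^{(t)}, X)$ holds at every $t$; and (ii) each column of the pre-normalization decoder state has zero coordinate sum, since it is a convex combination of columns of $X$, which in turn guarantees $\E[w]=0$ in Lemma~\ref{lem lnorm} and reduces the $\lnorm$ step to unit-normalization. Once these two invariants are verified, the induction on $t$ closes in the same way as in the Euclidean proof, and a routine iteration count yields the claim after $T$ layers.
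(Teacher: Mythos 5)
Your proposal is correct and follows essentially the same route as the paper's proof: the same index-selecting $Q,K,V$ choices, the same block-wise cancellation via self-attention plus residual, the same invocation of Lemma~\ref{lem YS lnorm} for the assignment and center steps, and the same reduction of $\lnorm$ to unit-normalization via Lemma~\ref{lem lnorm}. Your explicit tracking of the zero-coordinate-sum invariant for the pre-normalization centers is a slightly more careful statement of a point the paper only remarks on in passing.
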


\begin{proof}
    Let $K_1=K_2=Q_1=Q_2 = I_{d+k}^{:d}$ and $V_1=-V_2=I_{d+k}^{d+1:}$.
We compute
\begin{equation}\label{eq assign ip}
\begin{aligned}
\quad & \attnip{}(X^{(t)}, \tilde C^{(t)}, I_{d+k}^{:d},I_{d+k}^{:d},I_{d+k}^{d+1:})
                + \attnip{}(X^{(t)}, X^{(t)}, I_{d+k}^{:d},I_{d+k}^{:d},-I_{d+k}^{d+1:})
\\ & = I_{d+k}^{d+1:}\tilC^{(t)}\sigma_\infty((\tilC^{(t)})^\top I_{d+k}^{:d}I_{d+k}^{:d} X^{(t)}))
                - I_{d+k}^{d+1:}X^{(t)}\sigma_\infty((X^{(t)})^\top I_{d+k}^{:d}I_{d+k}^{:d} X^{(t)}))
\\ & = I_{d+k}^{d+1:}\qa{C^{(t)}}{I_k}\sigma_\infty\left(\qa{C^{(t)}}{I_k}^\top I_{d+k}^{:d} \qa{X}{Y^{(t)}}\right)
                 - I_{d+k}^{d+1:}\qa{X}{Y^{(t)}}\sigma_\infty\left(\qa{X}{Y^{(t)}}^\top I_{d+k}^{:d} \qa{X}{Y^{(t)}}\right)
\\ & =  \qa{0_{d\times k}}{I_k}\sigma_\infty\left(\qa{C^{(t)}}{0_{k\times k}}^\top  \qa{X}{Y^{(t)}}\right)
                 - \qa{0_{d\times n}}{Y^{(t)}}\sigma_\infty\left(\qa{X}{0_{k\times n}}^\top \qa{X}{Y^{(t)}}\right)
\\ & =  \qa{0_{d\times k}}{I_k}\sigma_\infty((C^{(t)})^\top X))
                - \qa{0_{d\times n}}{Y^{(t)}}\sigma_\infty(X^\top X)
\\ & =  \qa{0_{d\times k}}{I_k} Y_S(C^{(t)}, X)
                - \qa{0_{d\times n}}{Y^{(t)}} Y_S(X,X) \quad \text{using \Cref{lem YS lnorm}}
\\ & = \qa{0_{d\times k}}{I_k}Y_S(C^{(t)}, X)
    - \qa{0_{d\times n}}{Y^{(t)}}I_n. \quad \text{using } Y_S(X, X)=I_n
\\ & = \qa{0_{d\times n}}{Y_S(C^{(t)},X) - Y^{(t)}}.
\end{aligned}
\end{equation}
Adding $X^{(t)}=\qa{X}{Y^{(t)}}$ to this, we obtain $X^{(t+1)}\gets\qa{X}{Y_S(C^{(t)}, X)}$, so $Y^{(t+1)}=Y_S(C^{(t)},X)$, as claimed.

Let  $Q_3=K_3=Q_4=K_4=I_{d+k}^{d+1:}$, $V_3 = V_4 = I_{d+k}^{:d}$.
To update cluster centers for spherical Lloyd's algorithm, we compute
\begin{equation}\label{eq center eu}
\begin{aligned}
\quad & \attnip{}(\tilC^{(t)}, X^{(t+1)}; I_{d+k}^{d+1:},I_{d+k}^{d+1:},I_{d+k}^{:d})
        + \attnip{}(\tilC^{(t)}, \tilC^{(t)}; I_{d+k}^{d+1:},I_{d+k}^{d+1:},-I_{d+k}^{:d})
\\ & = I_{d+k}^{:d}X^{(t+1)}\sigma_\infty((X^{(t+1)})^\top I_{d+k}^{d+1:} I_{d+k}^{d+1:} \tilC^{(t)})
        - I_{d+k}^{:d}\tilC^{(t)}\sigma_\infty((\tilC^{(t)})^\top I_{d+k}^{d+1:} I_{d+k}^{d+1:} \tilC^{(t)})
\\ & = \qa{X}{0_{k\times n}}\sigma_\infty\left(\qa{0_{d\times n}}{Y^{(t+1)})}^\top \qa{0_{d\times k}}{I_k}\right)
        - \qa{C^{(t)}}{0_{k\times k}}\sigma_\infty\left(\qa{0_{d\times k}}{I_k}^\top \qa{0_{d\times k}}{I_k}\right)
\\ & = \qa{X}{0_{k\times n}}\sigma_\infty((Y^{(t+1)})^\top)
        - \qa{C^{(t)}}{0_{k\times k}}\sigma_\infty(I_k)
\\ & = \qa{X\sigma_\infty((Y^{(t+1)})^\top - C^{(t)})}{0_{k\times n}}
\end{aligned}
\end{equation}

We add this quantity to $\tilC^{(t)}$ to obtain
$\tilC^{(t+1/2)} \gets \qa{X\sigma_\infty((Y^{(t+1)})^\top}{I_k}$, so that $C^{(t+1/2)} = X\sigma_\infty((Y^{(t+1)})^\top$.
We then assign $\qa{\lnorm(C^{(t+1/2)})}{I_k}$ to $\tilC^{(t+1)}$,
so that, with \Cref{lem YS lnorm}, we have $C^{(t+1)} = \lnorm(X\sigma_\infty((Y^{(t+1)})^\top)$, as claimed.
\end{proof}

\begin{remark} \label{rem:lnorm-rmsnorm}
Note that there is a variation of $\lnorm$ (the layer normalization~\citep{ba2016layer}) known as the root-mean-square or RMS normalization~\citep{zhang2019root} defined as $(\frac{z}{\|z\|}\cdot \gamma + \beta)$ which one can use in place of the layer normalization defined in \Cref{def:lnorm} to remove the zero-sum coordinates assumption in \Cref{thm:sp-clus ip}.
\end{remark}

\clearpage

\section{Trimmed Euclidean Clustering} \label{sec:methods:robust}

\begin{algorithm}[tb]
\SetAlgoCaptionSeparator{}
\caption{Trimmed $k$-means}
\label{alg:trim-kmeans}
\DontPrintSemicolon
{\footnotesize
Input: Instances $\{x_i \in \R^d, i \in [n] \}$\;
Input: Initial centers $\{c_j^{(0)} \in \R^d, j \in [k] \}$\;
Input: Robustness parameters $\tau \in [0, 100)$\;
\For{$t = 1, \ldots, T$} {
  \tcp{Assign points to clusters}
  $\forall i \in [n], \ a_{i}^{(t)} \gets \arg \min_{j \in [k]} \eudi{x_i}{c_j^{(t-1)}}$\;
  \tcp{Obtain per-cluster outlier threshold}
  $\forall j \in [k], \ \rho_j \gets \tau\text{-th \%-tile} \left( \left\{\eudi{x_i}{c_j^{(t-1)}} \forall i \in [n]: a_i^{(t)} = j \right\}\right)$\;
  \tcp{Update cluster centers only with inliers}
  $\forall j \in [k], \ c_j^{(t)} \gets \arg \min\limits_{c \in \R^d} \sum\limits_{i: a_i^{(t)} = j} \mathbb{I}\left(\eudi{x_i}{c_j^{(t-1)}} \leq \rho_j \right)  \| x_i - c \|^2$
}
\Return{$\{ c_j^{(T)}, j \in [k] \}$}
}
\end{algorithm}

The trimmed $k$-means algorithm for robust clustering of \citet{cuesta1997trimmed} is given in \Cref{alg:trim-kmeans}. Along with the instances to be clustered, and the initial cluster centers, this algorithm also takes as an input a percentile threshold $\tau \in [0, 100)$. The cluster assignment proceeds as in the standard Lloyd's algorithm by assigning each instance to its closest cluster center (line 5, \Cref{alg:trim-kmeans}). After the cluster assignments, for each cluster $j \in [k]$, we find a Euclidean distance threshold $\rho_j$ corresponding to the $\tau$-th percentile of the distances of the assigned instances to the cluster center (line 6, \Cref{alg:trim-kmeans}). Given this per-cluster threshold, the cluster centers are updated only using the assigned instances that are within the $\tau$-th percentile distance threshold, thus ignoring the outliers in the set of assigned instances.

While the limiting soft-max attention or LSMA $\sigma_\infty$ applied to the transposed cluster assignment matrix gives us the necessary weights to compute the updated cluster centers in Lloyd's algorithm (\cref{alg:lloyds}), it does not give us the desired update in the trimmed $k$-means as we need to filter instances before computing the center updates. For this, we turn our attention to the {\em $\alpha$-norm-max} $\nmax_\gamma^\alpha$ defined as~\citep{blondel2020learning, santos2024sparse}

\begin{equation}\label{eq:nmax}
\nmax_\gamma^\alpha(x) = \arg \max_{z: z^i \in [0,1], \sum_i z^i = 1} \inpr{\gamma x}{z} - \| z \|_\alpha,
\end{equation}

where $\gamma$ serves the role of the inverse temperature, and the output is a vector on the probability simplex. As $\alpha \to \infty$, the objective in \eqref{eq:nmax} becomes $(\inpr{\gamma x}{z} - \max_j z^j)$. Thus, the solution to this optimization problem (and thus, the output of the norm-max activation) involves the selection of a threshold $\rho_\gamma$ and setting $z^i \gets 0$ if $\gamma x^i < \rho_\gamma$. For the remaining set of indices $P = \{i : \gamma x^i > \rho_\gamma \}$, $z^i \gets 1 / |P|$. This norm-max activation has the desired property of filtering certain entries based on a threshold (setting their values to zero), and then assigning equal values to all the non-zero entries.

Utilizing this activation in the updates in \Cref{thm:l2-clus eu}

\begin{align}
X^{(t+1)} & \gets X^{(t)}
     + \attned{}(X^{(t)}, \tilC^{(t)}, Q_1, K_1, V_1)
     + \attned{}(X^{(t)}, X^{(t)}, Q_2, K_2, V_2) \nonumber \\
\tilC^{(t+1)} & \gets \tilC^{(t)}
    + \attned\xi (\tilC^{(t)}, X^{(t+1)}, Q_3, K_3, V_3)
    + \attnip{}(\tilC^{(t)}, \tilC^{(t)}, Q_4, K_4, V_4),
\end{align}

where $\xi = \nmax_\gamma^\infty$ denotes the norm-max based attention, where the soft-max in the attention is replaced by the norm-max. The parameters $\{ (Q_\mu, K_\mu, V_\mu), \mu = 1, 2, 4 \}$ for the encoder side cross and self-attention, and the decoder side self-attention remain as in \Cref{thm:l2-clus eu} (that is,  $K_1 = Q_1 = K_2 = Q_2 = I_{d+k}^{:d}$, $V_1 = -V_2 = I_{d+k}^{d+1:}$ and  $Q_4 = K_4 = I_{d+k}^{d+1:}$, $V_4 = - I_{d+k}^{:d}$), and the only modification is the $(Q_3, K_3, V_3)$ in the decoder cross-attention for the cluster center update:

\begin{itemize}
\item First, we use \ed-attention in $\attned\xi (\tilC^{(t)}, X^{(t+1)}, Q_3, K_3, V_3)$  instead of \ip-attention in the cluster center update in \Cref{thm:l2-clus eu}.
\item Second, we utilize the norm-max attention $\nmax_\gamma^\infty$ on the Euclidean distance based attention scores.
\item The value projection matrix $V_3 = I_{d+k}^{:d}$ as in \Cref{thm:l2-clus eu}. However, the query and key projection matrices $Q_3, K_3$ are set as $Q_3 = K_3 = I_{d+k}^{:d} + \delta I_{d+k}^{d+1:}$, where $\delta$ is a large positive constant.
\end{itemize}

Given this parameter setting, the pre-activation attention scores in decoder cross-attention are given by $\DistPairs(K_3 X^{(t+1)}, Q_3 \tilC^{(t)}) \in \R^{n \times k}$ where the $(i,j)$-th entry is given as

\begin{equation}
\DistPairs(K_3 X^{(t+1)}, Q_3 \tilC^{(t)})^{ij} =
\begin{cases}
- \eudi{ x_i}{ c_j^{(t)} }  & \text{if } y_i^{(t+1)} = e_j  \text{ (that is, if } x_i \text{ assigned to } j^{\text{th}} \text{ cluster)} \\
- \eudi{ x_i }{ c_j^{(t)} } - 2 \delta & \text{if } y_i^{(t+1)} \not= e_j.
\end{cases}
\end{equation}

Thus, for the $j$-th cluster (the $j$-th query), all the instances (keys) assigned to it (that is, $y_i^{(t+1)} = e_j$) get a score corresponding to their negative squared Euclidean distance $\eudi{c_j^{(t)}}{x_i}$, while instances (keys) not assigned to this cluster ($y_i^{(t+1)} \not= e_j$) get a really low score $\approx - 2 \delta$ as $\delta$ is a large positive constant.

At this point, the norm-max $\nmax_\gamma^\infty$ applied column-wise to $\DistPairs(K_3 X^{(t+1)},  Q_3 \tilC^{(t)})$ effectively selects a column-wise threshold $\rho_j^\gamma$ that depends on the column and the inverse-temperature $\gamma$, and sets all values less than $\rho_j^\gamma$ to zero, and the per-column non-zero values are given the same positive value such that the column sums to one. That is

\begin{equation}
\nmax_\gamma^\infty \left( \DistPairs(K_3 X^{(t+1)}, Q_3 \tilC^{(t)}) \right)^{ij} =
\begin{cases}
0 & \text{if } \DistPairs(K_3 X^{(t+1)}, Q_3 \tilC^{(t)})^{ij} < \rho_j^\gamma \\
\frac{1}{| \{i : \DistPairs(K_3 X^{(t+1)}, Q_3 \tilC^{(t)})^{ij} \geq \rho_j^\gamma \} |} & \text{otherwise.}
\end{cases}
\end{equation}

For an appropriately selected $\gamma > 0$, the zeros in column $j$ of $\nmax_\gamma^\infty( \DistPairs(K_3 X^{(t+1)}, Q_3 \tilC^{(t)}) )$ corresponds to the points not belonging to cluster $j$, and the outliers (if any) in cluster $j$, while the non-zero values are all equal and correspond to the inliers. Thus, $\nmax_\gamma^\infty( \DistPairs(K_3 X^{(t+1)}, Q_3 \tilC^{(t)}) ) V_3 X$ will give us the centers of each cluster computed only using the inliers in the corresponding cluster, thus giving us the desired behaviour as in \Cref{alg:trim-kmeans}. This cross-attention update, combined with the self-attention update (as in \Cref{thm:l2-clus eu}) gives the desired cluster update in the transformer architecture.

\clearpage

\section{New Clustering Algorithms} \label{sec:methods:new}

Here we detail new algorithms we find by making modifications to our general transformer architecture (\Cref{fig:gen-arch}).

\subsection{Another robust clustering algorithm} \label{sec:methods:new:robust}
We can apply the attention mechanism in a different way obtain a kind of robust $k$-means, where only datapoints in the cluster of a given center affect the new center (as in Lloyd's algorithm), but among such datapoints, those closer to the center have a larger weight in determining the new cluster center than those farther away.
That is, we can use a kind of outlier reduction in Lloyd's algorithm.

Given as input $X\in\R^{d\times n}$ and $C^{(0)}\in\R^{d\times k}$, this algorithm maintains
\begin{equation}\label{eq rob CXY}
\begin{split}
C^{(t)} & \in\R^{d\times k}
\\ X^{(t)} & =\threemat{\ell(X)}{Y^{(t)}}{\hat Y^{(t)}}\in\R^{(d+2k+2)\times n}\mathrm{\ with\ }
Y^{(t)}  =Y(C^{(t-1)}, X),\; \hat Y^{(t)} = \DistPairs(C^{(t-1)}, X) + \delta Y^{(t)}\mathrm{\ for\ large\ }\delta>0
\\ \tilC^{(t)} & =\threemat{\ell(C^{(t)})}{I_k}{I_k}\in\R^{(d+2k+2)\times n},\mathrm{\ with\ } C^{(t)} = X\sigma_\xi((\hat Y^{(t)})^\top),
\end{split}
\end{equation}
where $\sigma_\xi$ could be $\sigma_\gamma$, sparse-max~\citep{martins2016softmax}, $\sigma_\lin$ or some other map taking $\hat Y$ to $\Delta_n\triangleq\{y\in\R^n\mid y\ge 0, \mathbf{1}_n^\top y = 1\}$.
The attention mechanism will use inner products, so a neural network $\cal N$ using quadratic activations to update $\ell(C^{(t)})$ will be needed as well (see \S \ref{sec:methods:icll:dp}).

Let $\sigma_I()$ denote the identity, so $\sigma_I(x) = x$ for $x\in\R^n$.
Let $\attnip{I}$ denote attention using the identity operator.

Let $\gT_i$ denote a triple of square matrices $Q_i, K_i, V_i$ of appropriate size.
One round will be implemented as:
\begin{equation}\label{eq rob update}
\begin{split}
X^{(t+1/2)} & \gets X^{(t)}
     + \attnip{}(X^{(t)}, \tilC^{(t)}, \gT_1)
     + \attnip{}{}(X^{(t)}, X^{(t)}, \gT_2) \nonumber \\
X^{(t+1)} & \gets X^{(t+1/2)}
     + \attnip{I}(X^{(t+1/2)}, \tilC^{(t)}, \gT_3) \nonumber \\
\tilC^{(t+1/2)} & \gets \tilC^{(t)}
    + \attnip{\xi}(\tilC^{(t)}, X^{(t+1)}, \gT_4)
    + \attnip{}(\tilC^{(t)}, \tilC^{(t)}, \gT_5) \nonumber \\
\tilC^{(t+1)} & \gets C^{(t+1/2)} + {\cal N}(C^{(t+1/2)})
\end{split}
\end{equation}

such that the output of the $T$-th transformer layer exactly matches the behavior described above. In particular, with $d' = d+2$ (note that $\ell(X) \in \R^{d' \times n}$ and $\ell(C^{(t)}) \in \R^{d' \times k}$), we have
\begin{align*}
& X^{(t+1/2)} - X^{(t)} \\
& =
     \attnip{}(X^{(t)}, \tilC^{(t)}, I_{d'+2k}^{:d'},\perm{d'}{d'+2k}, I_{d'+2k}^{d'+1:d'+k})
     + \attnip{}{}(X^{(t)}, X^{(t)}, I_{d'+2k}^{:d'}, \perm{d'}{d'+2k}, -I_{d'+2k}^{d'+1:}) \nonumber
\\ & = I_{d'+2k}^{d'+1:d'+k}\threemat{\ell(C^{(t)})}{I_k}{I_k}\sigma_\infty\left(\threemat{\ell(C^{(t)})}{I_k}{I_k}^\top I_{d'+2k}^{:d'}\perm{d'}{d'+2k} \threemat{\ell(X)}{Y^{(t)}}{\hat Y^{(t)}}\right)
        - I_{d'+2k}^{d'+1:}\threemat{\ell(X)}{Y^{(t)}}{\hat Y^{(t)}}\sigma_\infty\left(\threemat{\ell(X)}{Y^{(t)}}{\hat Y^{(t)}}^\top I_{d'+2k}^{:d'}\perm{d'}{d'+2k} \threemat{\ell(X)}{Y^{(t)}}{\hat Y^{(t)}}\right)
\\ & =  \threemat{0_{d'\times k}}{I_k}{0_{k\times k}}\sigma_\infty\left(\qa{\ell(C^{(t)})}{0_{2k\times k}}^\top  \threemat{\perm{d'}{d'+2k}\ell(X)}{Y^{(t)}}{\hat Y^{(t)}}\right)
                 - \threemat{0_{d'\times n}}{Y^{(t)}}{\hat Y^{(t)}}\sigma_\infty\left(\qa{\ell(X)}{0_{2k\times n}}^\top \threemat{\perm{d'}{d'+2k}\ell(X)}{Y^{(t)}}{\hat Y^{(t)}}\right)
\\ & =  \threemat{0_{d'\times k}}{I_k}{0_{k\times k}}\sigma_\infty(\ell(C^{(t)})^\top \perm{d'}{d'+2k} \ell(X)))
                - \threemat{0_{d'\times n}}{Y^{(t)}}{\hat Y^{(t)}}\sigma_\infty(\ell(X)^\top \perm{d'}{d'+2k} \ell(X))
\\ & =  \threemat{0_{d'\times k}}{I_k}{0_{k\times k}} Y(C^{(t)}, X)
                - \threemat{0_{d'\times n}}{Y^{(t)}}{\hat Y^{(t)}} I_n  \quad \text{using \Cref{lem perm}, and } Y(X, X)=I_n
\\ & = \threemat{0_{d'\times n}}{Y(C^{(t)}, X) - Y^{(t)}}{-\hat Y^{(t)}},
\end{align*}
so that $X^{(t+1/2)} = X^{(t)} + \threemat{0_{d'\times n}}{Y(C^{(t)}, X) - Y^{(t)}}{-\hat Y^{(t)}} = \threemat{\ell(X)}{Y(C^{(t)}, X)}{0_{k\times n}}$.

We now compute $X^{(t+1)}$ as:
\begin{align*}
X^{(t+1)} - X^{(t+1/2)}
       & = \attnip{I}\left( X^{(t+1/2)}, \tilC^{(t)}, I_{d'+2k}^{:d'+k},\smallmat{\perm{d'}{d'}}{0_{d'\times 2k}}{0_{2k\times d'}}{\delta I_{2k}}, I_{d'+2k}^{d'+k+1:} \right)  \nonumber
    \\ & = I_{d'+2k}^{d'+k+1:}\threemat{\ell(C^{(t)})}{I_k}{I_k}\sigma_I\left( \threemat{\ell(C^{(t)})}{I_k}{I_k}^\top I_{d'+2k}^{:d'+k}\smallmat{\perm{d'}{d'}}{0_{d'\times 2k}}{0_{2k\times d'}}{\delta I_{2k}} \threemat{\ell(X)}{Y(C^{(t)}, X)}{0_{k\times n}} \right) \nonumber
    \\ & = \threemat{0_{k\times k}}{0_{k\times k}}{I_k}\sigma_I\left(
        \threemat{\ell(C^{(t)})}{\delta I_k}{0_{k\times k}}^\top
        \threemat{\perm{d'}{d'}\ell(X)}{Y(C^{(t)}, X)}{0_{k\times n}} \right) \nonumber
    \\ & = \threemat{0_{k\times k}}{0_{k\times k}}{I_k}\sigma_I\left(
        \ell(C^{(t)})^\top \perm{d'}{d'}\ell(X) + \delta Y(C^{(t)}, X)
         \right) \nonumber
    \\ & = \threemat{0_{k\times k}}{0_{k\times k}}{\DistPairs(C^{(t)}, X) + \delta Y(C^{(t)}, X)}, \nonumber\end{align*}
so that
$X^{(t+1)} = X^{(t+1/2)} + \threemat{0_{k\times k}}{0_{k\times k}}{\DistPairs(C^{(t)}, X) + \delta Y(C^{(t)}, X)}$, yielding $X^{(t+1)}$ as described in \cref{eq rob CXY}.

To compute $\tilC^{(t+1)}$, we first compute $\tilC^{(t+1/2)}$ as in \cref{eq rob update},
\begin{align*}
& \tilC^{(t+1/2)} - \tilC^{(t)} \\
       & = \attnip{\xi}(\tilC^{(t)}, X^{(t+1)}, I_{d'+2k}^{d'+k+1:}, I_{d'+2k}^{d'+k+1:}, I_{d'+2k}^{:d} )
         + \attnip{}(\tilC^{(t)}, \tilC^{(t)}, I_{d'+2k}^{d'+k+1:}, I_{d'+2k}^{d'+k+1:}, -I_{d'+2k}^{:d+1})
    \\ & = I_{d'+2k}^{:d} X^{(t+1)} \sigma_\xi\left(
                (X^{(t+1)})^\top I_{d'+2k}^{d'+k+1:} I_{d'+2k}^{d'+k+1:} \tilC^{(t)}
                \right)
         - I_{d'+2k}^{:d+1} \tilC^{(t+1)} \sigma\left(
                (\tilC^{(t)})^\top I_{d'+2k}^{d'+k+1:}I_{d'+2k}^{d'+k+1:} \tilC^{(t)}
                \right)
    \\ & = \qa{X}{0_{(2k+1)\times n}} \sigma_\xi((\hat Y^{(t+1)})^\top)
         - \threemat{C^{(t)}}{\tilC_{d+1,*}}{0_{(2k+1)\times k}} \sigma(I_k)
    \\ & = \threemat{X \sigma_\xi((\hat Y^{(t+1)})^\top) - C^{(t)}}{-\tilC_{d+1,*}}{0_{(2k+1)\times k}}.
\end{align*}
So we have
\[
\tilC^{(t+1/2)}
    = \tilC^{(t)} + \threemat{X \sigma_\xi((\hat Y^{(t+1)})^\top) - C^{(t)}}{-\tilC_{d+1,*}}{0_{(2k+1)\times k}}
    = \qa{X \sigma_\xi((\hat Y^{(t+1)})^\top)}{0_{(2k+2)\times k}},
\]
as in \eqref{eq rob CXY}.
It remains to compute the squared norms of the new cluster centers, which can be done just as for \Cref{sec:methods:icll:dp}.

\begin{algorithm}[tb]
\SetAlgoCaptionSeparator{}
\caption{Robust $k$-means}
\label{alg:rob-kmeans}
\DontPrintSemicolon
{\footnotesize
Input: Instances $S = \{x_i \in \R^d, i \in [n] \}$\;
Input: Initial centers $\{c_j^{(0)} \in S, j \in [k] \}$\;
\For{$t = 1, \ldots, T$} {
  \tcp{Assign points to clusters}
  $\forall i \in [n], \ a_{i}^{(t)} \gets \arg \min_{j \in [k]} \eudi{x_i}{c_j^{(t-1)}}$\;
  \For{$j = 1, \ldots, k$}{
    \tcp{Weigh points assigned to cluster}
    $\forall i \in [n]: a_i^{(t)} = j, w_{ij} \gets -\eudi{x_i}{c_j^{(t)}}$ \;
    \tcp{Normalize weights}
    $\{ \hat{w}_{ij}, i \in [n], a_i^{(t)} = j \} \gets \text{NAct}(\{w_{ij}, i \in [n], a_i^{(t)} = j\})$\;
    \tcp{Compute weighted center}
    $c_j^{(t+1)} \gets \sum_{i: a_i^{(t)} = j} \hat{w}_{ij} x_i$
  }
}
\Return{$\{ c_j^{(T)}, j \in [k] \}$}
}
\end{algorithm}

This transformer architecture can be also be expressed as an algorithm as shown in \Cref{alg:rob-kmeans}. After the standard cluster assignment step (line 5), each point $i$ assigned to a cluster $j$ is weighed based on its distance to the current center $c_j^{(t)}$ (line 7), and these weights are normalized in line 8 using some form of a normalizing activation (denoted by ``NAct'') such as soft-max or sparse-max~\citep{martins2016softmax}. Then the new cluster center is $c_j^{(t+1)}$ is obtained by computing a weighted mean of the points assigned to the cluster.

This algorithm is different from the standard Lloyd's algorithm in that the cluster center is not the exact mean of the assigned points, but rather a weighted mean, with points farther away from the previous center being weighed down. This reduces the effect of outliers assigned to the cluster, thereby making the algorithm more robust than standard Lloyd's (\cref{alg:lloyds}). This is different from the trimmed $k$-means algorithm (\Cref{alg:trim-kmeans}) since trimmed $k$-means uses a threshold to remove the outliers completely before computing the unweighted mean of the remaining assigned points. This makes the algorithm susceptible to the choice of the threshold, and can be significantly influenced by outliers if the threshold is not appropriately selected (and thus, the outliers are not appropriately removed from consideration). In contrast, this new robust algorithm (\Cref{alg:rob-kmeans}) weighs points assigned to the cluster, and thus will weigh an outlier low as it would be far away from its closest cluster center.

While the soft $k$-means (\Cref{alg:soft-kmeans}) also makes use of weighted means, there is a significant difference between \Cref{alg:rob-kmeans} and soft $k$-means --- while soft $k$-means computes a weighted mean of all the points $\{x_i, i \in [n] \}$, \Cref{alg:rob-kmeans} computes a weighted mean of points assigned to the cluster $\{x_i, i \in [n], a_i^{(t)} = j \}$, thereby retaining the discrete nature of the standard $k$-means clustering problem.

In our transformer architecture, the desired behavior is obtained by weighing all points for a given cluster $j \in [n]$ based on the negative squared Euclidean distance and a large additive offset $\delta$ for points assigned to this cluster. For a large enough $\delta$, the points not assigned to this cluster will have zero normalized weights, while the normalized weights for the points assigned to the cluster will solely depend on their negative squared Euclidean distance (with no depedence on $\delta$). This is because normalized activations such as soft-max and sparse-max are translation invariant --- that is, $\text{NAct}(\{p_i, i \in [m] \}) = \text{NAct}(\{p_i + \delta, i \in [m] \})$ for some $m$ unnormalized weights $\{p_i, i \in [m] \}$ and an additive translation/offset $\delta$.

\paragraph{Empirical evaluation.}
We present some preliminary results here. However, we qualify that evaluation of robust clustering typically considers datasets with outliers. In real datasets, the outliers (just like the clusters) are unknown. Thus, various evaluations rely on synthetic datasets. We present some preliminary results in \cref{tab:rob-res}, identifying situations where they achieve better $k$-means objective than Lloyd's. Across all settings, \Cref{alg:trim-kmeans} outperforms Lloyd's ({\em italics} in \cref{tab:rob-res}) on 4/10 datasets, while the new \Cref{alg:rob-kmeans} does so for 6/10 datasets, demonstrating competitive behavior.

\begin{table}[h]
\caption{We compare \Cref{alg:trim-kmeans} (trimmed $k$-means) vs \Cref{alg:rob-kmeans} (robust $k$-means) on 10 OpenML datasets~\citep{OpenML2025} with $k=10$ for varying degrees of robustness ($\tau$ in \Cref{alg:trim-kmeans} and the nonlinear activation inverse temperature $\gamma$ in \Cref{alg:rob-kmeans}). We report the average log $k$-means objective aggregated across 10 trials ({\em lower is better}). Cases where either of the algorithms outperform Lloyd's algorithm in terms of the $k$-means objective are \em{emphasized}.}
\label{tab:rob-res}
\centering
{\small \begin{tabular}{lcccccc}
\toprule
Dataset & Initial obj. & Lloyd's & Alg~\ref{alg:trim-kmeans} $\tau=0.95$ & --- $\tau=0.99$ & Alg~\ref{alg:rob-kmeans} $\gamma=0.01$ & --- $\gamma=0.001$ \\
\midrule
GermanCredit  & 9.1393 & 8.6342 & 8.6345 & {\em 8.6339} & 8.6364 & {\em 8.6333 } \\
twonorm  &   8.1141  &   7.6126  &   7.6130  &   7.6127  &   7.6129  &   7.6127  \\
pc3  &   6.2494  &   5.6776  &   5.7997  &   5.6823  &   5.6802  &   5.6777  \\
ada-agnostic  &   9.6838  &   9.1439  &   9.1448  &   9.1440  &   9.1882  &   {\em 9.1428}  \\
ringnorm  &   7.7387  &   7.3020  &   7.3023  &   {\em 7.3019}  &   7.3021  &   {\em 7.3020}  \\
pc4  &   6.3693  &   5.8479  &   5.8558  &   5.8493  &   {\em 5.8395}  &   {\em 5.8476}  \\
pendigits  &   8.7020  &   8.0354  &   {\em 8.0313}  &   {\em 8.0265}  &   8.1473  &   {\em 8.0279}  \\
sylvine  &   8.4709  &   7.9196  &   7.9268  &   7.9209  &   7.9316  &   7.9249  \\
optdigits  &   10.269  &   9.6075  &   9.6092  &   9.6079  &   9.7229  &   9.6075  \\
mfeat-karhunen  &   8.2904  &   7.6923  &   7.6937  &   {\em 7.6922}  &   {\em 7.6911}  &   7.6924  \\
\bottomrule
\end{tabular}}
\end{table}

\subsection{Randomized medoids clustering} \label{sec:methods:new:kmeds}

\begin{algorithm}[tb]
\SetAlgoCaptionSeparator{}
\caption{Randomized $k$-medoids}
\label{alg:rnd-kmeds}
\DontPrintSemicolon
{\footnotesize
Input: Instances $S = \{x_i \in \R^d, i \in [n] \}$\;
Input: Initial centers $\{c_j^{(0)} \in S, j \in [k] \}$\;
Input: Sampling gap $\delta > 0$\;
\For{$t = 1, \ldots, T$} {
  \tcp{Assign points to clusters}
  $\forall i \in [n], \ a_{i}^{(t)} \gets \arg \min_{j \in [k]} \eudi{x_i}{c_j^{(t-1)}}$\;
  \tcp{Sample cluster medoid}
  $\forall j \in [k], \ c_j^{(t)} \sim \textsf{Cat}(\{w_{ij}, i \in [n]\})$\;
  \ \ \ where
  $w_{ij} = \frac{
    \exp\left(- \eudi{x_i}{c_j^{(t-1)}} - 2\delta\mathbb{I}(a_i^{(t)} \not= j) \right)
  }{
    \sum_{i' = 1}^n
    \exp\left(- \eudi{x_{i'}}{c_j^{(t-1)}} - 2\delta \mathbb{I}(a_{i'}^{(t)} \not= j) \right)
  }$
}
\Return{$\{ c_j^{(T)}, j \in [k] \}$}
}
\end{algorithm}

A closely related clustering problem is that of Euclidean $k$-medoids, defined as:
\begin{equation} \label{eq:kmeds-v2}
\min_{c_1, \ldots, c_k \in X} \sum_{i = 1}^n \min_{j = 1, \ldots, k} \eudi{ x_i }{ c_j },
\end{equation}
where it differs from the Euclidean $k$-means clustering problem in \cref{eq:kmeans-dc} in that it requires the cluster centers $c_j, j = 1, \ldots, k$ to be actual points in the set $X$.
A critical difference between the Lloyd's style alternating optimization and other existing $k$-medoids algorithms (such as PAM~\citep{kaufman1990partitioning}) is that, while the alternating algorithm selects candidates for a cluster medoid update from within the cluster, PAM allows the selection of candidates for cluster medoid update from outside the cluster. We will instantiate this behaviour in our transformer architecture as follows:

Continuing with the transformer updates as considered in \Cref{thm:l2-clus eu}
\begin{align}
X^{(t+1)} & \gets X^{(t)}
     + \attned{}(X^{(t)}, \tilC^{(t)}, Q_1, K_1, V_1)
     + \attned{}(X^{(t)}, X^{(t)}, Q_2, K_2, V_2) \nonumber \\
\tilC^{(t+1)} & \gets \tilC^{(t)}
    + \attned\xi (\tilC^{(t)}, X^{(t+1)}, Q_3, K_3, V_3)
    + \attnip{}(\tilC^{(t)}, \tilC^{(t)}, Q_4, K_4, V_4),
\end{align}

where we will discuss the cross-attention activation $\xi$ in the sequel, and the parameters $\{ (Q_\mu, K_\mu, V_\mu), \mu = 1, 2, 4 \}$ for the encoder side cross and self-attention, and the decoder side self-attention remain as in \Cref{thm:l2-clus eu}  (that is,  $K_1 = Q_1 = K_2 = Q_2 = I_{d+k}^{:d}$, $V_1 = -V_2 = I_{d+k}^{d+1:}$ and  $Q_4 = K_4 = I_{d+k}^{d+1:}$, $V_4 = - I_{d+k}^{:d}$), and the $(Q_3, K_3, V_3)$ parameters in the decoder cross-attention for the cluster center update are set as in \Cref{sec:methods:robust} (that is, $V_3 = I_{d+k}^{:d}$, $Q_3 = K_3 = I_{d+k}^{:d} + \delta I_{d+k}^{d+1:}$, where $\delta$ is a positive constant).

Given this parameter setting, the pre-activation attention scores in decoder cross-attention are given by $\DistPairs(K_3 X^{(t+1)}, Q_3 \tilC^{(t)}) \in \R^{n \times k}$ where the $(i,j)$-th entry is given as

\begin{equation}
\DistPairs(K_3 X^{(t+1)}, Q_3 \tilC^{(t)})^{ij} =
\begin{cases}
- \eudi{ x_i}{ c_j^{(t)} }  & \text{if } y_i^{(t+1)} = e_j  \text{ (that is, if } x_i \text{ assigned to } j^{\text{th}} \text{ cluster)} \\
- \eudi{ x_i }{ c_j^{(t)} } - 2 \delta & \text{if } y_i^{(t+1)} \not= e_j.
\end{cases}
\end{equation}

Thus, for the $j$-th cluster (the $j$-th query), all the instances (keys) assigned to it (that is, $y_i^{(t+1)} = e_j$) get a score corresponding to their negative squared Euclidean distance $\eudi{c_j^{(t)}}{x_i}$, while instances (keys) not assigned to this cluster ($y_i^{(t+1)} \not= e_j$) get their scores (the negative squared Euclidean distances) shifted by $2\delta$. This is very similar to the pre-activation attention score matrix used in \Cref{sec:methods:robust} for the trimmed $k$-means algorithm; the main difference is that we do not require $\delta$ to be a large positive value here. Instead, the positive constant $\delta \geq 0$ serves as a user-specified parameter that allows us to create a separation between points assigned to the cluster, and points assigned to other clusters.

If we treat these pre-activation scores $\DistPairs(K_3 X^{(t+1)}, Q_3 \tilC^{(t)})^{ij}$ as the logarithm of the unnormalized probability of sampling point $i$ for cluster $j$, we can utilize the Gumbel-soft-max activation~\citep{jang2017categorical} to sample a point $i \in [n]$ from this categorical distribution for each of the cluster $j \in [k]$. This is performed as follows for any cluster $j \in [k]$ given a Gumbel-soft-max temperature parameter $\lambda > 0$:

\begin{itemize}
\item Obtain $n$ samples $g_{ij} \sim \text{Gumbel}(0, 1)$ for $i \in [n]$.
\item Compute the $i$-th entry of the $n$-dimensional vector $p_j \in \R^n$ as
\begin{equation}
p_j^i = \frac{\exp((\DistPairs(K_3 X^{(t+1)}, Q_3 \tilC^{(t)})^{ij} + g_{ij})/\lambda)}{\sum_{i' \in [n]}\exp((\DistPairs(K_3 X^{(t+1)}, Q_3 \tilC^{(t)})^{i'j} + g_{i'j})/\lambda)}
\end{equation}
for each $i \in [n]$, giving us a $n$ dimensional vector $p_j$ from the $(n-1)$-dimensional simplex.
\end{itemize}

As the temperature $\lambda \to 0$, $p_j$ becomes a one-hot vector, and $X p_j$ serves as a sample from the categorical distribution defined in \Cref{alg:rnd-kmeds}, line 6-7 for the $j$-th cluster.

Thus, with $\xi = \text{GSM}^\lambda$ as the Gumbel-soft-max activation (with the temperature $\lambda \to 0$), applied column-wise to the $\DistPairs(K_3 X^{(t+1)}, Q_3 \tilC^{(t)})$ matrix, we get the Gumbel-soft-max attention or GSMA. Each column in the resulting matrix $\text{GSM}^0(\DistPairs(K_3 X^{(t+1)}, Q_3 \tilC^{(t)}))$ corresponds to a sample (in the form of a one-hot $n$-dimensional vector) from the per-cluster categorical distribution described in \Cref{alg:rnd-kmeds}, lines 6-7, with the positive parameter $\delta$ controlling how much we are allowed to sample from outside of the cluster. Thus, the cross-attention update for the cluster medoids is given as follows:
\begin{align*}
\attned\xi (\tilC^{(t)}, X^{(t+1)}, Q_3, K_3, V_3)
& =
V_3 X^{(t+1)} \text{GSM}^0(\DistPairs(K_3 X^{(t+1)}, Q_3 \tilC^{(t)}))
\\
& =
\qa{X}{0_{k \times n}}
\text{GSM}^0(\DistPairs(K_3 X^{(t+1)}, Q_3 \tilC^{(t)})),
= \qa{X}{0_{k \times n}} [p_1; p_2; \ldots; p_k]
= \qa{C^{(t+1)}}{0_{k \times k}},
\end{align*}
thus each updated cluster mediod candidate is an actual point,  sampled as per the specified categorical distribution. The self-attention update along with the residual connection handles the resetting of the cluster medoid candidates as discussed previously for other clustering algorithms.
This transformer architecture is thus equivalent to a novel randomized $k$-medoids algorithm presented in \Cref{alg:rnd-kmeds}.

Note that, if it is possible to modify the medoid sampling distribution in our Gumbel-softmax in the forward pass (i.e., as in \citet{tiwari2020banditpam}), we obtain a competitive $k$-medoids algorithm instantiated within our transformer architecture. However, we leave this investigation for future work.

\paragraph{Empirical evaluation.}
Here we evaluate \Cref{alg:rnd-kmeds}, varying values of $\delta \in \lbrace 0, 0.01, 1, 100, 10^4, 10^8, 10^{15} \rbrace$, and present the results in \Cref{tab:med-res}. The best objectives ({\em emphasized} in \Cref{tab:med-res}) are found by values of $\delta>0$, thereby lowering the selection probability from outside the current cluster, but not too large $\delta$ which enforces within cluster sampling. The results highlight that it is critical to balance (i)~the ability to sample from outside the current cluster, with (ii)~exploiting strong centroid candidates within the current cluster. This tradeoff can be tuned with $\delta$ in \Cref{alg:rnd-kmeds}, which is obtained via architectural modifications to the base $k$-means transformer.

\begin{table}[h]
\caption{We evaluate \Cref{alg:rnd-kmeds} on 10 OpenML~\citep{OpenML2025} datasets with $k=10$ for varying values of the sampling gap hyperparameter $\delta > 0$. We report the average log $k$-medoids objective aggregated across 10 trials ({\em lower is better}). The best objectives for each dataset are \em{emphasized} below.}
\label{tab:med-res}
\centering
{\small \begin{tabular}{lcccccccc}
\toprule
Dataset  &   Initial obj  &   $\delta=0$  &   $\delta=0.01$  &   $\delta=1$  &   $\delta=10^2$  &   $\delta=10^4$  &   $\delta=10^8$  &   $\delta=10^{15}$  \\
\midrule
GermanCredit  &   9.1393  &   9.1021  &\em{9.0945}&   9.0993  &   9.0950  &   9.1029  &   9.1027  &   9.1147 \\
twonorm       &   8.1141  &   7.9807  &   7.9807  &\em{7.9741}&   7.9910  &   7.9902  &   8.0070  &   8.0215 \\
pc3           &   6.2322  &   6.0533  &\em{6.0201}&   6.0489  &   6.0278  &   6.0264  &   6.0438  &   6.0217 \\
ada-agnostic  &   9.7136  &   9.6223  &   9.6057  &\em{9.6023}&   9.6128  &   9.6313  &   9.6317  &   9.6341 \\
ringnorm      &   7.7808  &\em{7.6342}&   7.6393  &   7.6428  &   7.6603  &   7.6643  &   7.6691  &   7.6691 \\
pc4           &   6.4307  &   6.2480  &   6.2309  &   6.2239  &\em{6.2102}&   6.2113  &   6.2312  &   6.2412 \\
pendigits     &   8.7588  &   8.5852  &   8.5812  &   8.5747  &   8.4872  &   8.4844  &\em{8.4672}&   8.4799 \\
sylvine       &   8.4547  &   8.3895  &   8.3857  &   8.3861  &\em{8.3542}&   8.3788  &   8.3804  &   8.3580 \\
optdigits     &   10.268  &   10.172  &   10.173  &   10.170  &   10.172  &\em{10.137}&\em{10.137}&   10.140 \\
mfeat-karhunen&   8.2856  &   8.2412  &   8.2423  &   8.2392  &   8.2374  &   8.2312  &\em{8.2295}&   8.2424 \\
\bottomrule
\end{tabular}}
\end{table}

\end{document}